\colorlet{citeblue}{blue!50!black}
\definecolor{mplblue}{HTML}{1f77b4}
\definecolor{mplorange}{HTML}{ff7f0e}
\definecolor{mplgreen}{HTML}{2ca02c}
\definecolor{mplred}{HTML}{d62728}
\definecolor{mplpurple}{HTML}{9467bd}
\newcommand{\colordot}[2][0.75ex]{\tikz[baseline=-0.66ex]\draw[#2,fill=#2,radius=#1,fill opacity=0.4] (0,0) circle ;}%
\newcommand{\colorcircle}[2][0.75ex]{\tikz[baseline=-0.66ex]\draw[#2,fill=white,radius=#1,very thick,fill opacity=1.0] (0,0) circle ;}
\newcommand{\filledcolordot}[2][0.75ex]{\tikz[baseline=-0.66ex]\draw[#2,fill=#2,radius=#1] (0,0) circle ;}%
\pgfplotsset{compat=1.15}
\algrenewcommand{\algorithmiccomment}[1]{\hfill {\small \textcolor{darkgray}{$\mathsmaller \vartriangleright$ #1}}}  
\algrenewcommand\algorithmicindent{1.5em}   
\algrenewcommand\alglinenumber[1]{\small {\textcolor{darkgray}{#1}}} 
\algorithmic\endcsname{\itemsep\z@}{\itemsep=0.25ex}{}{}
\newcommand\fs@booktabsruled{%
  \def\@fs@cfont{\bfseries\strut}\let\@fs@capt\floatc@ruled
  \def\@fs@pre{\hrule height\heavyrulewidth depth0pt \kern\belowrulesep}%
  \def\@fs@mid{\kern\aboverulesep\hrule height\lightrulewidth\kern\belowrulesep}%
  \def\@fs@post{\kern\aboverulesep\hrule height\heavyrulewidth\relax}%
  \let\@fs@iftopcapt\iftrue
}
\newtheoremstyle{theorem-style}
{\topsep} 
{\topsep} 
{\itshape} 
{} 
{\bfseries} 
{} 
{\newline} 
{} 
\theoremstyle{theorem-style}
\newtheorem{theorem}{Theorem}
\newtheorem{proposition}{Proposition}
\newtheorem{corollary}{Corollary}
\newtheorem{lemma}{Lemma}
\newtheoremstyle{definition-style}
{\topsep} 
{\topsep} 
{} 
{} 
{\bfseries} 
{} 
{\newline} 
{} 
\theoremstyle{definition-style}
\newtheorem{remark}{Remark}
\newif\ifcomments
\ifcomments\newcommand{\comments}[1]{#1}\else\newcommand{\comments}[1]{}\fi
\newcommand{\beginsupplementary}{%

  \renewcommand{\thesection}{S\arabic{section}}
  \renewcommand{\thesubsection}{\thesection.\arabic{subsection}}
  \renewcommand{\theHsection}{S\arabic{section}} 

  \setcounter{table}{0}
  \renewcommand{\thetable}{S\arabic{table}}
  \setcounter{figure}{0}
  \renewcommand{\thefigure}{S\arabic{figure}}
  \setcounter{section}{0}
  \renewcommand{\theequation}{S\arabic{equation}} 
  \renewcommand{\thetheorem}{S\arabic{theorem}} 
  \renewcommand{\thedefinition}{S\arabic{definition}} 
  \renewcommand{\thecorollary}{S\arabic{corollary}} 
  \renewcommand{\theproposition}{S\arabic{proposition}} 
  \renewcommand{\theremark}{S\arabic{remark}} 
  \renewcommand{\thelemma}{S\arabic{lemma}} 
  \renewcommand{\theexample}{S\arabic{example}} 
  \renewcommand{\thealgorithm}{S\arabic{algorithm}}
}
\newcommand{\shat}[1]{\vphantom{#1}\smash[t]{\hat{#1}}}
\newcommand{\N}{\mathbb{N}}
\newcommand{\R}{\mathbb{R}}
\newcommand{\Rd}{\mathbb{R}^d}
\newcommand{\Rnn}{\mathbb{R}^{n\times n}}
\newcommand{\linspan}[1]{\operatorname{span}\{{#1}\}}
\renewcommand{\top}{{\intercal}}
\DeclareSymbolFont{stmry}{U}{stmry}{m}{n}
\DeclareMathSymbol\obar\mathrel{stmry}{"3A}
\DeclareMathSymbol\otimes\mathrel{stmry}{"0F}
\DeclareMathSymbol\ominus\mathrel{stmry}{"17}
\newcommand{\superimpose}[2]{
  {\ooalign{$#1\@firstoftwo#2$\cr\hfil$#1\@secondoftwo#2$\hfil\cr}}}
\newcommand{\Normal}{\mathcal{N}}
\newcommand{\GP}{\mathcal{GP}}
\DeclareMathOperator*{\argmin}{arg\,min}
\newcommand{\bigO}{\mathcal{O}}
\def\vd{{\bm{d}}}
\def\ve{{\bm{e}}}
\def\vf{{\bm{\mathrm{f}}}}
\def\vl{{\bm{l}}}
\def\vr{{\bm{r}}}
\def\vs{{\bm{s}}}
\def\vu{{\bm{\mathrm{u}}}}
\def\vv{{\bm{v}}}
\def\vx{{\bm{x}}}
\def\vy{{\bm{y}}}
\def\vz{{\bm{z}}}
\def\vmu{{\bm{\mu}}}
\def\vzero{{\bm{0}}}
\def\vone{{\bm{1}}}
\def\evy{{y}}
\def\mA{{\bm{A}}}
\def\mB{{\bm{B}}}
\def\mC{{\bm{C}}}
\def\mD{{\bm{D}}}
\def\mH{{\bm{H}}}
\def\mI{{\bm{I}}}
\def\mK{{\bm{K}}}
\def\mL{{\bm{L}}}
\def\mP{{\bm{P}}}
\def\mQ{{\bm{Q}}}
\def\mS{{\bm{S}}}
\def\mU{{\bm{U}}}
\def\mX{{\bm{X}}}
\def\mZ{{\bm{Z}}}
\def\mLambda{{\bm{\Lambda}}}
\def\mSigma{{\bm{\Sigma}}}
\def\mZero{{\bm{0}}}
\DeclareMathAlphabet{\mathsfit}{\encodingdefault}{\sfdefault}{m}{sl}
\SetMathAlphabet{\mathsfit}{bold}{\encodingdefault}{\sfdefault}{bx}{n}
\def\idxiter{i}
\def\txtCG{\mathrm{CG}}
\def\txtPCG{\mathrm{PCG}}
\def\inputspace{\mathcal{X}}
\def\outputspace{\mathcal{Y}}
\def\kernel{k}
\def\kernmat{\mK}
\def\traindata{\mX}
\def\labels{\vy}
\def\action{\vs}
\def\mActions{\mS}
\def\observ{\alpha}
\def\residual{\vr}
\def\searchdir{\vd}
\def\mSearchdir{\mD}
\def\searchdirsqnorm{\eta}
\def\qoi{{\vv_*}}
\def\qoimean{\vv}
\def\qoicov{\mSigma}
\def\kernmatapprox{\mQ}
\def\invapprox{\mC}
\def\cholfac{\mL}
\def\precond{\mP}
\def\inducingpoint{\vz}
\def\inducingpoints{\mZ}
\newcommand{\papertitle}{Posterior and Computational Uncertainty\\ in Gaussian Processes}
\newcommand{\authorinfo}{
	Jonathan Wenger$^{1,2}$ \qquad Geoff Pleiss$^{2}$ \vspace{1em}\\ {\bfseries Marvin Pf\" ortner$^{1}$ \qquad Philipp Hennig$^{1,3}$ \qquad John P. Cunningham$^{2}$}\vspace{0.5em}\\
	$^1$ University of T\" ubingen\\
	$^2$ Columbia University\\
	$^3$ Max Planck Institute for Intelligent Systems, T\" ubingen
}
\title{\papertitle}
\author{\authorinfo}
\begin{document}

\maketitle

\begin{abstract}
	Gaussian processes scale prohibitively with the size of the dataset. In response, many
	approximation methods have been developed, which inevitably introduce approximation error. This
	additional source of uncertainty, due to limited computation, is entirely ignored when using the
	approximate posterior.
	Therefore in practice, GP models are often as much about the approximation method as they are about
	the data. Here, we develop a new class of methods that provides
	consistent estimation of the combined uncertainty arising from \emph{both} the finite
	number of data observed \emph{and} the finite amount of computation expended. The most
	common GP approximations map to an instance in this class, such as
	methods based on the Cholesky factorization, conjugate gradients, and inducing
	points. For any method in this class, we prove (i) convergence of its posterior mean in
	the associated RKHS, (ii) decomposability of its combined posterior covariance into mathematical
	and computational covariances, and (iii) that the combined variance is a tight
	worst-case bound for the squared error between the method's posterior mean and the latent
	function. Finally, we
	empirically demonstrate the consequences of ignoring computational uncertainty and show how
	implicitly modeling it improves generalization performance on benchmark datasets.
\end{abstract}

\section{Introduction}

Gaussian processes (GPs) are an expressive probabilistic model class, but their
prohibitive scaling necessitates approximation \cite{Rasmussen2006}. A range of approximations based on kernel
\cite{Zhu1997,Trecate1999,Rahimi2007,Wilson2015a,Wilson2015,Yang2015,Izmailov2018,Evans2018,Zandieh2020} or precision matrix
\cite{Vecchia1988,Datta2016,Katzfuss2021,Schaefer2021} estimates,
inducing point methods
\cite{Smola2000,Williams2001,Drineas2005,Seeger2003,Snelson2005,QuinoneroCandela2005,Titsias2009,Hensman2013}, and iterative solvers \cite{Gibbs1997,Murray2009,Cutajar2016,Gardner2018,Wang2019,Artemev2021,Wenger2022} have been proposed.
These methods all use an affordable amount of computation to obtain an approximation of the
\emph{mathematical} posterior, which exists theoretically but cannot be accessed given 
limited computational resources. The approximate posterior is then used as a direct replacement of
the mathematical posterior in downstream applications. Doing so, however, completely ignores the
fact that we only expended a limited amount of compute. By analogy to the typical GP operation, where \emph{limited data} induces modeling error captured by \emph{mathematical uncertainty}, our work is motivated by the fact that \emph{limited computation} induces approximation error that
must be captured by \emph{computational uncertainty}.

Here, we introduce IterGP, a class of methods
which return a \emph{combined uncertainty} that is the sum of mathematical and computational
uncertainty. \Cref{fig:main-illustration} illustrates the difference between ignoring computational
uncertainty and explicitly modeling it. We perform GP regression using a
Mat\'ern(\(\frac{3}{2}\)) kernel on a toy dataset and compare SVGP
(\colordot{mplred}) \cite{Hensman2013} to its analog in our framework, IterGP-PI
(\colordot{mplgreen} + \colordot{mplblue}), for a fixed set of inducing points.
The computational shortcuts of inducing point methods can lead to unavoidable biases in their posterior
mean and covariance \cite{Bauer2016,Huggins2019}. As \Cref{fig:main-illustration} illustrates, SVGP may
underestimate the marginal variance where inducing points do not coincide with datapoints. In
contrast, IterGP is guaranteed to overestimate the mathematical uncertainty -- with the difference
precisely given by the computational uncertainty (\colordot{mplgreen}).
Additionally, the computational uncertainty is a worst-case bound (\textcolor{mplgreen}{---}) on the
error of the approximate posterior mean. 
\begin{figure}
	\includegraphics[width=\textwidth]{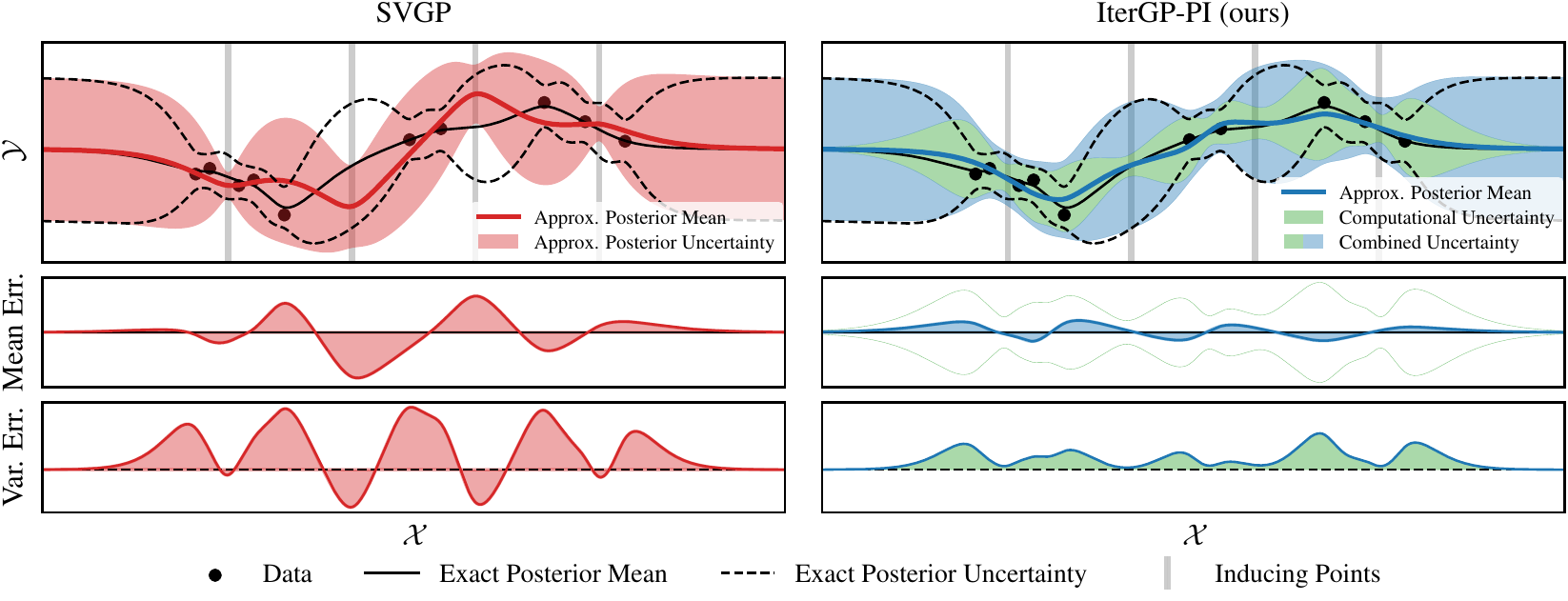}
	\caption{\emph{Modeling computational uncertainty improves GP approximation.}}
	\label{fig:main-illustration}
	\vspace{-0.65em}
\end{figure}

To be clear, this overestimation is desirable: IterGP is not a typical approximation in the sense
that its combined posterior attempts to approximate the mathematical posterior.  Rather, IterGP
recognizes that the mathematical posterior exists, but we do not have access to it, given
computational constraints. Finite compute is as true a source of posterior uncertainty
as finite data. Taking this view seriously, the true goal of GPs in the limited compute regime
should in fact be to track combined uncertainty. This intuition motivates IterGP and is formally a
feature of our results. We show that, if you update your GP via computation, specifically
matrix-vector multiplication, then the combined uncertainty of the IterGP algorithm is precisely
the correct object to capture your belief
(\Cref{thm:worst-case-error}) -- in the same way the mathematical posterior is the correct object given finite data and unlimited computation.

Formally, IterGP is a probabilistic numerical method \cite{Hennig2015a,Cockayne2019a,Oates2019,Hennig2022}. 
It treats the (unknown) representer weights as a latent variable with a prior belief
that, when marginalized out, corresponds to a GP prior conditioned on no data. We then use a
computational primitive (matrix-vector multiplication) that corresponds to tractable Bayesian
updates on the representer weight distribution; that is, conditioning on computations performed on
the data.
The resulting belief can then be marginalized out to obtain a closed form, tractable expression for
the combined -- mathematical plus computational -- uncertainty. This uncertainty quantification can
be done \emph{exactly} in quadratic time and linear space complexity.

Our framework admits three key theoretical properties. First, common GP approximations such as the
partial Cholesky, the method of conjugate gradients and inducing point methods (e.g. 
SVGP) map to a corresponding IterGP instance. Therefore,  these approaches can either be
directly extended or modified to properly account for computational uncertainty. Second, the
approximate posterior mean of any method in our proposed class converges to the mathematical posterior mean in RKHS norm in at most \(n\) steps, where the convergence rate is determined by the choice of method (\Cref{thm:convergence-rkhs}).
Third, the combined uncertainty is a tight worst case bound on the relative error between the
approximate posterior mean and the latent function (\Cref{thm:worst-case-error}). To the best of our
knowledge no existing GP approximation has this last property; an analoguous guarantee only holds
for exact GPs \cite[Sec.~3.4]{Kanagawa2018}.

\paragraph{Contribution}
This work introduces IterGP, which defines a new class of GP approximations that accounts for computational uncertainty arising from limited computation. Some IterGP instances extend classic methods with improved uncertainty quantification (\Cref{tab:gp-approx-methods}). For any method in this class, we prove that the approximate posterior mean converges to the mathematical posterior mean (\Cref{thm:convergence-rkhs}) and that the combined uncertainty is a tight worst-case bound on the relative distance to the latent function one is trying to learn (\Cref{thm:worst-case-error}, \Cref{cor:worst-case-error}). We demonstrate empirically that modeling computational uncertainty can either save computation or improve generalization on a set of regression benchmark datasets. In conclusion, we show that it is possible to exactly quantify the inevitable error of GP approximations at quadratic cost by propagating said error to the posterior in the form of computational uncertainty.
\raggedbottom

\section{Computation-Aware Gaussian Process Inference}
\label{sec:probabilistic-iterative-framework}

We aim to learn a latent function \(h :\inputspace \to
\outputspace\)
from \(\inputspace \subseteq \R^d\) to \(\outputspace
\subseteq \R\) given a training dataset \(\traindata = \begin{pmatrix}\vx_1, \dots, \vx_n \end{pmatrix} \in
\R^{n \times d}\) of \(n\)
inputs
\(\vx_j
\in \R^d\) and corresponding outputs \(\labels = \begin{pmatrix} \evy_1, \dots, \evy_n \end{pmatrix}^{\top} \in \R^n\).

\textbf{Gaussian Processes}
A stochastic process \(f \sim \GP(\mu, k)\) with mean function \(\mu : \Rd \to
\R\) and kernel \(k:\Rd \times \Rd \to \R\) is called a
\emph{Gaussian process} (GP) if
any collection of function values \(\vf = (f(\vx_1), \dots, f(\vx_n))^\top \sim
\Normal(\vmu, \kernmat)\) is jointly Gaussian with \(\vmu_j = \mu(\vx_j)\) and
\(\kernmat_{ij} =k(\vx_i,\vx_j)\). Assuming observation noise \(\labels \mid \vf \sim
\Normal(\vf, \sigma^2 \mI)\), the
posterior distribution at test inputs \(\mX_\diamond\) is given by \(\vf_\diamond \sim
\Normal(\mu_*(\mX_\diamond), k_*(\mX_\diamond, \mX_\diamond))\) where the posterior mean and
covariance functions are given by
\begin{equation}
	\begin{aligned}
		\label{eqn:posterior-mean-covariance}
		\mu_*(\cdot) = \mu(\cdot) + k(\cdot, \traindata) \overbracket[0.14ex]{\shat{\kernmat}^{-1} (\labels - \vmu)}^{\qoi}, \quad \text{and} \quad
		k_*(\cdot, \cdot) = k(\cdot, \cdot) - k(\cdot, \traindata)\shat{\kernmat}^{-1} k(\traindata,
		\cdot)
	\end{aligned}
\end{equation}
where \(\shat{\kernmat} \coloneqq \kernmat + \sigma^2 \mI \in \Rnn\).
Computing the \emph{representer weights} \(\qoi = \shat{\kernmat}^{-1} (\labels - \vmu)\) exactly (as well as the
posterior variance) is prohibitive given our limited computational budget.

\textbf{Learning Representer Weights} Consider the conditional distribution of the latent GP given its
representer weights:
%
\begin{equation}
	p( \vf_\diamond \mid \qoi ) =  \Normal( \mu(\mX_\diamond) + k(\mX_\diamond, \traindata) \qoi , \:\: k_*(\mX_\diamond, \mX_\diamond)
	).
\end{equation}
When $\qoi$ is known exactly, we recover \cref{eqn:posterior-mean-covariance}.
However, if we instead treat $\qoi$ as a random variable with the prior
\( p(\qoi) = \Normal(\qoi; \vzero, \shat{\mK}^{-1}) \),
then the resulting marginal \( \int p(\vf_\diamond \mid \qoi) p(\qoi) \,d \qoi \) recovers the GP
prior $\Normal(\mu(\mX_\diamond), k(\mX_\diamond, \mX_\diamond))$.
Our goal is to update this prior by iteratively applying the tractable computational primitive
(i.e. matrix-vector multiplies).
More specifically,
each iteration conditions the current belief distribution $p(\qoi) = \Normal( \qoi; \qoimean_{\idxiter - 1}, \qoicov_{\idxiter-1} )$
on a one-dimensional projection of the current \emph{residual}
\(\residual_{\idxiter-1} = (\labels - \vmu) - \shat{\kernmat} \qoimean_{\idxiter -1}\),
where the projection is defined by an arbitrary vector \(\action_\idxiter\):
%
\begin{equation}
	\observ_\idxiter \coloneqq \action_\idxiter^\top\residual_{\idxiter-1} =
	\action_\idxiter^\top((\labels - \vmu) - \shat{\kernmat} \qoimean_{\idxiter -1}) =
	\action_\idxiter^\top\shat{\kernmat}(\qoi - \qoimean_{\idxiter-1}).
	\label{eqn:information-op}
\end{equation}
The choice of \emph{actions} \(\action_\idxiter\),
which intuitively weight the approximation error of selected datapoints,
defines different instances of our IterGP framework.
Computing \cref{eqn:information-op} requires a single matrix-vector multiplication.
After computing \( \observ_\idxiter \), we can perform an exact Bayesian update of
$p(\qoi)$ via linear Gaussian identities.
The updated $p(\qoi)$ (conditioned on \( \observ_\idxiter \))
is \(\Normal(\qoi \mid \qoimean_\idxiter, \qoicov_\idxiter)\), with
\begin{align}
	\label{eqn:posterior-representer-weights}
	\qoimean_\idxiter & = \qoimean_{\idxiter-1} + \underbracket[0.14ex]{\qoicov_{\idxiter-1} \shat{\kernmat}\action_{\idxiter}}_{\eqqcolon \searchdir_\idxiter}(\underbracket[0.14ex]{\action_{\idxiter}^\top \shat{\kernmat} \qoicov_{\idxiter-1} \shat{\kernmat}
		\action_{\idxiter}}_{\eqqcolon \searchdirsqnorm_\idxiter})^{-1} \underbracket[0.14ex]{\action_{\idxiter}^\top \shat{\kernmat}(\qoi - \qoimean_{\idxiter-1})}_{= \observ_\idxiter}= \invapprox_\idxiter(\labels - \vmu)                                         \\
	\qoicov_\idxiter  & = \qoicov_{\idxiter-1} - \underbracket[0.14ex]{\qoicov_{\idxiter-1} \shat{\kernmat}\action_{\idxiter}}_{= \searchdir_\idxiter}(\underbracket[0.14ex]{\action_{\idxiter}^\top \shat{\kernmat} \qoicov_{\idxiter-1} \shat{\kernmat}
		\action_{\idxiter}}_{= \searchdirsqnorm_\idxiter})^{-1}\underbracket[0.14ex]{\action_{\idxiter}^\top \shat{\kernmat}\qoicov_{\idxiter-1}}_{=\searchdir_\idxiter^\top}= \shat{\kernmat}^{-1} - \invapprox_\idxiter.
\end{align}
where  \(\invapprox_\idxiter  \coloneqq \sum_{j=1}^\idxiter \frac{1}{\searchdirsqnorm_j}
\searchdir_j \searchdir_j^\top = \mActions_\idxiter (\mActions_\idxiter^\top \shat{\kernmat} \mActions_\idxiter)^{-1} \mActions_\idxiter^\top\) is a
rank-\(\idxiter\) matrix
(see \Cref{prop:problinsolve-batch-posterior} for details).
With each computation, the uncertainty about the representer weights contracts as
\(\invapprox_\idxiter \to \shat{\kernmat}^{-1} = \qoicov_0\) as \(\idxiter \to n\).
After \(n\) iterations, \(\invapprox_n = \shat\kernmat^{-1}\),
meaning we fully recovered the representer weights with zero uncertainty.
The consistent estimate for the representer weights is consequently \(\qoimean_\idxiter =
\invapprox_\idxiter (\labels - \vmu)\).

\textbf{Combining Mathematical and Computational Uncertainty}
We now have a belief \(p(\qoi) = \Normal(\qoi; \qoimean_{\idxiter}, \qoicov_{\idxiter})\) about the representer weights reflecting the expended computation. To account for this computational uncertainty, we treat the
representer weights as a latent variable of the mathematical posterior by reparameterizing 
\(p(\vf_\diamond \mid \labels) = p(\vf_\diamond \mid \qoi)\) and then marginalizing.
The resulting marginal considers all possible representer weights which would have resulted in the
same computational observations and therefore \emph{implicitly} adds the uncertainty coming
from \emph{the computation itself}. Since the posterior mean of a GP is a linear function of the
representer weights, the marginal distribution is given by \(p(\vf_\diamond) = \int p(\vf_\diamond \mid \qoi)
p(\qoi)\,d\qoi = \Normal(\vf_\diamond; \mu_i(\mX_\diamond), k_i(\mX_\diamond, \mX_\diamond))\), where
\begin{equation}
	\label{eqn:marginal_gp_posterior}
	\begin{aligned}
		\mu_i(\cdot)    & = \mu(\cdot) + \kernel(\cdot, \traindata) \qoimean_\idxiter   \\  
		k_i(\cdot, \cdot) & = \underbracket[0.14ex]{\kernel(\cdot, \cdot) - \kernel(\cdot, \traindata)\shat{\kernmat}^{-1}\kernel(\traindata, \cdot)}_{\textup{mathematical uncertainty \colordot{mplblue}}} + \underbracket[0.14ex]{\kernel(\cdot, \traindata)\qoicov_\idxiter\kernel(\traindata, \cdot)}_{\mathclap{\textup{computational uncertainty \colordot{mplgreen}}}} = \underbracket[0.14ex]{\kernel(\cdot, \cdot) - \kernel(\cdot, \traindata)\invapprox_\idxiter\kernel(\traindata, \cdot)}_{\textup{combined uncertainty \colordot{mplpurple}}}
	\end{aligned}
\end{equation}
since \(\qoicov_\idxiter = \shat{\mK}^{-1} - \invapprox_\idxiter\).\footnote{While we derive the combined posterior from a probabilistic numerics perspective, we can alternatively interpret \cref{eqn:marginal_gp_posterior} as the GP prior \(f\) conditioned on linearly transformed data \(\mActions_\idxiter^\top \labels \mid \vf \sim \Normal(\mActions_\idxiter^\top \vf, \sigma^2 \mActions_\idxiter^\top \mActions_\idxiter)\).}
As we perform more computation, the computational uncertainty reduces and we approach the
mathematical uncertainty. We note that, while the \emph{individual} terms are computationally
prohibitive, the \emph{combined} uncertainty can be evaluated cheaply since the approximate
precision matrix \(\invapprox_\idxiter\) is of low rank. \Cref{fig:uncertainty-decomposition} illustrates that
computational uncertainty is large where there are data and we have not targeted computation yet.
Different methods from our proposed class target computation in different parts of the input space.
Where there is no data the prior is a good approximation of the posterior and therefore
computational uncertainty is low.

\begin{figure}
	\includegraphics[width=\textwidth]{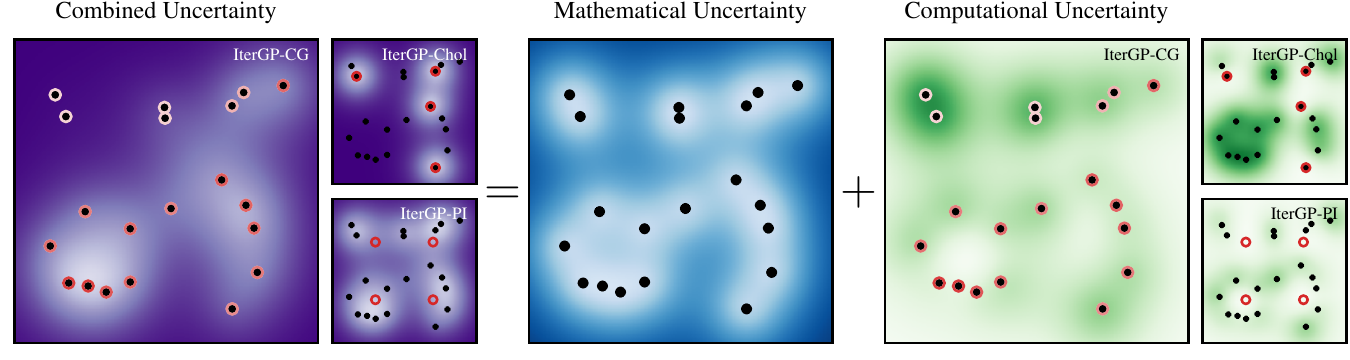}
	\caption{\emph{Decomposition of the combined uncertainty.} The combined uncertainty (\colordot{mplpurple}) output by IterGP
		decomposes into the mathematical uncertainty (\colordot{mplblue}) and computational uncertainty
		(\colordot{mplgreen}). After \(\idxiter=4\) iterations of \Cref{alg:itergp} computational
		uncertainty is small in parts of the input space where there either is no data
		(\filledcolordot[0.5ex]{black}) or computation was ``targeted'' (\colorcircle[0.5ex]{mplred}). Which datapoints
		are targeted in each iteration and to what degree is defined by the magnitude of the action vector
		elements \((\action_\idxiter)_j\). Different instances of IterGP either reduce computational
		uncertainty locally (e.g. IterGP-Chol, IterGP-PI) or globally (e.g. IterGP-CG). After \(n\) iterations the mathematical uncertainty is
		recovered.}
	\label{fig:uncertainty-decomposition}
\end{figure}

\Cref{alg:itergp} computes an estimate of the representer weights \(\qoimean_\idxiter\) and
the rank-\(\idxiter\) precision matrix approximation \(\invapprox_\idxiter\). A specific instance
of IterGP is defined by a sequence of actions \(\action_\idxiter\).  To gain an intuition for how
\Cref{alg:itergp} operates, it helps to interpret it as targeting a given computational
budget towards certain datapoints defined by \(\action_\idxiter\). Near datapoints \(\vx_j\) that
are not targeted, i.e. \((\action_\idxiter)_j = 0\), computational uncertainty remains unchanged.
In fact, datapoints \((\vx_j, \evy_j)\) that are never targeted up to iteration \(\idxiter\) are
not needed to compute \(\GP(\mu_\idxiter, k_\idxiter)\), meaning that \Cref{alg:itergp} is
\emph{inherently online} and we can \emph{observe data sequentially} without having to restart the algorithm
(see \Cref{thm:online-inference}).

\begin{algorithm}[H]
	\caption{A Class of Computation-Aware Iterative Methods for GP Approximation\label{alg:itergp}}
	\small
	\textbf{Input:} prior mean function \(\mu\), prior covariance function / kernel \(\kernel\), training inputs \(\traindata\), labels \(\labels\)\\
\textbf{Output:} (combined) GP posterior \(\GP(\mu_{\idxiter}, k_{\idxiter})\)
\begin{algorithmic}[1]
    \Procedure{\textsc{IterGP}}{$\mu, \kernel, \traindata, \labels$}
    \State \((\mu_0, k_0) \gets (\mu, k)\) \Comment{Initialize mean and covariance function with prior.}
    \State \(\vmu \gets \mu(\traindata)\) \Comment{Prior predictive mean.}
    \State \(\shat{\kernmat} \gets \kernel(\traindata, \traindata) + \sigma^2 \mI\) \Comment{Prior predictive kernel matrix.}
    \While{\textbf{not} \textsc{StoppingCriterion}()} \Comment{Stopping criterion.}
    \State \(\action_\idxiter \gets
    \textsc{Policy}()\) \Comment{Select action via policy (see \Cref{tab:gp-approx-methods} for examples).}
    \State \(\residual_{\idxiter - 1} \gets (\labels - \vmu) -  \shat{\kernmat} \qoimean_{\idxiter-1}\)
    \Comment{Predictive residual.}
    \State \(\observ_\idxiter \gets \action_\idxiter^\top \residual_{\idxiter-1}\)
    \Comment{Observation via information operator.}
    \State \(\searchdir_\idxiter \gets \textcolor{gray}{\qoicov_{\idxiter-1} \shat{\kernmat}
    \action_\idxiter =} \, (\mI - \invapprox_{\idxiter-1}\shat{\kernmat})\action_\idxiter\) \Comment{Search direction.}
    \State \(\searchdirsqnorm_\idxiter \gets \textcolor{gray}{\action_{\idxiter}^\top \shat{\kernmat} \qoicov_{\idxiter-1} \shat{\kernmat} \action_{\idxiter} =}\, \action_{\idxiter}^\top \shat{\kernmat} \searchdir_\idxiter\) \Comment{Normalization constant.}
    \State \(\invapprox_\idxiter \gets \invapprox_{\idxiter-1} + \frac{1}{\searchdirsqnorm_\idxiter}\searchdir_\idxiter \searchdir_\idxiter^\top\)
    \Comment{Precision matrix approximation \(\invapprox_\idxiter \approx \shat{\kernmat}^{-1}\).}
    \State \textcolor{gray}{\(\kernmatapprox_\idxiter \gets \kernmatapprox_{\idxiter-1} + \frac{1}{\searchdirsqnorm_\idxiter} \shat{\kernmat} \searchdir_\idxiter \searchdir_\idxiter^\top\shat{\kernmat}\)
    \Comment{Kernel matrix approximation \(\kernmatapprox_\idxiter \approx \shat{\kernmat}\).}}
    \State \(\qoimean_\idxiter \gets \qoimean_{\idxiter-1} + \frac{\alpha_\idxiter}{\searchdirsqnorm_\idxiter}\searchdir_\idxiter  \)
    \Comment{Representer weights estimate.}
    \State \(\qoicov_\idxiter \gets \qoicov_0 - \invapprox_\idxiter\)
    \Comment{Computational representer weights uncertainty.}
    \EndWhile
    \State \textcolor{gray}{\(p(\qoi) \gets \Normal(\qoi; \qoimean_\idxiter, \qoicov_\idxiter)\) \Comment{Belief about representer weights.}}
    \State \(\mu_{\idxiter}(\cdot) \gets \mu(\cdot) + \kernel(\cdot, \traindata) \qoimean_\idxiter\)\Comment{Approximate posterior mean function.}
    \State \(\kernel_{\idxiter}(\cdot, \cdot) \gets \kernel(\cdot, \cdot)	- \kernel(\cdot, \traindata) 
    \invapprox_\idxiter \kernel(\traindata, \cdot)\) \Comment{Combined uncertainty.}
    \State \Return \(\GP(\mu_{\idxiter}, k_{\idxiter})\)
    \EndProcedure
\end{algorithmic}
\end{algorithm}
\vspace{-1.5\baselineskip}%
{\hfill \scriptsize \textcolor{gray}{Greyed out quantities are \emph{not} needed to compute the combined posterior and are only included for clarity of exposition.} \par}

\subsection{Connection to Other GP Approximation Methods}
\label{sec:connection-gp-approx}

IterGP extends the most commonly used GP approximations to include computational uncertainty, with
at most quadratic cost (see \Cref{tab:gp-approx-methods} for a summary and \Cref{fig:uncertainty-decomposition},
\Cref{fig:recovered-methods-illustration} for illustration).

\begin{table}
	\caption{\emph{Instances of \Cref{alg:itergp}, which map to commonly used
			GP approximations.}}
	\centering
	\small
	\begin{tabular}{lccccc}
    \toprule
    Method      & Actions \(\vs_\idxiter\)                                                & Classic Analog                       & Reference                                                      \\
    \midrule
    IterGP-Chol & \(\ve_\idxiter\)                                                        & (partial) Cholesky                   & \Cref{thm:itergp-cholesky}                                     \\
    IterGP-PBR  & \(\mathrm{ev}_\idxiter(\shat{\kernmat})\)                               & (partial) EVD / SVD                  & \Cref{thm:itergp-svd}                                          \\
    IterGP-CG   & \(\action_\idxiter^{\txtPCG}\) or \(\shat{\mP}^{-1}\residual_\idxiter\) & (preconditioned) CG                  & \Cref{thm:itergp-cg} and \Cref{cor:itergp-cg-gradient-actions} \\
    IterGP-PI   & \(k(\traindata, \vz_\idxiter)\)                                         & \(\approx\) Nyström (SoR, DTC), SVGP & \Cref{sec:connection-gp-approx} and \Cref{thm:itergp-nystroem} \\

    \bottomrule
\end{tabular}
	\label{tab:gp-approx-methods}
\end{table}

\textbf{Cholesky Decomposition}
The (partial) Cholesky decomposition iteratively chooses datapoints or pivots \(\vx_\idxiter\)
based on a given ordering. The resulting Cholesky factor is lower triangular and
increases in rank each iteration,
and a well-chosen ordering achieves fast convergence (cf. \cite[Thm.~2.3]{Schaefer2021a}). If one chooses
standard unit vectors \(\ve_\idxiter\) as actions corresponding to the selected datapoint per iteration, then
\Cref{alg:itergp} recovers the partial Cholesky factorization exactly (\Cref{thm:itergp-cholesky}).

\textbf{Conjugate Gradients}
CG \cite{Hestenes1952} with preconditioning for GP inference has become increasingly popular
\cite{Cunningham2008,Murray2009,Cutajar2016,Gardner2018,Wang2019,Artemev2021,Potapczynski2021,Wenger2022}. \Cref{alg:itergp} recovers preconditioned CG exactly, if we choose
either preconditioned conjugate gradients or residuals as actions (see \Cref{thm:itergp-cg} and
\Cref{cor:itergp-cg-gradient-actions}). In fact, \Cref{alg:itergp} can even construct its own
diagonal-plus-low-rank preconditioner by first running a few iterations with an arbitrary policy
and then using the byproducts of these iterations for the preconditioner. For example, if we run
IterGP-Chol initially, we can construct an incomplete Cholesky preconditioner for subsequent CG
iterations.

\textbf{Inducing Point Methods}
Inducing point methods, such as variants of the Nyström approximation \cite{Williams2001}, i.e. subset of regressors (SoR) \cite{Silverman1985,Smola2000} and deterministic training conditional (DTC) \cite{Csato2002,Seeger2003}, as well as SVGP \cite{Hensman2013} share a posterior mean, which by
\Cref{thm:itergp-nystroem} takes the form
\begin{align}
	\label{eqn:posterior-mean-svgp}
	\mu_{\mathrm{SVGP}}(\cdot) &= q(\cdot, \traindata) \kernmat_{\traindata
	\inducingpoints}(\kernmat_{\inducingpoints \traindata} (q(\traindata, \traindata) + \sigma^2
	\mI)\kernmat_{\traindata\inducingpoints })^{-1} \kernmat_{\inducingpoints \traindata}(\labels - \vmu)
\intertext{where \(\inducingpoints \in \R^{n \times \idxiter}\) is a set of inducing points and \(q(\cdot,
\cdot) = \kernel(\cdot, \inducingpoints) \kernmat_{\inducingpoints \inducingpoints}^{-1} \kernel(\inducingpoints, \cdot)\). These
approximations also have very closely related posterior covariance functions
\cite{QuinoneroCandela2005,Wild2021}. If we choose actions \(\action_\idxiter = k(\traindata, \vz_\idxiter)\),
by \Cref{prop:problinsolve-batch-posterior}, \Cref{alg:itergp} returns a posterior mean given by}
	\label{eqn:posterior-mean-pseudoinput}
	\mu_\idxiter(\cdot) &= \kernel(\cdot, \traindata)\kernmat_{\traindata \inducingpoints}(\underbracket[0.14ex]{\kernmat_{\inducingpoints\traindata }(\kernel(\traindata, \traindata) + \sigma^2\mI)\kernmat_{\traindata \inducingpoints}}_{\mathclap{\textup{Gram matrix } \mActions_\idxiter^\top \shat{\kernmat} \qoicov_0 \shat{\kernmat} \mActions_\idxiter}})^{-1}\kernmat_{\inducingpoints \traindata}(\labels-\vmu).
\end{align}
Choosing such actions, given by kernel functions \(k(\cdot, \inducingpoint_\idxiter)\) centered at inducing points \(\inducingpoint_\idxiter\), reduces computational uncertainty in regions close to inducing points (see
IterGP-PI in \Cref{fig:uncertainty-decomposition}), where closeness is determined by the kernel. Comparing SVGP's and IterGP-PI's posterior mean provides a probabilistic numerical perspective on why even for small KL-divergence between the approximating distribution of SVGP and the true posterior, the mean estimate can be far from the true mean \cite[Prop.~3.1]{Huggins2019}. As outlined in \Cref{sec:probabilistic-iterative-framework}, \cref{eqn:posterior-mean-pseudoinput} is a Bayesian update on the initially unknown representer weights \(\qoi = \shat{\kernmat}^{-1}(\labels - \vmu)\). The Gram matrix in \cref{eqn:posterior-mean-pseudoinput} describes how surprising the computational observations \(\kernmat_{\inducingpoints \traindata}(\labels-\vmu)=\mActions_\idxiter^\top (\labels-\vmu) = \mActions_\idxiter^\top \hat{\kernmat}\qoi\) of the representer weights should be,  given the prior uncertainty \(\qoicov_0\) about them. SVGP uses a similar form for the posterior mean (c.f. \eqref{eqn:posterior-mean-svgp} and \eqref{eqn:posterior-mean-pseudoinput}), but the Gram matrix is ``smaller'' since \(q(\traindata, \traindata) \preceq k(\traindata, \traindata)\). This can be interpreted as inducing point methods being overconfident in their update of the representer weight estimates to achieve linear time complexity. As the inducing points approach the data points the two posterior mean functions \(\mu_{\mathrm{SVGP}}\) and \(\mu_\idxiter\) become closer and are equivalent if the inducing points equal the training data.

\subsection{The Cost of Computational Uncertainty}

Quantifying combined uncertainty has greater cost than linear time GP approximations such as
inducing point methods, due to its use of matrix-vector multiplication as the computational
operation to condition on the data. \Cref{alg:itergp} in its most general form performs three
matrix-vector products per iteration resulting in a quadratic time complexity
\(\bigO(n^2 \idxiter)\) overall for \(\idxiter\) iterations.
In this sense, \Cref{alg:itergp} represents a middle ground between the mathematical
posterior---which incurs a cubic time complexity---%
and \(\bigO(ni^2)\) approximations---which can only estimate their computational error
through potentially loose theoretical bounds which may \citep[e.g.][]{Titsias2009,Hensman2013,Burt2019} or may not be
computable in less than \(\bigO(n^3)\) \cite{Rahimi2007,Schaefer2021a}.
At any point during a run of \Cref{alg:itergp}, computing the predictive mean on
\(n_\diamond\) new data points has cost \(\bigO(n_\diamond n)\), while the marginal predictive
(co-)variance can be evaluated in \(\bigO(n_\diamond n \idxiter)\) since \(\invapprox_\idxiter\) is of
rank \(\idxiter\).
Additionally, using Matheron's rule \cite{Journel1976,Wilson2020,Wilson2020a}, sampling from the approximate
posterior at $n_\diamond$ evaluation points also only requires \( \bigO(n_\diamond n \idxiter)
\) computation
(assuming we can sample from the prior---see \Cref{suppsec:efficient-sampling}).
The objects required to make predictions and draw samples are the vector $\qoimean_\idxiter$ and
low rank matrix \(\invapprox_\idxiter\)
which both require \( \bigO(n \idxiter) \) memory.
Finally, the memory cost of \Cref{alg:itergp} is only linear in $n$,
since matrix multiplication \(\vv \mapsto \shat{\kernmat}\vv\)
can be computed without explicitly forming \(\shat\kernmat\) 
\cite{Charlier2021}.


\subsection{Related Work}
GP inference based on matrix-vector multiplies, particularly CG
\cite{Hestenes1952}, has become popular recently \cite{Cunningham2008,Murray2009,Wilson2015a,Cutajar2016,Gardner2018,Wang2019,Artemev2021,Wenger2022}. Advances in
specialized hardware has boosted their scalability without excessive memory
footprint \cite{Charlier2021,Wang2019}. Such iterative methods typically rely on preconditioning, which
has been shown to significantly improve their performance \cite{Cutajar2016,Gardner2018,Wenger2022}. Our method
generalizes CG in this setting and thus retains the same benefits.
At its core \Cref{alg:itergp} employs a (Bayesian) probabilistic numerical method
\cite{Hennig2015a,Cockayne2019a,Oates2019,Hennig2022}, more specifically a probabilistic linear solver (PLS)
\cite{Hennig2015,Bartels2019,Cockayne2019,Cockayne2020,Wenger2020,Reid2022}
applied to the linear system \(\shat{\kernmat} \qoi = \labels\). The fact that a PLS using
CG actions can recover CG in its posterior mean
was observed previously \cite{Hennig2015,Cockayne2019,Wenger2020}. Here, we extend this result to residual actions
and preconditioning.  Further, we also demonstrate the connection to the Cholesky and singular
value decompositions. For randomized actions, the PLS as part of \Cref{alg:itergp} also
recovers the randomized Kaczmarz method in its posterior mean \cite{Kaczmarz1937,Strohmer2009,Gower2015,Gower2016}.
Employing a PLS for GP approximation by updating beliefs over the kernel and precision
matrix was suggested previously \cite{Bartels2020,Wenger2020}. Our
work differs in that it updates a belief over the representer weights, as opposed to
the kernel function or matrix, considers more general projections than just conjugate residuals,
and, most importantly, provides a theoretically motivated combined posterior which can be computed
exactly.







\section{Theoretical Analysis}

The main goals of our theoretical analysis will be to prove
\begin{enumerate}[itemsep=0pt,topsep=0pt,label=(\alph*)]
	\item \emph{convergence of IterGP's posterior mean} 
	in norm (\Cref{thm:convergence-rkhs}) and pointwise (\Cref{cor:worst-case-error})
\end{enumerate}
and to provide rigorous justification for the combined and computational uncertainty. Importantly, the
\begin{enumerate}[itemsep=0pt,topsep=0pt,label=(\alph*),start=2]
	\item \emph{combined uncertainty is a tight worst-case bound on the relative distance to all potential latent functions} consistent with our
	(computational) observations (\Cref{thm:worst-case-error}).
\end{enumerate}
We will demonstrate a similar interpretation of the computational uncertainty as a bound on the relative error to the mathematical posterior mean (see \cref{eqn:computational-uncertainty-worst-case,eqn:pointwise-convergence-to-posterior-mean}).

\subsection{Estimation of Representer Weights}
At the heart of \Cref{alg:itergp} is
a probabilistic linear solver \cite{Hennig2015,Cockayne2019,Bartels2019,Wenger2020} iteratively updating a belief about the
representer weights. It constructs an expanding subspace \(\linspan{\action_1, \dots, \action_\idxiter} =
\linspan{\searchdir_1, \dots, \searchdir_\idxiter}\) spanned by the actions in which the inverse \(\shat{\kernmat}^{-1}\)
is perfectly identified. Each step \(\searchdir_\idxiter\) expanding this explored subspace is
\(\shat{\kernmat}\)-orthogonal to the previous ones.

\begin{restatable}[Conjugate Direction Method]{proposition}{conjugatedirectionmethod}
	\label{prop:conjugate-direction-method}
	Let the actions \(\vs_\idxiter\) of \Cref{alg:itergp} be linearly independent. Then
	\Cref{alg:itergp} is a conjugate direction method, i.e. it holds that \(\searchdir_i^\top
	\shat{\kernmat} \searchdir_j = 0\) for all \(i \neq j\).
\end{restatable}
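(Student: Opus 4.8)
The plan is to induct on the iteration index, working directly from the recursions $\searchdir_\idxiter = (\mI - \invapprox_{\idxiter-1}\shat{\kernmat})\action_\idxiter$ and $\invapprox_\idxiter = \invapprox_{\idxiter-1} + \searchdirsqnorm_\idxiter^{-1}\searchdir_\idxiter\searchdir_\idxiter^\top$ with $\invapprox_0 = \mZero$ in \Cref{alg:itergp}, so as to avoid any dependence on the closed form of $\invapprox_\idxiter$. Because $\shat{\kernmat}$ is symmetric we have $\searchdir_i^\top\shat{\kernmat}\searchdir_j = \searchdir_j^\top\shat{\kernmat}\searchdir_i$, so it suffices to treat $i<j$. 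I would prove two statements simultaneously: \textbf{(a)} the conjugacy $\searchdir_i^\top\shat{\kernmat}\searchdir_j = 0$ for $i \neq j$, and the auxiliary identity \textbf{(b)} $\searchdir_\idxiter^\top\shat{\kernmat}\searchdir_\idxiter = \searchdirsqnorm_\idxiter$, linking the $\shat{\kernmat}$-norm of a search direction to the normalization constant $\searchdirsqnorm_\idxiter = \action_\idxiter^\top\shat{\kernmat}\searchdir_\idxiter$ computed by the algorithm. Carrying (b) alongside (a) is the crux: it is exactly what forces the cross terms in (a) to cancel, and dropping it would leave the induction open.

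For $i<j$, I would substitute $\searchdir_j = \action_j - \invapprox_{j-1}\shat{\kernmat}\action_j$ together with the sum form of $\invapprox_{j-1}$ to obtain
\[
\searchdir_i^\top\shat{\kernmat}\searchdir_j = \searchdir_i^\top\shat{\kernmat}\action_j - \sum_{l=1}^{j-1}\frac{1}{\searchdirsqnorm_l}\big(\searchdir_i^\top\shat{\kernmat}\searchdir_l\big)\big(\searchdir_l^\top\shat{\kernmat}\action_j\big).
\]
The inductive hypothesis (a) for indices below $j$ annihilates every summand with $l \neq i$, leaving only $l=i$; then (b) at index $i$ gives $\searchdir_i^\top\shat{\kernmat}\searchdir_i = \searchdirsqnorm_i$, so the surviving term equals $\searchdir_i^\top\shat{\kernmat}\action_j$ and cancels the first term, proving (a) at step $j$. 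I would then close the induction by establishing (b) at step $j$: since $\searchdir_j - \action_j = -\invapprox_{j-1}\shat{\kernmat}\action_j$,
\[
\searchdir_j^\top\shat{\kernmat}\searchdir_j - \searchdirsqnorm_j = (\searchdir_j - \action_j)^\top\shat{\kernmat}\searchdir_j = -\action_j^\top\shat{\kernmat}\,\invapprox_{j-1}\shat{\kernmat}\searchdir_j,
\]
and $\invapprox_{j-1}\shat{\kernmat}\searchdir_j = \sum_{l<j}\searchdirsqnorm_l^{-1}\searchdir_l(\searchdir_l^\top\shat{\kernmat}\searchdir_j) = \vzero$ by the conjugacy (a) just proven at step $j$.

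The linear-independence hypothesis enters only to keep the construction well posed: with $\shat{\kernmat} \succ 0$, independence of $\action_1,\dots,\action_j$ forces each $\searchdir_l \neq \vzero$, hence $\searchdirsqnorm_l = \searchdir_l^\top\shat{\kernmat}\searchdir_l > 0$ by (b), so every division above is legitimate. I expect the only genuine obstacle to be bookkeeping: sequencing the claims so that (a) at step $j$ consumes (a) and (b) strictly below $j$, while (b) at step $j$ consumes (a) at step $j$ --- there is no circularity, but the order is delicate. As an alternative that sidesteps the double induction, one may invoke the closed form $\invapprox_{j-1} = \mActions_{j-1}(\mActions_{j-1}^\top\shat{\kernmat}\mActions_{j-1})^{-1}\mActions_{j-1}^\top$ from \Cref{prop:problinsolve-batch-posterior}: then $\mP_{j-1} \coloneqq \shat{\kernmat}\invapprox_{j-1}$ is idempotent, $\shat{\kernmat}\searchdir_j = (\mI - \mP_{j-1})\shat{\kernmat}\action_j$, and since $\searchdir_i \in \linspan{\action_1,\dots,\action_i} \subseteq \operatorname{range}(\mActions_{j-1})$ one verifies $\searchdir_i^\top(\mI - \mP_{j-1}) = \vzero$ in a single line.
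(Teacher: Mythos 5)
Your proof is correct, and it takes a genuinely different route from the paper's. The paper proves \Cref{prop:conjugate-direction-method} as an immediate consequence of \Cref{lem:geometric-properties}, whose induction simultaneously establishes the span equality \(\linspan{\mActions_\idxiter}=\linspan{\mSearchdir_\idxiter}\), the closed batch form \(\invapprox_\idxiter=\mActions_\idxiter(\mActions_\idxiter^\top\shat{\kernmat}\mActions_\idxiter)^{-1}\mActions_\idxiter^\top\), and the interpretation of \(\invapprox_\idxiter\shat{\kernmat}\) and \(\qoicov_\idxiter\shat{\kernmat}\) as complementary \(\shat{\kernmat}\)-orthogonal projections; conjugacy then falls out because \(\searchdir_{\idxiter+1}=\qoicov_\idxiter\shat{\kernmat}\action_{\idxiter+1}\) lies in \(\linspan{\mActions_\idxiter}^{\perp_{\shat{\kernmat}}}\) while each earlier \(\searchdir_j\) lies in \(\linspan{\mActions_\idxiter}\). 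You instead run a direct double induction on the raw recursions, carrying only the auxiliary identity \(\searchdir_\idxiter^\top\shat{\kernmat}\searchdir_\idxiter=\searchdirsqnorm_\idxiter\), which is exactly what makes the single surviving cross term cancel; your cancellation computation and the closing step establishing (b) at step \(j\) from (a) at step \(j\) are both sound, and you correctly identify that linear independence is needed only to guarantee \(\searchdir_l\neq\vzero\) and hence \(\searchdirsqnorm_l>0\). Your argument is leaner and more elementary for this specific proposition, whereas the paper's heavier lemma pays for itself later (the projection and batch-form facts are reused in \Cref{prop:representer-weights-error}, \Cref{thm:worst-case-error}, and elsewhere). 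One caveat on your closing ``one-line'' alternative: the closed form of \(\invapprox_{j-1}\) you would invoke is itself proven in the paper by an induction that uses conjugacy of the earlier directions (the sum \(\sum_l\searchdirsqnorm_l^{-1}\searchdir_l\searchdir_l^\top\) equals \(\mSearchdir_{j-1}(\mSearchdir_{j-1}^\top\shat{\kernmat}\mSearchdir_{j-1})^{-1}\mSearchdir_{j-1}^\top\) precisely because \(\mSearchdir_{j-1}^\top\shat{\kernmat}\mSearchdir_{j-1}\) is diagonal), so taking it as given would be mildly circular unless derived independently, e.g.\ via the recursive Bayesian-update argument of \Cref{prop:problinsolve-batch-posterior}. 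Your main proof does not suffer from this and stands on its own.
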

\begin{proof}
	Without loss of generality assume \(i > j\). Then the result follows directly from
	\Cref{lem:geometric-properties}.
\end{proof}

Geometrically, \Cref{alg:itergp} iteratively projects the representer weights onto the
expanding subspace \(\linspan{\mActions_\idxiter}\) with respect to \(\langle \cdot, \cdot
\rangle_{\shat{\kernmat}}\). We can use this intuition to understand the convergence of the
representer weights estimate. The relative error \(\rho(\idxiter)\) at iteration
\(\idxiter\) is given by how small the ``angle'' between this subspace and the representer
weights vector is.

\begin{restatable}[Relative Error Bound for the Representer Weights]{proposition}{representerweightserror}
	\label{prop:representer-weights-error}
	For any choice of actions a relative error bound \(\rho(\idxiter)\), s.t.
	\(\norm{\qoi - \qoimean_\idxiter}_{\shat{\mK}}\leq \rho(\idxiter)
	\norm{\qoi}_{\shat{\mK}}\) is given by
	\begin{equation}
		\label{eqn:representer-weights-error}
		\rho(\idxiter) = (\bar{\vv}_*^\top
		\underbracket[0.14ex]{(\mI - \invapprox_\idxiter \shat{\kernmat})}_{\clap{\scriptsize projection onto
		\(\linspan{\mActions_\idxiter}^{\perp_{\shat{\kernmat}}}\)}}
		\bar{\vv}_*)^{\frac{1}{2}} \leq \lambda_{\max}(\mI - \invapprox_\idxiter \shat{\kernmat}) \leq
		1
	\end{equation}
	where \(\bar{\vv}_* = \qoi/\norm{\qoi}_{\shat{\kernmat}}\). If
	the actions \(\{\action_\idxiter\}_{\idxiter=1}^n\) are linearly independent, then
	\(\rho(\idxiter) \leq 1 - \delta_{n = \idxiter}\).
\end{restatable}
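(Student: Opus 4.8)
The plan is to recognize the relative error as a Rayleigh quotient of an $\shat{\kernmat}$-orthogonal projection and then read off the bounds from the spectrum of that projection. First I would put the error in closed form. Since $\qoimean_\idxiter = \invapprox_\idxiter(\labels - \vmu)$ and $\qoi = \shat{\kernmat}^{-1}(\labels - \vmu)$, substituting $\labels - \vmu = \shat{\kernmat}\qoi$ gives
\begin{equation}
	\qoi - \qoimean_\idxiter = (\shat{\kernmat}^{-1} - \invapprox_\idxiter)(\labels - \vmu) = (\mI - \invapprox_\idxiter\shat{\kernmat})\qoi,
\end{equation}
so the whole statement reduces to understanding the operator $\Pi_\idxiter \coloneqq \invapprox_\idxiter\shat{\kernmat}$.

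Second, I would establish that $\Pi_\idxiter$ is the $\langle\cdot,\cdot\rangle_{\shat{\kernmat}}$-orthogonal projection onto $\linspan{\mActions_\idxiter} = \linspan{\searchdir_1, \dots, \searchdir_\idxiter}$. Using the batch form $\invapprox_\idxiter = \mActions_\idxiter(\mActions_\idxiter^\top\shat{\kernmat}\mActions_\idxiter)^{-1}\mActions_\idxiter^\top$ from \Cref{prop:problinsolve-batch-posterior}, a one-line computation shows $\Pi_\idxiter^2 = \Pi_\idxiter$ and $\shat{\kernmat}\Pi_\idxiter = \Pi_\idxiter^\top\shat{\kernmat}$, i.e. $\Pi_\idxiter$ is idempotent and self-adjoint in the $\shat{\kernmat}$-inner product; consequently $\mI - \Pi_\idxiter$ is the $\shat{\kernmat}$-orthogonal projection onto $\linspan{\mActions_\idxiter}^{\perp_{\shat{\kernmat}}}$, as annotated in the statement. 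Because $\mI - \Pi_\idxiter$ is both idempotent and $\shat{\kernmat}$-self-adjoint, the squared error collapses,
\begin{equation}
	\norm{\qoi - \qoimean_\idxiter}_{\shat{\kernmat}}^2 = \langle (\mI - \Pi_\idxiter)\qoi, (\mI - \Pi_\idxiter)\qoi\rangle_{\shat{\kernmat}} = \langle \qoi, (\mI - \Pi_\idxiter)\qoi\rangle_{\shat{\kernmat}},
\end{equation}
and dividing by $\norm{\qoi}_{\shat{\kernmat}}^2$ produces exactly the claimed quadratic form in $\bar{\vv}_* = \qoi/\norm{\qoi}_{\shat{\kernmat}}$.

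Third, I would obtain the inequalities from the spectrum of $\mI - \Pi_\idxiter$. Conjugating by $\shat{\kernmat}^{1/2}$ turns $\Pi_\idxiter$ into the ordinary orthogonal projection $\shat{\kernmat}^{1/2}\invapprox_\idxiter\shat{\kernmat}^{1/2}$, so $\mI - \invapprox_\idxiter\shat{\kernmat}$ is similar to a symmetric projection and hence has real eigenvalues in $\{0, 1\}$. The Rayleigh-quotient identity above therefore yields $\rho(\idxiter)^2 \le \lambda_{\max}(\mI - \invapprox_\idxiter\shat{\kernmat})$, and because a projection satisfies $\lambda_{\max}^2 = \lambda_{\max}$ this sharpens to $\rho(\idxiter) \le \lambda_{\max}(\mI - \invapprox_\idxiter\shat{\kernmat}) \le 1$. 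For the last claim, if the actions $\{\action_\idxiter\}_{\idxiter=1}^n$ are linearly independent then $\mActions_n$ is an invertible $n \times n$ matrix, whence $\invapprox_n = \mActions_n(\mActions_n^\top\shat{\kernmat}\mActions_n)^{-1}\mActions_n^\top = \shat{\kernmat}^{-1}$, forcing $\mI - \Pi_n = \mZero$ and $\rho(n) = 0$; for $\idxiter < n$ the generic bound $\rho(\idxiter) \le 1$ applies, which is compactly $\rho(\idxiter) \le 1 - \delta_{n = \idxiter}$.

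The step I expect to be the crux is the asymmetry of $\mI - \invapprox_\idxiter\shat{\kernmat}$ in the Euclidean sense: both the collapse of the squared error into a single quadratic form and the confinement of the eigenvalues to $\{0, 1\}$ rely on recognizing this operator as idempotent and self-adjoint in the $\shat{\kernmat}$-geometry, which is precisely the structure that the batch expression for $\invapprox_\idxiter$ supplies. Everything else reduces to bookkeeping with the linear-Gaussian identities already recorded in \cref{eqn:posterior-representer-weights} and \Cref{prop:problinsolve-batch-posterior}.
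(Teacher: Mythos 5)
Your proof is correct and follows essentially the same route as the paper: identify $\invapprox_\idxiter\shat{\kernmat}$ as the $\shat{\kernmat}$-orthogonal projection onto $\linspan{\mActions_\idxiter}$ via the batch form, collapse the squared error using idempotency and $\shat{\kernmat}$-self-adjointness, and read the spectral bounds off the $\shat{\kernmat}^{1/2}$-conjugated (symmetric) projection. The only cosmetic differences are that the paper invokes Weyl's inequality to get $\lambda_{\max}\leq 1$ where you use that a projection's eigenvalues lie in $\{0,1\}$, and your argument for the final claim ($\invapprox_n=\shat{\kernmat}^{-1}$ hence $\rho(n)=0$) is if anything cleaner than the paper's.
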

\begin{proof}
	See \Cref{suppsec:approx-representer-weights}.
\end{proof}

\Cref{prop:representer-weights-error} guarantees convergence in at most \(n\) iterations, if the actions are chosen to
be linearly independent, since \(\invapprox_\idxiter \shat{\kernmat}\) is a
\(\shat{\kernmat}\)-orthogonal projection onto \(\linspan{\mActions_\idxiter}\) (see
\Cref{lem:geometric-properties}). Therefore, if our finite computational budget is large enough,
we eventually recover the mathematical posterior. This is reflected by the contraction of the
posterior over the representer weights (see \Cref{prop:posterior-contraction}). The bound in
\Cref{prop:representer-weights-error} is tight without further assumptions on the actions, since there exists an adverserial sequence of actions such that the first \((n-1)\) are in
\(\linspan{\qoi}^{\perp_{\shat\kernmat}}\). Then the inverse is perfectly identified in
that subspace, but \(\qoimean_\idxiter = \invapprox_\idxiter \labels =	\invapprox_\idxiter
\shat{\kernmat} \qoi = \vzero\). In practice, one can derive tighter convergence bounds for
specific sequences of actions. For example, for randomized actions the bound depends on their
distribution \cite{Strohmer2009,Gower2015}. If residuals \(\residual_\idxiter\) are chosen as actions, we
obtain
\begin{equation}
	\label{eqn:convergence-rate-cg}
	\textstyle\rho(\idxiter) = 2\left(\frac{\sqrt{\kappa} - 1}{\sqrt{\kappa} + 1}\right)^\idxiter \text{ or }
	\rho(\idxiter) = \left(\frac{\lambda_{n-\idxiter} - \lambda_1}{\lambda_{n-\idxiter} + \lambda_1}\right)
\end{equation}
since then \Cref{alg:itergp}'s estimate of the representer weights equals that of CG
(\Cref{cor:itergp-cg-gradient-actions}). Here \(\kappa\) is the condition number and \(\lambda_j\) the
eigenvalues of either (i) the kernel matrix \(\shat{\kernmat}\) if \(\vs_\idxiter =
\vr_\idxiter\),
\vspace{-0.5ex}
or (ii) the preconditioned kernel matrix
\(\shat{\precond}^{-\frac{1}{2}}\shat{\kernmat}\shat{\precond}^
{-\frac{\top}{2}}\) if \(\vs_\idxiter = \shat{\precond}^{-1}\vr_\idxiter\).

\subsection{Convergence in RKHS Norm of the Posterior Mean}
Having established convergence of the representer weights estimate, we can use this result to prove convergence in norm of IterGP's posterior mean to the mathematical posterior at the same rate.

\begin{restatable}[Convergence in RKHS Norm of the Posterior Mean Approximation]{theorem}{thmconvergencerkhs}
	\label{thm:convergence-rkhs}
	Let \(\mathcal{H}_k\) be the RKHS
	associated with kernel \(k(\cdot, \cdot)\), \(\sigma^2 > 0\) and let \(\mu_* - \mu \in
	\mathcal{H}_k\) be the unique
	solution to the regularized empirical risk minimization problem
	\begin{equation}
		\label{eqn:empirical-risk-minimization-problem}
		\textstyle\argmin_{f \in \mathcal{H}_k} \frac{1}{n} \big(\sum_{j=1}^n (f(\vx_j)
		-
		\evy_j + \mu(\vx_j))^2 + \sigma^2 \norm{f}_{\mathcal{H}_k}^2 \big)
	\end{equation}
	which is equivalent to the mathematical posterior mean up to shift by the prior \(\mu\) \citep[e.g.][Sec.~6.2]{Rasmussen2006}. Then for \(\idxiter \in
	\{0,
	\dots, n\}\) the posterior mean \(\mu_\idxiter(\cdot)\) computed by \Cref{alg:itergp}
	satisfies
	\begin{equation}
		\boxed{
		\norm{\mu_* - \mu_\idxiter}_{\mathcal{H}_k} \leq \rho(\idxiter) c(\sigma^2)
		\norm{\mu_* - \mu_0}_{\mathcal{H}_k}
		}
	\end{equation}
	where \(\mu_0 = \mu\) is the prior mean and the constant \(c(\sigma^2) =
	\sqrt{1 + \frac{\sigma^2}{\lambda_{\min}(\kernmat)}} \to 1\) as \(\sigma^2 \to 0\).
\end{restatable}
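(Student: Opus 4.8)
The plan is to reduce the claimed RKHS-norm bound to the $\shat{\kernmat}$-weighted Euclidean error bound on the representer weights already established in \Cref{prop:representer-weights-error}, paying once for the gap between the Gram matrix $\kernmat$ and the regularized matrix $\shat{\kernmat} = \kernmat + \sigma^2 \mI$, and partially recovering that gap at the end.

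First I would move from functions to coefficient vectors. Both posterior means differ from the prior only through a kernel expansion at the training inputs, namely $\mu_*(\cdot) - \mu(\cdot) = \kernel(\cdot, \traindata)\qoi$ and $\mu_\idxiter(\cdot) - \mu(\cdot) = \kernel(\cdot, \traindata)\qoimean_\idxiter$, so that $\mu_* - \mu_\idxiter = \kernel(\cdot, \traindata)(\qoi - \qoimean_\idxiter)$. By the reproducing property, any element $\kernel(\cdot, \traindata)\vw = \sum_{j} w_j \kernel(\cdot, \vx_j)$ of $\mathcal{H}_k$ has squared norm $\vw^\top \kernmat \vw$, whence
\[
	\norm{\mu_* - \mu_\idxiter}_{\mathcal{H}_k}^2 = (\qoi - \qoimean_\idxiter)^\top \kernmat (\qoi - \qoimean_\idxiter) = \norm{\qoi - \qoimean_\idxiter}_{\kernmat}^2 .
\]
This is the step that turns the functional-analytic quantity into a quadratic form in the representer-weight error and so connects the theorem to \Cref{prop:representer-weights-error}.

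Next, since $\shat{\kernmat} = \kernmat + \sigma^2 \mI \succeq \kernmat$, I would dominate the $\kernmat$-norm by the $\shat{\kernmat}$-norm and apply the relative error bound of \Cref{prop:representer-weights-error} verbatim:
\[
	\norm{\qoi - \qoimean_\idxiter}_{\kernmat}^2 \leq \norm{\qoi - \qoimean_\idxiter}_{\shat{\kernmat}}^2 \leq \rho(\idxiter)^2 \norm{\qoi}_{\shat{\kernmat}}^2 .
\]
It then remains to trade the $\shat{\kernmat}$-norm of $\qoi$ back for its $\kernmat$-norm, since $\norm{\mu_* - \mu_0}_{\mathcal{H}_k}^2 = \norm{\qoi}_{\kernmat}^2$ (using $\mu_0 = \mu$). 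Expanding $\norm{\qoi}_{\shat{\kernmat}}^2 = \norm{\qoi}_{\kernmat}^2 + \sigma^2 \norm{\qoi}_2^2$ and invoking the Rayleigh inequality $\norm{\qoi}_2^2 \leq \lambda_{\min}(\kernmat)^{-1} \norm{\qoi}_{\kernmat}^2$ gives $\norm{\qoi}_{\shat{\kernmat}}^2 \leq \big(1 + \sigma^2/\lambda_{\min}(\kernmat)\big)\norm{\qoi}_{\kernmat}^2 = c(\sigma^2)^2 \norm{\qoi}_{\kernmat}^2$. Chaining the three inequalities and taking square roots produces the boxed bound.

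I do not anticipate a genuine obstacle, as the argument is in essence a controlled change of norms. The only delicate point is the constant $c(\sigma^2)$: the final Rayleigh step requires $\lambda_{\min}(\kernmat) > 0$, i.e. a strictly positive-definite Gram matrix, which is exactly where the bound degrades for ill-conditioned kernels and collapses to the clean rate $\rho(\idxiter)$ only in the noiseless limit $\sigma^2 \to 0$. I would flag that the inequality $\kernmat \preceq \shat{\kernmat}$ is where the regularization is paid for, while the Rayleigh bound recovers part of it, the two together explaining the precise form of $c(\sigma^2)$.
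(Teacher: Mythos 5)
Your proposal is correct and follows essentially the same route as the paper's proof: identify $\norm{\mu_* - \mu_\idxiter}_{\mathcal{H}_k}^2$ with $\norm{\qoi - \qoimean_\idxiter}_{\kernmat}^2$ via the reproducing property, dominate the $\kernmat$-norm by the $\shat{\kernmat}$-norm to invoke \Cref{prop:representer-weights-error}, and then convert $\norm{\qoi}_{\shat{\kernmat}}^2 = \norm{\qoi}_{\kernmat}^2 + \sigma^2\norm{\qoi}_2^2$ back to the $\kernmat$-norm via the Rayleigh bound, which yields exactly $c(\sigma^2)^2 = 1 + \sigma^2/\lambda_{\min}(\kernmat)$. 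The only differences are the order in which the steps are chained and that your statement of $c(\sigma^2)$ is consistent with the theorem (the paper's proof contains a small typo, writing $c(\sigma^2)$ for what should be $c(\sigma^2)^2$).
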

\begin{proof}
	See \Cref{suppsec:convergence-posterior-mean}.
\end{proof}
\Cref{thm:convergence-rkhs} gives a bound on the RKHS-norm error between the posterior mean \(\mu_\idxiter\) of IterGP and the mathematical posterior mean \(\mu_*\). If for the given prior kernel a bound on the RKHS-norm error \(\norm{h -  \mu_*}_{\mathcal{H}_k}\) between the latent function \(h\) and the  mathematical posterior mean \(\mu_*\) is known, \Cref{thm:convergence-rkhs} can be directly used to bound the RKHS-norm error between IterGP's posterior mean and the latent function \(h\) via the triangle inequality:  \(\norm{h - \mu_\idxiter}_{\mathcal{H}_k} \leq \underbracket[0.14ex]{\norm{h -  \mu_*}_{\mathcal{H}_k}}_{\to 0 \ \textup{ as } \ n \to \infty} + \underbracket[0.14ex]{\norm{\mu_* - \mu_\idxiter}_{\mathcal{H}_k}}_{\to 0 \ \textup{ as } \ \idxiter \to n}\).

\subsection{Combined and Computational Uncertainty as Worst Case Errors}

While \Cref{thm:convergence-rkhs} shows convergence in norm for IterGP's posterior mean, the convergence rate \(\rho(\idxiter)\) may contain expressions which cannot be evaluated at runtime with the limited
computation at our disposal. For example, for residual actions evaluating \cref{eqn:convergence-rate-cg} requires computation of the kernel matrix spectrum. However, the combined uncertainty of
IterGP is a tight bound on the \emph{pointwise} relative error to all possible latent
functions which would have resulted in the same computations.

\begin{restatable}[Combined and Computational Uncertainty as Worst Case Errors]{theorem}{thmworstcaseerror}
	\label{thm:worst-case-error}
	Let \(\sigma^2 \geq 0\) and let \(k_{\idxiter}(\cdot, \cdot) = k_*(\cdot, \cdot) +
	k_\idxiter^{\textup{comp}}(\cdot, \cdot)\)
	be the combined uncertainty computed by
	\Cref{alg:itergp}. Then, for any \(\vx \in
	\mathcal{X}\) (assuming \(\vx \notin \traindata\) if \(\sigma^2 > 0\)) we have
	\begin{align}
		    \sup_{g \in \mathcal{H}_{k^\sigma} : \norm{g}_{\mathcal{H}_{k^\sigma}} \leq 1}\hspace{1em} \overbracket[0.14ex]{\underbracket[0.14ex]{g(\vx) \mathbin{\textcolor{gray}{-}} \textcolor{gray}{\mu_*^g(\vx)}}_{\mathclap{\textup{error of math. post. mean \colordot{mplblue}}}} \hspace{0.5em} \textcolor{gray}{+}
			\hspace{0.5em}\underbracket[0.14ex]{\textcolor{gray}{\mu_*^g(\vx)} - \mu_\idxiter^g(\vx)}_{\textup{computational error \colordot{mplgreen}}}}^{\textup{error of approximate posterior mean \colordot{mplpurple}}}  \label{eqn:combined-uncertainty-worst-case} &= \sqrt{k_\idxiter(\vx, \vx) + \sigma^2}, \quad \text{and}                                           \\
		\sup_{g \in \mathcal{H}_{k^\sigma} : \norm{g}_{\mathcal{H}_{k^\sigma}} \leq 1}\underbracket[0.14ex]{\mu_*^g(\vx) - \mu_\idxiter^g(\vx)}_{\textup{computational error \colordot{mplgreen}}} &=\sqrt{k_\idxiter^{\textup{comp}}(\vx, \vx)}  \label{eqn:computational-uncertainty-worst-case}
	\end{align}
	where \(\mu_*^g(\cdot) = k(\cdot, \traindata) \shat{\kernmat}^{-1}g(\traindata)\) is the
	mathematical and \(\mu_\idxiter^g(\cdot) = k(\cdot, \traindata) \invapprox_\idxiter g(\traindata)\)
	IterGP's posterior mean for the latent function \(g \in \mathcal{H}_{k^\sigma}\). If
	\(\sigma^2=0\), then 
	the above also holds for \(\vx \in \traindata\).
\end{restatable}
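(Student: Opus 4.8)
The plan is to recognize each supremand as a bounded linear functional of $g$ on the noise-augmented RKHS $\mathcal{H}_{k^\sigma}$ and to evaluate both suprema by Riesz representation, using that $\sup_{\norm{g}_{\mathcal{H}_{k^\sigma}}\leq 1} L(g) = \norm{r_L}_{\mathcal{H}_{k^\sigma}}$ for the representer $r_L$ of $L$ (which also makes the bound \emph{tight}, since the supremum is attained at $g = r_L/\norm{r_L}$). Here I read $k^\sigma(\vx,\vx') = k(\vx,\vx') + \sigma^2\mathbf{1}[\vx=\vx']$ as the kernel obtained by adding white observation noise, so that its Gram matrix on the training inputs is exactly $k^\sigma(\traindata,\traindata) = \shat{\kernmat}$, while for a test point with $\vx\notin\traindata$ (the hypothesis when $\sigma^2>0$) the cross terms are unaffected, $k^\sigma(\vx,\vx_j)=k(\vx,\vx_j)$, and only the diagonal picks up noise, $k^\sigma(\vx,\vx)=k(\vx,\vx)+\sigma^2$. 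Both functionals $L_\vx(g) = g(\vx) - \mu_\idxiter^g(\vx)$ and $M_\vx(g) = \mu_*^g(\vx) - \mu_\idxiter^g(\vx)$ are finite linear combinations of point evaluations $g\mapsto g(\vx)$ and $g\mapsto g(\vx_j)$, each bounded on $\mathcal{H}_{k^\sigma}$ by the reproducing property, so Riesz applies.

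Next I would read off the representers via $g(\vx)=\langle g, k^\sigma(\cdot,\vx)\rangle_{\mathcal{H}_{k^\sigma}}$. Writing $\valpha = \invapprox_\idxiter k(\traindata,\vx)$ and $\vgamma = \qoicov_\idxiter k(\traindata,\vx)$ with $\qoicov_\idxiter = \shat{\kernmat}^{-1}-\invapprox_\idxiter$, the representers are $r_L = k^\sigma(\cdot,\vx) - \sum_{j=1}^n \valpha_j\, k^\sigma(\cdot,\vx_j)$ and $r_M = \sum_{j=1}^n \vgamma_j\, k^\sigma(\cdot,\vx_j)$. Expanding the norms through the Gram matrix $k^\sigma(\traindata,\traindata)=\shat{\kernmat}$ gives $\norm{r_L}^2 = k(\vx,\vx)+\sigma^2 - 2\valpha^\top k(\traindata,\vx) + \valpha^\top\shat{\kernmat}\valpha$ and $\norm{r_M}^2 = \vgamma^\top\shat{\kernmat}\vgamma$. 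The single algebraic fact that collapses these is the idempotency of the $\shat{\kernmat}$-projection $\invapprox_\idxiter\shat{\kernmat}$, i.e. $\invapprox_\idxiter\shat{\kernmat}\invapprox_\idxiter=\invapprox_\idxiter$ (immediate from $\invapprox_\idxiter = \mActions_\idxiter(\mActions_\idxiter^\top\shat{\kernmat}\mActions_\idxiter)^{-1}\mActions_\idxiter^\top$, cf. \Cref{lem:geometric-properties}), which in turn yields $\qoicov_\idxiter\shat{\kernmat}\qoicov_\idxiter=\qoicov_\idxiter$. Substituting $\valpha^\top\shat{\kernmat}\valpha = k(\vx,\traindata)\invapprox_\idxiter k(\traindata,\vx)$ and $\vgamma^\top\shat{\kernmat}\vgamma = k(\vx,\traindata)\qoicov_\idxiter k(\traindata,\vx)$ then gives $\norm{r_L}^2 = k(\vx,\vx)+\sigma^2 - k(\vx,\traindata)\invapprox_\idxiter k(\traindata,\vx) = k_\idxiter(\vx,\vx)+\sigma^2$ and $\norm{r_M}^2 = k_\idxiter^{\textup{comp}}(\vx,\vx)$, which are exactly the claimed right-hand sides after taking square roots.

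The main obstacle is conceptual rather than computational: pinning down the Hilbert space $\mathcal{H}_{k^\sigma}$ in which $g(\vx)$ is to be understood. The $+\sigma^2$ on the right of \eqref{eqn:combined-uncertainty-worst-case} forces the reading that $g(\vx)$ is a \emph{noisy} observation at $\vx$ (self-covariance $k(\vx,\vx)+\sigma^2$), whereas \eqref{eqn:computational-uncertainty-worst-case} compares two data-driven predictions and sees no noise at $\vx$; making both constructions share the Gram structure $\shat{\kernmat}$ on the data while differing at $\vx$ is precisely what the hypothesis $\vx\notin\traindata$ secures, since it guarantees $k^\sigma(\vx,\vx_j)=k(\vx,\vx_j)$. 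For $\sigma^2=0$ this restriction is vacuous: then $k^\sigma=k$, $\shat{\kernmat}=\kernmat$, and every step holds verbatim for all $\vx\in\mathcal{X}$, giving the final assertion. I would also caution that the brace decomposition in \eqref{eqn:combined-uncertainty-worst-case} is merely the identity $g(\vx)-\mu_\idxiter^g(\vx) = (g(\vx)-\mu_*^g(\vx)) + (\mu_*^g(\vx)-\mu_\idxiter^g(\vx))$; it is the \emph{combined} functional $L_\vx$ whose supremum is $\sqrt{k_\idxiter(\vx,\vx)+\sigma^2}$, while \eqref{eqn:computational-uncertainty-worst-case} is the separate statement for $M_\vx$, and the two summands are not simultaneously maximized by one $g$.
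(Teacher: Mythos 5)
Your proposal is correct and follows essentially the same route as the paper: the paper converts each supremum into an RKHS norm by citing Lemma~3.9 of \citet{Kanagawa2018} (which is exactly the Riesz-representation identity you derive inline), and then performs the identical Gram-matrix expansion using \(k^\sigma(\vx,\traindata)=k(\vx,\traindata)\), \(k^\sigma(\traindata,\traindata)=\shat{\kernmat}\), and the idempotency \(\invapprox_\idxiter\shat{\kernmat}\invapprox_\idxiter=\invapprox_\idxiter\) from \Cref{lem:geometric-properties}. Your closing remarks on the role of the hypothesis \(\vx\notin\traindata\) and on the brace decomposition being a mere additive identity also match the paper's reading.
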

\begin{proof}
	See \Cref{suppsec:combined-uncertainty-worst-case}.
\end{proof}

\begin{figure}
	\centering
	\includegraphics[width=\textwidth]{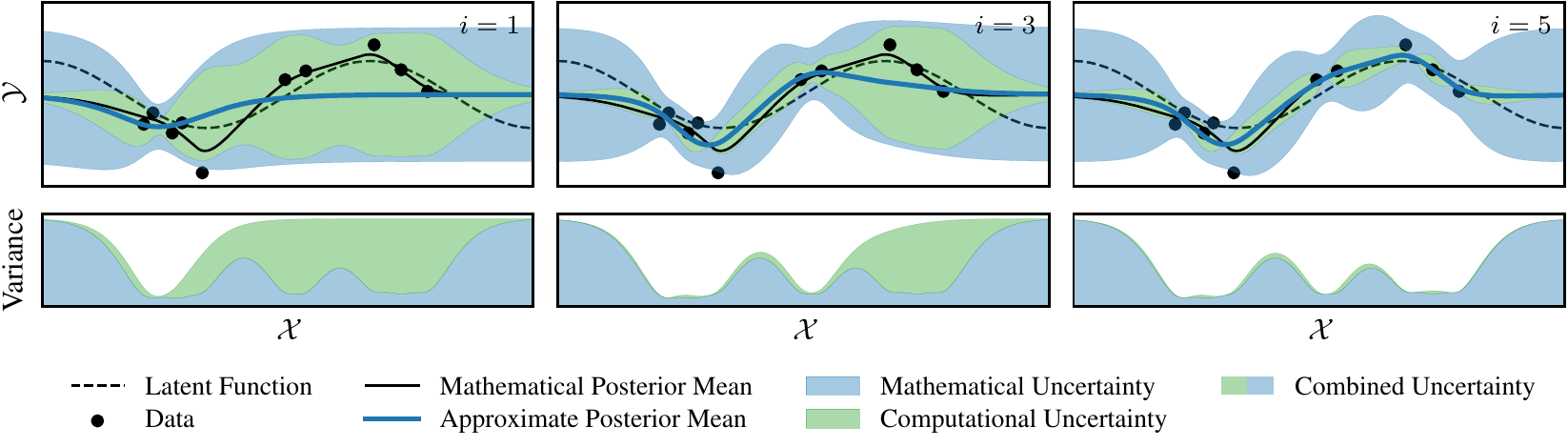}
	\caption[Caption]{\emph{Computational and combined uncertainty of IterGP as worst-case bounds.}\protect\footnotemark}
	\label{fig:covariance-worst-case-interpretation}
\end{figure}

\footnotetext{The combined (co-)variance decomposes into mathematical and computational
	covariances, as opposed to the combined standard deviation since \(\sqrt{\colordot{mplgreen} + \colordot{mplblue}} \neq
	\sqrt{\colordot{mplgreen}} + \sqrt{\colordot{mplblue}}\). The bottom panel thus illustrates the variance
	decomposition. However, to better illustrate \Cref{thm:worst-case-error}, in the upper panel we plot the combined standard deviation \(\sqrt{\colordot{mplgreen}+\colordot{mplblue}}\) and
	computational standard deviation \(\sqrt{\colordot{mplgreen}}\) within it, in line with standard GP plotting practice.}


\Cref{thm:worst-case-error} rigorously explains why the combined (mathematical + computational)
uncertainty \(k_\idxiter\) is the
correct object characterizing our belief about the latent function \(h\), given that we are in the
limited compute regime. In the same way that the mathematical uncertainty is a tight bound on the
distance to all functions \(g\) which could have produced the data (see \cite[Prop.~3.8]{Kanagawa2018}),
the combined uncertainty is a tight bound on all functions \(g\) which would have produced the same
computations. 

\subsection{Pointwise Convergence of the Posterior Mean}

In particular, as \Cref{cor:worst-case-error} shows and \Cref{fig:covariance-worst-case-interpretation} illustrates, the computational uncertainty (\colordot{mplgreen}) is a pointwise bound on the
relative distance to the mathematical posterior mean \eqref{eqn:pointwise-convergence-to-posterior-mean} and \emph{the combined uncertainty (\colordot{mplgreen} + \colordot{mplblue}) is a pointwise bound
	on the relative distance to the true latent function} \eqref{eqn:pointwise-convergence-to-latent-function}.

\begin{restatable}[Pointwise Convergence of the Posterior Mean]{corollary}{corworstcaseerror}
	\label{cor:worst-case-error}
	Assume the conditions of \Cref{thm:worst-case-error} hold and assume the latent function \(h \in
	\mathcal{H}_{k^\sigma}\). Let \(\mu_*\) be the corresponding mathematical posterior mean and \(\mu_\idxiter\) the posterior mean computed by \Cref{alg:itergp}. Then it holds that
	\begin{align}
		\frac{\lvert h(\vx) - \mu_\idxiter(\vx) \rvert}{\hspace{1.2em} \norm{h}_{\mathcal{H}_{k^\sigma}}}  & \leq  \sqrt{k_\idxiter(\vx, \vx) + \sigma^2}, \quad \text{and} \label{eqn:pointwise-convergence-to-latent-function}                     \\
		\frac{\lvert \mu_*(\vx) - \mu_\idxiter(\vx) \rvert}{\hspace{1.3em} \norm{h}_{\mathcal{H}_{k^\sigma}}} & \leq  \sqrt{k_\idxiter^{\textup{comp}}(\vx, \vx)}.
		\label{eqn:pointwise-convergence-to-posterior-mean}
	\end{align}
\end{restatable}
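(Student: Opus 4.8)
The plan is to obtain both bounds as immediate consequences of \Cref{thm:worst-case-error} by specializing its worst-case suprema to the (normalized) latent function. First I would record that, under the corollary's hypotheses, the two posterior means for the \emph{actual} problem coincide with the theorem's per-function posterior means evaluated at $g = h$: writing the data as $\labels - \vmu = h(\traindata)$, the algorithm's output is $\mu_\idxiter(\cdot) = \mu(\cdot) + k(\cdot, \traindata)\invapprox_\idxiter(\labels - \vmu) = \mu_\idxiter^h(\cdot)$, and likewise $\mu_* = \mu_*^h$, so that $h(\vx) - \mu_\idxiter(\vx)$ and $\mu_*(\vx) - \mu_\idxiter(\vx)$ are exactly the two quantities appearing inside the suprema of \eqref{eqn:combined-uncertainty-worst-case} and \eqref{eqn:computational-uncertainty-worst-case}.

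The core mechanical step is a homogeneity-plus-symmetry argument. I would define the functionals $L_1(g) := g(\vx) - \mu_\idxiter^g(\vx)$ and $L_2(g) := \mu_*^g(\vx) - \mu_\idxiter^g(\vx)$ on $\mathcal{H}_{k^\sigma}$. Both are \emph{linear} in $g$: the map $g \mapsto g(\vx)$ is linear by the reproducing property, and $g \mapsto \mu_*^g(\vx) = k(\vx,\traindata)\shat{\kernmat}^{-1}g(\traindata)$, $g \mapsto \mu_\idxiter^g(\vx) = k(\vx,\traindata)\invapprox_\idxiter g(\traindata)$ are linear because $g \mapsto g(\traindata)$ is linear. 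For a linear functional $L$, $\sup_{\norm{g} \leq 1} L(g) = \norm{L}_{\mathrm{op}}$, and because the unit ball is symmetric under $g \mapsto -g$ this equals $\sup_{\norm{g} \leq 1}\lvert L(g)\rvert$; hence $\lvert L(g)\rvert \leq \norm{L}_{\mathrm{op}}\norm{g}$ for every $g$. Applying this with $g = h$ and inserting the operator norms supplied by \Cref{thm:worst-case-error} — namely $\sqrt{k_\idxiter(\vx,\vx) + \sigma^2}$ for $L_1$ and $\sqrt{k_\idxiter^{\textup{comp}}(\vx,\vx)}$ for $L_2$ — yields exactly \eqref{eqn:pointwise-convergence-to-latent-function} and \eqref{eqn:pointwise-convergence-to-posterior-mean} after dividing by $\norm{h}_{\mathcal{H}_{k^\sigma}}$.

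The hard part is not the algebra but making sure the two normalizing conventions line up. Specifically, I need the supremum in \Cref{thm:worst-case-error} to behave as a genuine (symmetric) operator norm so that passing to $\lvert\cdot\rvert$ is legitimate; this is automatic once $L_1,L_2$ are verified to be bounded linear functionals, which rests on the assumption $h \in \mathcal{H}_{k^\sigma}$ (guaranteeing $\norm{h}_{\mathcal{H}_{k^\sigma}} < \infty$) and on the same non-degeneracy caveat as the theorem ($\vx \notin \traindata$ when $\sigma^2 > 0$). I would also double-check the identification $\mu_\idxiter = \mu_\idxiter^h$, i.e. that the observations fed to \Cref{alg:itergp} are the centered evaluations $h(\traindata)$ of the assumed latent function, since this is the one place where the corollary's ``latent function'' interpretation is silently invoked; everything else is a one-line specialization of the theorem.
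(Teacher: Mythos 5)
Your proposal is correct and follows essentially the same route as the paper, whose entire proof is the observation that $h/\norm{h}_{\mathcal{H}_{k^\sigma}}$ has unit norm and can therefore be plugged into the suprema of \Cref{thm:worst-case-error}. The extra care you take — verifying that $g \mapsto g(\vx) - \mu_\idxiter^g(\vx)$ and $g \mapsto \mu_*^g(\vx) - \mu_\idxiter^g(\vx)$ are linear so that the supremum over the symmetric unit ball controls the absolute value, and that $\mu_\idxiter = \mu_\idxiter^h$ when the data are $h(\traindata)$ — makes explicit two steps the paper leaves implicit, but does not change the argument.
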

\begin{proof}
	This follows immediately from \Cref{thm:worst-case-error} by recognizing that \(h /
	\norm{h}_{\mathcal{H}_{k^\sigma}}\) has unit norm.
\end{proof}

It is worth noting that \Cref{thm:worst-case-error} and \Cref{cor:worst-case-error} generally \emph{do not hold for other GP approximations}. They explicitly rely on \(\invapprox_\idxiter \shat{\kernmat}\) being the \(\shat{\kernmat}\)-orthogonal projection onto the space spanned by the actions (see \Cref{lem:geometric-properties}). Since orthogonal projections are unique, if another GP approximation is such a projection and therefore satisfies \Cref{thm:worst-case-error}, it is in fact an instance of IterGP.

\section{Experiments}
To demonstrate the effects of quantifying computational uncertainty we perform GP regression on
synthetic and
benchmark datasets for the two most common GP approximations in the large-scale setting, SVGP
\cite{Hensman2013} and CGGP \cite{Gardner2018}, and their direct analogs from our class
of methods. An implementation of \Cref{alg:itergp}, based on KeOps \cite{Charlier2021} and
ProbNum \cite{Wenger2021}, is available at:

\centerline{\url{https://github.com/JonathanWenger/itergp}}

\textbf{Experimental Setup}
We consider a synthetic dataset of iid uniformly sampled inputs \(\vx_j \in [-1, 1]^d\)
with \(y(\vx) = \sin(\pi \vx^\top \vone) + \varepsilon\), where \(\varepsilon \sim
\Normal(0, \sigma^2)\),
as well as a range of UCI datasets \cite{Dua2017} with training set sizes \(n=5,287\) to
\(57,247\), dimensions \(d =9\) to \(26\) and standardized features.
All experiments were run on an NVIDIA GeForce RTX 2080 Ti graphics card.
We perform GP regression using a zero mean prior and a Mat\' ern(\(\frac{1}{2}\))
kernel (for other kernels see \Cref{sec:additional-experimental-results}).
All experiments were run \(10\) times with randomly sampled training and test splits of \(90/10\)
and we report average metrics with 95\% confidence intervals.

\begin{figure}[b!]
	\begin{subfigure}[b]{0.8\textwidth}
		\centering
		\includegraphics[width=\textwidth]{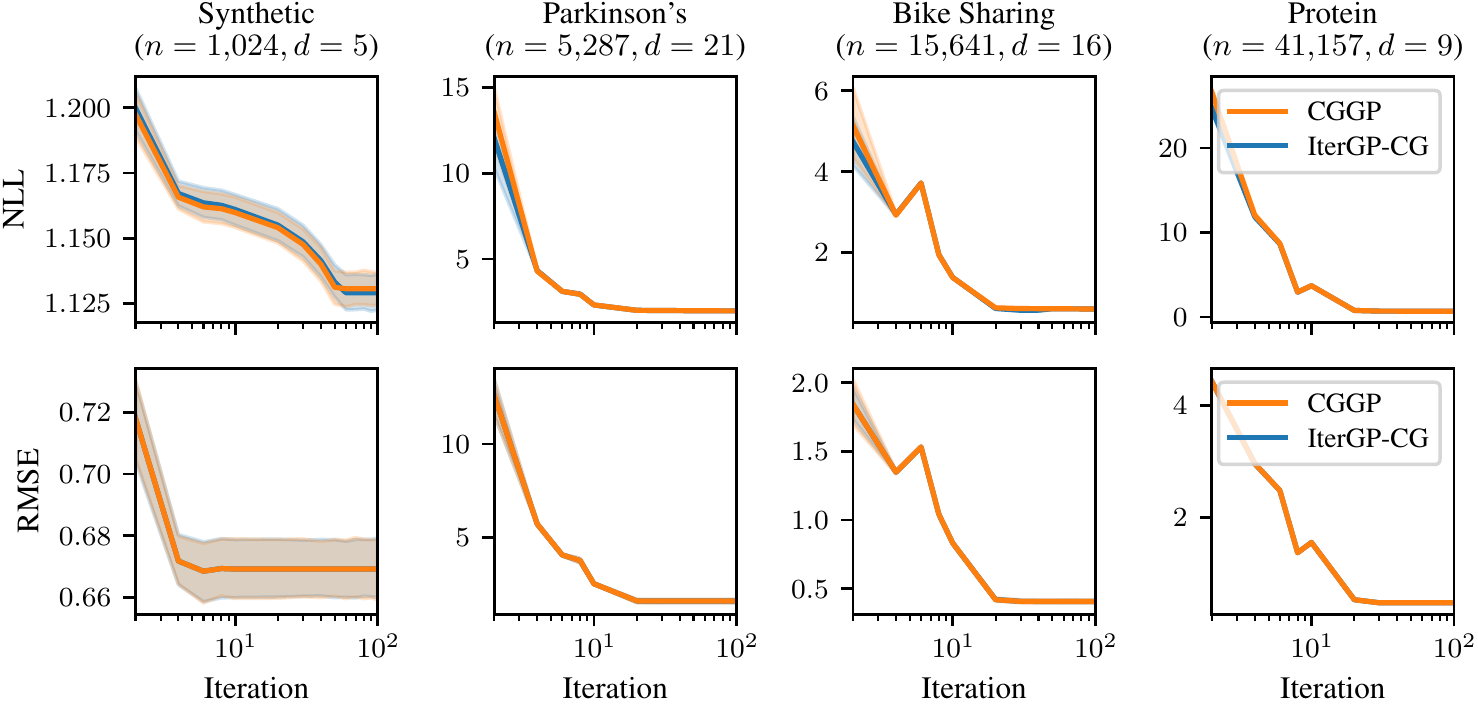}
		\caption{Generalization on synthetic and UCI benchmark datasets.}
		\label{fig:exp-itergp-cggp-generalization}
	\end{subfigure}\hfill
	\begin{subfigure}[b]{0.19\textwidth}
		\centering
		\includegraphics[width=\textwidth]{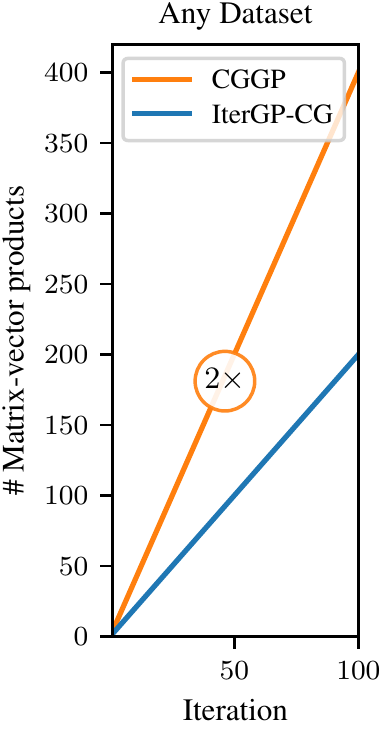}
		\caption{Comp. Cost}
		\label{fig:exp-itergp-cggp-cost}
	\end{subfigure}
	\caption{\emph{Generalization of CGGP and its closest IterGP analog.} (a) GP regression using a Mat\'ern(\(\frac{1}{2}\)) kernel on UCI
		datasets. The plot shows the average generalization error in terms of NLL and RMSE for an
		increasing number of solver iterations. The posterior mean of IterGP-CG and CGGP is
		identical, which explains the identical RMSE. However, CGGP performs additional computation for the posterior covariance as (b)
		illustrates, which is not needed since IterGP-CG has identical NLL.}
	\label{fig:exp-itergp-cggp}%
\end{figure}

\textbf{IterGP reduces the necessary computations for CG-based GP inference.}
We compare IterGP to the CG-based GP inference used in the GPyTorch library
\citep{Gardner2018}.
For all datasets, we select hyperparameters using the training procedure of
\citet{Wenger2022}.
As we show in \Cref{thm:itergp-cg},
the posterior mean of IterGP with (conjugate) residual actions is exactly equivalent to performing
CG to compute the representer weights.
Therefore, both methods produce the exact same posterior mean estimate and thus achieve the same
RMSE as a function of CG iterations (\Cref{fig:exp-itergp-cggp}, bottom).
The primary difference between the two methods is in the posterior variance.
The combined variance estimate of IterGP is essentially ``free''
in the sense that it reuses terms from the posterior mean calculation.
In contrast, computing the posterior variance with CG requires \(n_\diamond\) additional linear solves
(\(\shat\kernmat^{-1} \vx_{\diamond 1}, \ldots, \shat\kernmat^{-1}
\vx_{\diamond n_\diamond}\)).
GPyTorch relies on the Lanczos Variance Estimate technique \cite{Pleiss2018} which
essentially warm-starts each of these solves by reusing quantities from the linear solve
\(\shat{\kernmat}^{-1}k(\traindata, \vx_{\diamond 1})\).
While this approach produces reliable variance estimates that converge to the true posterior
variance,
it requires additional computation: at least one set of additional CG iterations to compute
\(\shat{\kernmat}^{-1}k(\traindata, \vx_{\diamond 1})\).
In \Cref{fig:exp-itergp-cggp-generalization} (top), we see that IterGP and GPyTorch's CGGP achieve nearly identical NLL,
suggesting that both methods produce variances that yield similar generalization.
The key difference between the methods is that 1) unlike CGGP, IterGP's variances exactly capture
both mathematical and computational uncertainty,
and 2) IterGP's variances require no additional solves, resulting in half as much computation as
GPyTorch's CGGP implementation (see \Cref{fig:exp-itergp-cggp-cost}).

\textbf{Quantifying computational uncertainty improves generalization of inducing point methods.}
To understand the benefits of quantifying computational uncertainty,
we compare the linear-time SVGP method (which does not quantify computational uncertainty)
with the closest (quadratic-time) inducing point analog from our proposed IterGP framework (see
\Cref{sec:connection-gp-approx}).
While the IterGP method is inherently more expensive than SVGP,
our goal is simply to demonstrate that inducing points can yield far more accuracy if one has the
budget to account for computational uncertainty.
To that end, we compare SVGP against IterGP using the same set of randomly-placed inducing points.
We identify a set of kernel hyperparameters by optimizing the ELBO of SVGP on the training data,
using these for both SVGP and IterGP.
As \Cref{fig:exp-itergp-svgp} shows, we find that across all datasets that
IterGP offers better RMSE and NLL than SVGP,
despite the fact that the hyperparameters are chosen to favor SVGP.
This suggests that the extra computation needed to quantify computational uncertainty
can more ``effectively'' utilize a set of inducing points for predictive models.

\begin{figure}[h!]
	\centering
	\includegraphics[width=\textwidth]{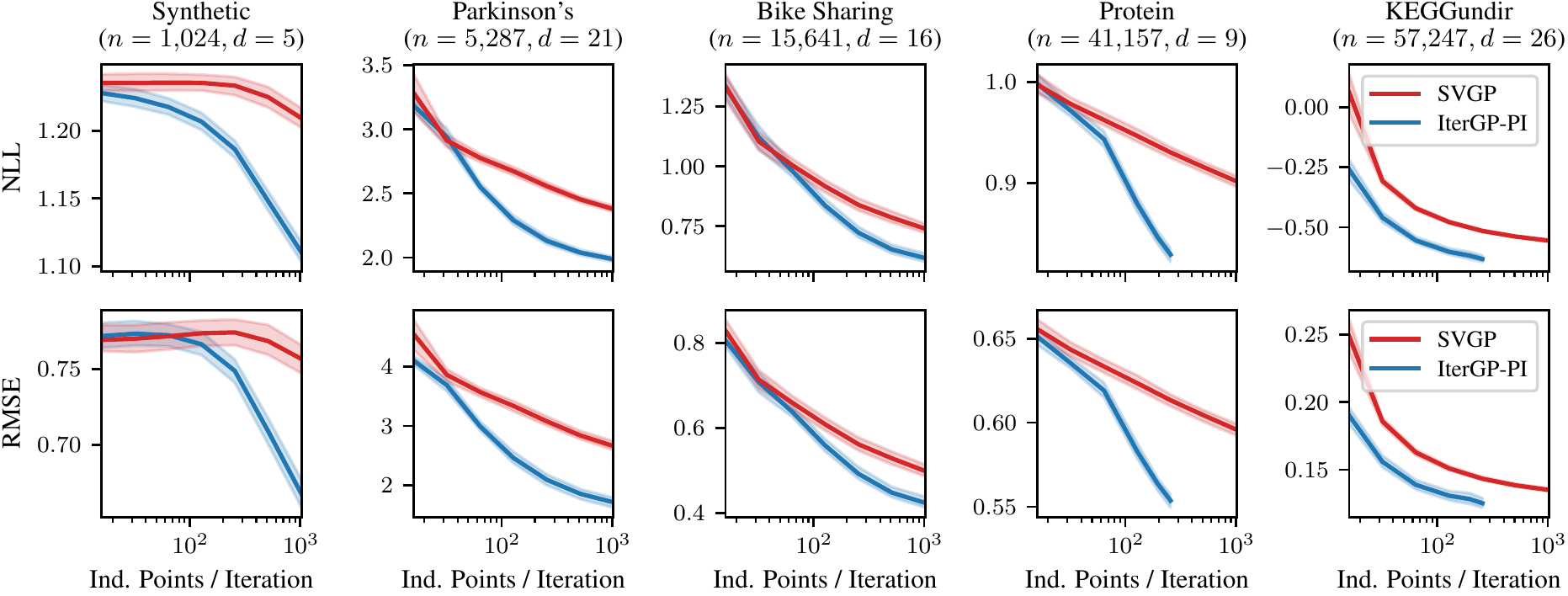}
	\caption{\emph{Generalization of SVGP and its closest IterGP analog.} GP regression using a Mat\'ern(\(\frac{1}{2}\)) kernel on UCI
		datasets. The plot shows the average generalization error in terms of NLL and RMSE for an
		increasing number of identical inducing points. After a small number of inducing points relative to
		the size of the training data, IterGP has significantly lower generalization error than SVGP.}
	\label{fig:exp-itergp-svgp}
\end{figure}

\section{Conclusion}

Scalable GP approximations inevitably introduce error, leading to a worse model for the latent
function in question.
This work demonstrates that it is possible to account for \emph{both} uncertainty
arising from limited data \emph{and} uncertainty arising from limited computation
\emph{exactly} -- which as we show improves model performance. IterGP methods return this
combined uncertainty which crucially represents a dataset-specific, pointwise worst-case bound on
the error to the true latent function. At its core, IterGP performs repeated matrix-vector
multiplication resulting in quadratic complexity. Since modern computing architectures (i.e. GPUs)
have been specifically designed for this operation at scale, iterative approaches for GP
approximation are becoming competitive with theoretically cheaper approximations, like inducing point
methods \cite{Gardner2018,Wang2019}. Finally, in addition to the general utility of IterGP, we expect
this class of methods to be particularly useful in applications where accurate uncertainty
quantification is important or, due to its inherently online nature, where data is acquired
sequentially such as in active learning and Bayesian optimization.

\newpage

\begin{ack}
	JW, MP and PH gratefully acknowledge financial support by the European Research Council through ERC StG Action 757275 / PANAMA; the DFG Cluster of Excellence ``Machine Learning - New Perspectives for Science'', EXC 2064/1, project number 390727645; the German Federal Ministry of Education and Research (BMBF) through the Tübingen AI Center (FKZ: 01IS18039A); and funds from the Ministry of Science, Research and Arts of the State of Baden-Württemberg. The authors thank the International Max Planck Research School for Intelligent Systems (IMPRS-IS) for supporting JW and MP. JPC and GP are supported by the Simons Foundation, the McKnight Foundation,
	the Grossman Center, and the Gatsby
	Charitable Trust. The authors would like to thank Hanna Dettki, Frank Schneider, Lukas Tatzel and all anonymous reviewers for helpful feedback on an earlier version of this manuscript.
\end{ack}

{\small
\bibliographystyle{unsrtnat}
\bibliography{references}
}

\section*{Checklist}

\begin{enumerate}

	\item For all authors...
	      \begin{enumerate}
		      \item Do the main claims made in the abstract and introduction accurately reflect the paper's
		            contributions and scope?
		            \answerYes{}
		      \item Did you describe the limitations of your work?
		            \answerYes{}
		      \item Did you discuss any potential negative societal impacts of your work?
		            \answerNA{}
		      \item Have you read the ethics review guidelines and ensured that your paper conforms to them?
		            \answerYes{}
	      \end{enumerate}

	\item If you are including theoretical results...
	      \begin{enumerate}
		      \item Did you state the full set of assumptions of all theoretical results?
		            \answerYes{}
		      \item Did you include complete proofs of all theoretical results?
		            \answerYes{Proofs are included in the supplementary material.}
	      \end{enumerate}

	\item If you ran experiments...
	      \begin{enumerate}
		      \item Did you include the code, data, and instructions needed to reproduce the main experimental results
		            (either in the supplemental material or as a URL)?
		            \answerYes{}
		      \item Did you specify all the training details (e.g., data splits, hyperparameters, how they were
		            chosen)?
		            \answerYes{}
		      \item Did you report error bars (e.g., with respect to the random seed after running experiments multiple
		            times)?
		            \answerYes{}
		      \item Did you include the total amount of compute and the type of resources used (e.g., type of GPUs,
		            internal cluster, or cloud provider)?
		            \answerYes{}
	      \end{enumerate}

	\item If you are using existing assets (e.g., code, data, models) or curating/releasing new assets...
	      \begin{enumerate}
		      \item If your work uses existing assets, did you cite the creators?
		            \answerYes{}
		      \item Did you mention the license of the assets?
		            \answerNA{}
		      \item Did you include any new assets either in the supplemental material or as a URL?
		            \answerNA{}
		      \item Did you discuss whether and how consent was obtained from people whose data you're using/curating?
		            \answerNA{}
		      \item Did you discuss whether the data you are using/curating contains personally identifiable
		            information or offensive content?
		            \answerNA{}
	      \end{enumerate}

	\item If you used crowdsourcing or conducted research with human subjects...
	      \begin{enumerate}
		      \item Did you include the full text of instructions given to participants and screenshots, if applicable?
		            \answerNA{}
		      \item Did you describe any potential participant risks, with links to Institutional Review Board (IRB)
		            approvals, if applicable?
		            \answerNA{}
		      \item Did you include the estimated hourly wage paid to participants and the total amount spent on
		            participant compensation?
		            \answerNA{}
	      \end{enumerate}

\end{enumerate}

\clearpage


\makeatletter
\newcommand{\suptitle}{Supplementary Material:\\\papertitle}
\renewcommand{\@title}{\suptitle}
\newcommand{\thanks}[1]{\footnotemark[1]}
\renewcommand{\@author}{\authorinfo}

\par
\begingroup
\renewcommand{\thefootnote}{\fnsymbol{footnote}}
\renewcommand{\@makefnmark}{\hbox to \z@{$^{\@thefnmark}$\hss}}
\renewcommand{\@makefntext}[1]{%
	\parindent 1em\noindent
	\hbox to 1.8em{\hss $\m@th ^{\@thefnmark}$}#1
}
\@maketitle
\@thanks
\endgroup
\let\maketitle\relax
\let\thanks\relax
\makeatother

\beginsupplementary
\startcontents[supplementary]

This supplementary material contains additional results and in particular proofs
for all theoretical statements. References referring to sections, equations or theorem-type
environments within this
document are prefixed with `S', while references to, or results from, the main paper are stated as
is.

	{\small
		\vspace{1em}
		\printcontents[supplementary]{}{1}{}
	}

\section{Connections to Other GP Approximations}

\subsection{Pivoted Cholesky Decomposition}

\begin{figure}
	\begin{minipage}[t]{.48\textwidth}
		\begin{figure}[H]
			\centering
			\adjustbox{max width=\textwidth}{
				\tikzset{every picture/.style={line width=0.75pt}} 

\begin{tikzpicture}[x=0.75pt,y=0.75pt,yscale=-1,xscale=1]

\draw  [fill={rgb, 255:red, 205; green, 205; blue, 205 }  ,fill opacity=1 ][line width=0.75]  (70,0.5) -- (126.75,0.5) -- (126.75,57.25) -- (70,57.25) -- cycle ;
\draw  [line width=0.75]  (219.89,0.5) -- (276.64,0.5) -- (276.64,57.25) -- (219.89,57.25) -- cycle ;
\draw  [line width=0.75]  (155.65,0.5) -- (212.4,0.5) -- (212.4,57.25) -- (155.65,57.25) -- cycle ;
\draw  [fill={rgb, 255:red, 74; green, 144; blue, 226 }  ,fill opacity=1 ] (169.93,14.42) -- (169.93,57.25) -- (155.65,57.25) -- (155.65,0.5) -- cycle ;
\draw  [fill={rgb, 255:red, 74; green, 144; blue, 226 }  ,fill opacity=1 ] (234.17,14.42) -- (276.99,14.42) -- (276.64,0.5) -- (219.89,0.5) -- cycle ;
\draw  [fill={rgb, 255:red, 74; green, 144; blue, 226 }  ,fill opacity=1 ] (126.75,0.5) -- (127.1,14.42) -- (84.28,14.42) -- (84.28,57.25) -- (70,57.25) -- (70,0.5) -- cycle ;
\draw  [fill={rgb, 255:red, 205; green, 205; blue, 205 }  ,fill opacity=1 ][line width=0.75]  (70,71.52) -- (126.75,71.52) -- (126.75,128.27) -- (70,128.27) -- cycle ;
\draw  [line width=0.75]  (219.89,71.52) -- (276.64,71.52) -- (276.64,128.27) -- (219.89,128.27) -- cycle ;
\draw  [line width=0.75]  (155.65,71.52) -- (212.4,71.52) -- (212.4,128.27) -- (155.65,128.27) -- cycle ;
\draw  [fill={rgb, 255:red, 74; green, 144; blue, 226 }  ,fill opacity=1 ] (169.93,85.44) -- (169.93,128.27) -- (155.65,128.27) -- (155.65,71.52) -- cycle ;
\draw  [fill={rgb, 255:red, 74; green, 144; blue, 226 }  ,fill opacity=1 ] (234.17,85.44) -- (276.99,85.44) -- (276.64,71.52) -- (219.89,71.52) -- cycle ;
\draw  [fill={rgb, 255:red, 74; green, 144; blue, 226 }  ,fill opacity=1 ] (126.75,71.52) -- (127.1,85.44) -- (84.28,85.44) -- (84.28,128.27) -- (70,128.27) -- (70,71.52) -- cycle ;
\draw  [fill={rgb, 255:red, 18; green, 90; blue, 176 }  ,fill opacity=1 ] (184.2,99.36) -- (184.2,128.62) -- (169.93,128.27) -- (169.93,85.44) -- cycle ;
\draw  [fill={rgb, 255:red, 18; green, 90; blue, 176 }  ,fill opacity=1 ] (248.44,99.71) -- (276.99,100.07) -- (276.99,85.44) -- (234.17,85.44) -- cycle ;
\draw  [fill={rgb, 255:red, 18; green, 90; blue, 176 }  ,fill opacity=1 ] (127.1,85.44) -- (127.1,100.07) -- (98.55,99.71) -- (98.55,128.62) -- (84.28,128.27) -- (84.28,85.44) -- cycle ;
\draw  [dash pattern={on 0.84pt off 2.51pt}]  (70,0.5) -- (126.75,57.25) ;
\draw  [dash pattern={on 0.84pt off 2.51pt}]  (70,71.52) -- (126.75,128.27) ;
\draw  [dash pattern={on 0.84pt off 2.51pt}]  (155.65,0.5) -- (212.4,57.25) ;
\draw  [dash pattern={on 0.84pt off 2.51pt}]  (219.89,0.5) -- (276.64,57.25) ;
\draw  [dash pattern={on 0.84pt off 2.51pt}]  (155.83,70.99) -- (212.58,127.73) ;
\draw  [dash pattern={on 0.84pt off 2.51pt}]  (219.89,71.52) -- (276.64,128.27) ;
\draw  [fill={rgb, 255:red, 205; green, 205; blue, 205 }  ,fill opacity=1 ][line width=0.75]  (70,142.9) -- (126.75,142.9) -- (126.75,199.64) -- (70,199.64) -- cycle ;
\draw  [line width=0.75]  (219.89,142.9) -- (276.64,142.9) -- (276.64,199.64) -- (219.89,199.64) -- cycle ;
\draw  [line width=0.75]  (155.65,142.9) -- (212.4,142.9) -- (212.4,199.64) -- (155.65,199.64) -- cycle ;
\draw  [fill={rgb, 255:red, 74; green, 144; blue, 226 }  ,fill opacity=1 ] (169.93,156.82) -- (169.93,199.64) -- (155.65,199.64) -- (155.65,142.9) -- cycle ;
\draw  [fill={rgb, 255:red, 74; green, 144; blue, 226 }  ,fill opacity=1 ] (234.17,156.82) -- (276.99,156.82) -- (276.64,142.9) -- (219.89,142.9) -- cycle ;
\draw  [fill={rgb, 255:red, 74; green, 144; blue, 226 }  ,fill opacity=1 ] (126.75,142.9) -- (127.1,156.82) -- (84.28,156.82) -- (84.28,199.64) -- (70,199.64) -- (70,142.9) -- cycle ;
\draw  [fill={rgb, 255:red, 18; green, 90; blue, 176 }  ,fill opacity=1 ] (184.2,170.74) -- (184.2,200) -- (169.93,199.64) -- (169.93,156.82) -- cycle ;
\draw  [fill={rgb, 255:red, 18; green, 90; blue, 176 }  ,fill opacity=1 ] (248.44,171.09) -- (276.99,171.45) -- (276.99,156.82) -- (234.17,156.82) -- cycle ;
\draw  [fill={rgb, 255:red, 18; green, 90; blue, 176 }  ,fill opacity=1 ] (127.1,156.82) -- (127.1,171.45) -- (98.55,171.09) -- (98.55,200) -- (84.28,199.64) -- (84.28,156.82) -- cycle ;
\draw  [dash pattern={on 0.84pt off 2.51pt}]  (155.83,142.36) -- (212.58,199.11) ;
\draw  [dash pattern={on 0.84pt off 2.51pt}]  (219.89,142.9) -- (276.64,199.64) ;
\draw  [fill={rgb, 255:red, 11; green, 50; blue, 96 }  ,fill opacity=1 ] (127.1,171.45) -- (127.1,185.72) -- (112.83,185.72) -- (112.83,200) -- (98.55,200) -- (98.55,171.81) -- cycle ;
\draw  [dash pattern={on 0.84pt off 2.51pt}]  (70,142.9) -- (126.75,199.64) ;
\draw  [fill={rgb, 255:red, 11; green, 50; blue, 96 }  ,fill opacity=1 ] (198.3,185.19) -- (198.48,200) -- (184.2,200) -- (184.03,171.27) -- cycle ;
\draw  [fill={rgb, 255:red, 11; green, 50; blue, 96 }  ,fill opacity=1 ] (262.72,185.37) -- (276.99,185.72) -- (276.99,171.45) -- (248.44,171.45) -- cycle ;

\draw (8.5,21.27) node [anchor=north west][inner sep=0.75pt]    {$\idxiter = 1$};
\draw (134,21.27) node [anchor=north west][inner sep=0.75pt]    {$\approx $};
\draw (8.5,92.29) node [anchor=north west][inner sep=0.75pt]    {$\idxiter = 2$};
\draw (134,92.29) node [anchor=north west][inner sep=0.75pt]    {$\approx $};
\draw (8.5,163.67) node [anchor=north west][inner sep=0.75pt]    {$\idxiter = 3$};
\draw (134,163.67) node [anchor=north west][inner sep=0.75pt]    {$\approx $};
\draw (75,209.4) node [anchor=north west][inner sep=0.75pt]    {$\textcolor{gray}{\mP^\top}\shat{\kernmat}\textcolor{gray}{\mP}$};
\draw (175,208.4) node [anchor=north west][inner sep=0.75pt]    {$\mL_{\idxiter}$};
\draw (238,207.4) node [anchor=north west][inner sep=0.75pt]    {$\mL_{\idxiter}^{\top}$};

\end{tikzpicture}
			}
		\end{figure}

	\end{minipage}
	\begin{minipage}[t]{.52\textwidth}
		\begin{algorithm}[H]
			\caption{(Pivoted) Cholesky Decomposition\label{alg:cholesky}}
			\textbf{Input:} kernel matrix \(\shat{\kernmat}\), \textcolor{gray}{permutation matrix \(\mP\)}\\
\textbf{Output:} lower triangular \(\cholfac_\idxiter\), s.t. \(\cholfac_\idxiter \cholfac_\idxiter^\top \approx \textcolor{gray}{\mP^\top}\shat{\kernmat}\textcolor{gray}{\mP}\)
\begin{algorithmic}[1]
    \Procedure{\textsc{Cholesky}}{$\shat{\kernmat}, \textcolor{gray}{\mP}$}
    \State \(\mA \gets \textcolor{gray}{\mP^\top}\shat{\kernmat}\textcolor{gray}{\mP}\)
    \For{\(\idxiter \in \{1, \dots, n\}\)}
    \State \(\vl_{\idxiter} \gets \mA_{:\idxiter}/\sqrt{\mA_{\idxiter\idxiter}}\)
    \State \(\mA \gets \mA - \vl_\idxiter\vl_\idxiter^\top \textcolor{gray}{ = \mP^\top \shat{\kernmat} \mP - \cholfac_\idxiter \cholfac_\idxiter^\top}\)
    \State \(\cholfac_{\idxiter} = \begin{pmatrix} \cholfac_{\idxiter-1} & \vl_{\idxiter} \end{pmatrix}\)
    \EndFor
    \State \Return \(\cholfac_\idxiter\)
    \EndProcedure
\end{algorithmic}

		\end{algorithm}
	\end{minipage}
	\caption{\emph{Cholesky decomposition.} Every column added to the lower triangular Cholesky factor \(\mL\) defines
		the \(i\)th ``right angle ruler''-pattern in \(\textcolor{gray}{\mP^\top}\shat{\kernmat}\textcolor{gray}{\mP}\). The bottom right matrix
		in gray given by
		\(\textcolor{gray}{\mP^\top}\shat{\kernmat}\textcolor{gray}{\mP} - \cholfac_\idxiter \cholfac_\idxiter^\top = \textcolor{gray}{\mP^\top}\shat{\kernmat}\textcolor{gray}{\mP} - \sum_{j=1}^\idxiter
		\vl_j\vl_j^\top\)
		changes every
		iteration.}
	\label{fig:cholesky-illustration}
\end{figure}

\begin{restatable}[Pivoted Cholesky Decomposition]{theorem}{thmcholesky}
	\label{thm:itergp-cholesky}
	Let \((j_i)_{i=1}^n\) be a set of indices defining the pivot elements of the pivoted Cholesky decomposition and \(\mP \in \Rnn\) the corresponding permutation matrix. Assume the actions of \Cref{alg:itergp} are given by the standard unit vectors \(\action_\idxiter = \mP \ve_\idxiter = \ve_{j_\idxiter}\), i.e. 
	\begin{equation}
		(\action_\idxiter)_j = (\ve_{j_\idxiter})_j = \begin{cases} 1 &\text{if } j = j_i\\ 0 & \text{otherwise}\end{cases}.
	\end{equation}
	Then \Cref{alg:itergp}
	recovers the pivoted Cholesky decomposition, i.e. it holds for all \(\idxiter \in \{0, \dots,
	n\}\) that
	\begin{equation}
		\label{eqn:proof-cholesky-induction-hypothesis}
		\mP^\top \kernmatapprox_\idxiter \mP = \cholfac_\idxiter \cholfac_\idxiter^\top,
	\end{equation}
	where \(\cholfac_\idxiter \in \R^{n \times \idxiter}\) is the (partial) Cholesky factor of \(\mP^\top \shat{\mK} \mP\) as computed by \Cref{alg:cholesky}.
\end{restatable}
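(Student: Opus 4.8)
The plan is to proceed by induction on the iteration counter $\idxiter$, showing that each rank-one update performed by \Cref{alg:itergp} on $\kernmatapprox_\idxiter$ corresponds exactly (after conjugation by the permutation $\mP$) to appending the next Cholesky column $\vl_\idxiter$ in \Cref{alg:cholesky}. The base case $\idxiter = 0$ is immediate: $\kernmatapprox_0 = \mZero$ and $\cholfac_0$ is empty, so both sides of \eqref{eqn:proof-cholesky-induction-hypothesis} vanish.

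For the inductive step I would first record the algebraic identity $\kernmatapprox_{\idxiter-1} = \shat{\kernmat}\invapprox_{\idxiter-1}\shat{\kernmat}$, which follows directly from comparing the rank-one update rules for $\kernmatapprox$ and $\invapprox$ in \Cref{alg:itergp} (both accumulate the same $\searchdir_j\searchdir_j^\top$ up to flanking factors of $\shat{\kernmat}$). Granting the inductive hypothesis $\mP^\top\kernmatapprox_{\idxiter-1}\mP = \cholfac_{\idxiter-1}\cholfac_{\idxiter-1}^\top$, the working matrix of \Cref{alg:cholesky} at the start of iteration $\idxiter$ is $\mA^{(\idxiter-1)} = \mP^\top\shat{\kernmat}\mP - \cholfac_{\idxiter-1}\cholfac_{\idxiter-1}^\top = \mP^\top(\shat{\kernmat} - \kernmatapprox_{\idxiter-1})\mP$.

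The crux is then to identify the quantities consumed by the two algorithms. Using $\searchdir_\idxiter = (\mI - \invapprox_{\idxiter-1}\shat{\kernmat})\action_\idxiter$ together with the identity above, I would compute $\mP^\top\shat{\kernmat}\searchdir_\idxiter = \mP^\top(\shat{\kernmat} - \kernmatapprox_{\idxiter-1})\action_\idxiter$; since the theorem's assumption gives $\mP\ve_\idxiter = \action_\idxiter$, this is precisely the $\idxiter$-th column $\mA^{(\idxiter-1)}_{:\idxiter}$ selected by \Cref{alg:cholesky}. Likewise $\searchdirsqnorm_\idxiter = \action_\idxiter^\top\shat{\kernmat}\searchdir_\idxiter = \action_\idxiter^\top(\shat{\kernmat} - \kernmatapprox_{\idxiter-1})\action_\idxiter = \mA^{(\idxiter-1)}_{\idxiter\idxiter}$ is exactly the Cholesky pivot. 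Consequently $\frac{1}{\sqrt{\searchdirsqnorm_\idxiter}}\mP^\top\shat{\kernmat}\searchdir_\idxiter = \mA^{(\idxiter-1)}_{:\idxiter}/\sqrt{\mA^{(\idxiter-1)}_{\idxiter\idxiter}} = \vl_\idxiter$. Taking outer products and substituting into $\mP^\top\kernmatapprox_\idxiter\mP = \mP^\top\kernmatapprox_{\idxiter-1}\mP + \frac{1}{\searchdirsqnorm_\idxiter}\mP^\top\shat{\kernmat}\searchdir_\idxiter\searchdir_\idxiter^\top\shat{\kernmat}\mP$ yields $\cholfac_{\idxiter-1}\cholfac_{\idxiter-1}^\top + \vl_\idxiter\vl_\idxiter^\top = \cholfac_\idxiter\cholfac_\idxiter^\top$, closing the induction.

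I expect the only real obstacle to be bookkeeping: cleanly handling the permutation so that ``the $\idxiter$-th column/pivot of the current $\mA$'' in pivoted coordinates matches the action $\action_\idxiter = \mP\ve_\idxiter = \ve_{j_\idxiter}$ in the original coordinates, and ensuring the pivot $\searchdirsqnorm_\idxiter = \mA^{(\idxiter-1)}_{\idxiter\idxiter}$ is strictly positive (guaranteed by positive definiteness of $\shat{\kernmat}$) so that the normalization $1/\sqrt{\searchdirsqnorm_\idxiter}$ is well-defined. Once the identity $\kernmatapprox_{\idxiter-1} = \shat{\kernmat}\invapprox_{\idxiter-1}\shat{\kernmat}$ is in hand, the remainder is direct substitution.
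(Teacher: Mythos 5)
Your proposal is correct and follows essentially the same route as the paper: induction on $\idxiter$, using $\kernmatapprox_{\idxiter-1} = \shat{\kernmat}\invapprox_{\idxiter-1}\shat{\kernmat}$ and the hypothesis to identify $\shat{\kernmat}\searchdir_\idxiter = (\shat{\kernmat}-\kernmatapprox_{\idxiter-1})\action_\idxiter$ with $\mP$ times the column $\mA^{(\idxiter-1)}_{:\idxiter}$ and $\searchdirsqnorm_\idxiter$ with the pivot, so that the rank-one update matches $\vl_\idxiter\vl_\idxiter^\top$. Your explicit framing via the working matrix $\mA^{(\idxiter-1)} = \mP^\top(\shat{\kernmat}-\kernmatapprox_{\idxiter-1})\mP$ is if anything a slightly cleaner bookkeeping of the same computation the paper carries out inline.
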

\begin{proof}
	We proceed by induction. Assume \eqref{eqn:proof-cholesky-induction-hypothesis} holds after \(\idxiter\) iterations of
	\Cref{alg:itergp}.
	For the base case \(\idxiter = 0\), it holds by assumption that \(\mP^\top\kernmatapprox_0\mP = \mP^\top \shat{\kernmat}
	\invapprox_0 \shat{\kernmat}\mP = \mZero\). Now for the induction step \(\idxiter \to \idxiter + 1\), we have
	\begin{align*}
		\frac{1}{\searchdirsqnorm_{\idxiter + 1}} \shat{\kernmat} \searchdir_\idxiter \searchdir_\idxiter^\top \shat{\kernmat}                                                                                                                                                                                                                  
		                                                                               & = \frac{1}{\searchdirsqnorm_{\idxiter + 1}} \shat{\kernmat} \qoicov_{\idxiter} \shat{\kernmat} \action_{\idxiter + 1} \action_{\idxiter + 1}^\top \shat{\kernmat} \qoicov_{\idxiter} \shat{\kernmat}                                                                                                                                                                                                                                                                                   \\
		                                                                               & =\frac{1}{\searchdirsqnorm_{\idxiter + 1}} \shat{\kernmat} (\qoicov_0 - \invapprox_\idxiter)\shat{\kernmat} \action_{\idxiter + 1} \action_{\idxiter + 1}^\top \shat{\kernmat} (\qoicov_0 - \invapprox_\idxiter) \shat{\kernmat}                                                                                                                                                                                                                                                       \\
		                                                                               & =\frac{1}{\searchdirsqnorm_{\idxiter + 1}} (\shat{\kernmat} - \kernmatapprox_\idxiter) \action_{\idxiter + 1} \action_{\idxiter + 1}^\top  (\shat{\kernmat} - \kernmatapprox_\idxiter)                                                                                                                                                                                                                                                                                   \\
		                                                                               & \overset{\mathrm{IH}}{=} \frac{1}{\searchdirsqnorm_{\idxiter + 1}} (\shat{\kernmat} - \mP\cholfac_\idxiter \cholfac_\idxiter^\top\mP^\top) \action_{\idxiter + 1} \action_{\idxiter + 1}^\top  (\shat{\kernmat} - \mP\cholfac_\idxiter \cholfac_\idxiter^\top\mP^\top)                                                                                                                                                                  \\
		                                                                               & = \frac{(\shat{\kernmat} - \mP\cholfac_\idxiter \cholfac_\idxiter^\top\mP^\top) \mP\ve_{\idxiter + 1}}{\sqrt{\ve_{\idxiter + 1}^\top\mP^\top(\shat{\kernmat} - \mP\cholfac_\idxiter \cholfac_\idxiter^\top\mP^\top) \mP\ve_{\idxiter + 1}}} \frac{\ve_{\idxiter + 1}^\top\mP^\top (\shat{\kernmat} - \mP\cholfac_\idxiter \cholfac_\idxiter^\top\mP^\top)}{\sqrt{\ve_{\idxiter + 1}^\top\mP^\top(\shat{\kernmat} - \mP\cholfac_\idxiter \cholfac_\idxiter^\top\mP^\top) \mP\ve_{\idxiter + 1}}}                                                                               \\
																					   &= \frac{\mP(\mP^\top\shat{\kernmat}\mP - \cholfac_\idxiter \cholfac_\idxiter^\top) \ve_{\idxiter + 1}}{\sqrt{\ve_{\idxiter + 1}^\top(\mP^\top\shat{\kernmat}\mP - \cholfac_\idxiter \cholfac_\idxiter^\top) \ve_{\idxiter + 1}}} \frac{\ve_{\idxiter + 1}^\top(\mP^\top\shat{\kernmat}\mP - \cholfac_\idxiter \cholfac_\idxiter^\top)\mP^\top}{\sqrt{\ve_{\idxiter + 1}^\top(\mP^\top\shat{\kernmat}\mP - \cholfac_\idxiter \cholfac_\idxiter^\top) \ve_{\idxiter + 1}}}  \\
		                                                                               & =  \mP\vl_{\idxiter+1} \vl_{\idxiter+1}^\top\mP^\top.
	\end{align*}
	where \(\vl_{\idxiter+1}\) is given by \Cref{alg:cholesky}. Combining this with one more use of the induction hypothesis we obtain
	\begin{align*}
		\mP^\top \kernmatapprox_{\idxiter + 1}\mP &= \mP^\top\kernmatapprox_{\idxiter}\mP + \frac{1}{\searchdirsqnorm_{\idxiter + 1}} \mP^\top\shat{\kernmat} \searchdir_{\idxiter+1}
		\searchdir_{\idxiter+1}^\top \shat{\kernmat}\mP\\
		&= \cholfac_\idxiter \cholfac_\idxiter^\top + \vl_{\idxiter+1} \vl_{\idxiter+1}^\top =
		\begin{pmatrix}\cholfac_\idxiter & \vl_{\idxiter +1} \end{pmatrix} \begin{pmatrix}\cholfac_\idxiter^\top\\ \vl_{\idxiter +1}^\top \end{pmatrix} = \cholfac_{\idxiter+1} \cholfac_{\idxiter+1}^\top
	\end{align*}
	This proves the claim.

\end{proof}

\subsection{Singular / Eigenvalue Decomposition}

\begin{restatable}[Singular / Eigenvalue Decomposition]{theorem}{thmsvd}
	\label{thm:itergp-svd}
	Let the actions \(\action_\idxiter =
	\vu_\idxiter\) of \Cref{alg:itergp} be given by the eigenvectors \(\vu_\idxiter\) of \(\shat{\kernmat}\) in arbitrary
	order. Then for \(\idxiter \in \{1,   \dots,  n\}\) it holds that
	\begin{align*}
		\invapprox_\idxiter     & = \mU_\idxiter \mLambda_\idxiter^{-1} \mU_\idxiter^\top = \textsc{SVD}_\idxiter(\shat{\kernmat}^{-1}) \\
		\kernmatapprox_\idxiter & =  \mU_\idxiter \mLambda_\idxiter \mU_\idxiter^\top = \textsc{SVD}_\idxiter(\shat{\kernmat}),
	\end{align*}
	where \(\mU = \begin{pmatrix} \vu_1, \dots, \vu_\idxiter \end{pmatrix} \in \R^{n \times \idxiter}\) and \(\mLambda =
	\operatorname{diag}(\lambda_1, \dots, \lambda_\idxiter) \in \R^{\idxiter \times \idxiter}\) is the diagonal matrix of eigenvalues of
	\(\shat{\kernmat}\) with the order given by the order of the actions.
\end{restatable}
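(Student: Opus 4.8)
The plan is to prove both identities simultaneously by induction on $\idxiter$, establishing along the way that with eigenvector actions the search direction at each step is exactly the corresponding eigenvector, $\searchdir_\idxiter = \vu_\idxiter$, and that the normalization constant is the corresponding eigenvalue, $\searchdirsqnorm_\idxiter = \lambda_\idxiter$. The whole argument rests on two elementary facts about the symmetric matrix $\shat{\kernmat} = \kernmat + \sigma^2\mI$: its eigenvectors may be taken orthonormal, so $\vu_j^\top\vu_\idxiter = 0$ for $j \neq \idxiter$ and $\vu_\idxiter^\top\vu_\idxiter = 1$, and of course $\shat{\kernmat}\vu_\idxiter = \lambda_\idxiter\vu_\idxiter$.

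For the base case $\idxiter = 0$ the claim holds with empty sums, since $\invapprox_0 = \mZero$ and $\kernmatapprox_0 = \mZero$. For the inductive step I would assume $\invapprox_{\idxiter-1} = \sum_{j=1}^{\idxiter-1}\lambda_j^{-1}\vu_j\vu_j^\top$ and first compute the search direction directly from its definition in \Cref{alg:itergp}. Because $\shat{\kernmat}\vu_\idxiter = \lambda_\idxiter\vu_\idxiter$ and $\vu_j^\top\vu_\idxiter = 0$ for $j < \idxiter$, the product $\invapprox_{\idxiter-1}\shat{\kernmat}\vu_\idxiter = \sum_{j=1}^{\idxiter-1}\lambda_\idxiter\lambda_j^{-1}\vu_j(\vu_j^\top\vu_\idxiter)$ vanishes term by term, so $\searchdir_\idxiter = (\mI - \invapprox_{\idxiter-1}\shat{\kernmat})\vu_\idxiter = \vu_\idxiter$. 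Substituting into the normalization constant then gives $\searchdirsqnorm_\idxiter = \vu_\idxiter^\top\shat{\kernmat}\vu_\idxiter = \lambda_\idxiter$.

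With $\searchdir_\idxiter = \vu_\idxiter$ and $\searchdirsqnorm_\idxiter = \lambda_\idxiter$ in hand, the two rank-one updates telescope immediately. The precision update yields $\invapprox_\idxiter = \invapprox_{\idxiter-1} + \lambda_\idxiter^{-1}\vu_\idxiter\vu_\idxiter^\top = \sum_{j=1}^{\idxiter}\lambda_j^{-1}\vu_j\vu_j^\top = \mU_\idxiter\mLambda_\idxiter^{-1}\mU_\idxiter^\top$, which is precisely the rank-$\idxiter$ truncation of the spectral decomposition $\shat{\kernmat}^{-1} = \sum_{j=1}^{n}\lambda_j^{-1}\vu_j\vu_j^\top$. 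Likewise, using $\shat{\kernmat}\searchdir_\idxiter = \lambda_\idxiter\vu_\idxiter$, the kernel-matrix update contributes $\frac{1}{\searchdirsqnorm_\idxiter}\shat{\kernmat}\searchdir_\idxiter\searchdir_\idxiter^\top\shat{\kernmat} = \lambda_\idxiter\vu_\idxiter\vu_\idxiter^\top$, so $\kernmatapprox_\idxiter = \sum_{j=1}^{\idxiter}\lambda_j\vu_j\vu_j^\top = \mU_\idxiter\mLambda_\idxiter\mU_\idxiter^\top$, the rank-$\idxiter$ truncation of $\shat{\kernmat}$.

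The argument is essentially mechanical once orthonormality is invoked; the one step I would treat as the crux is verifying that $\invapprox_{\idxiter-1}\shat{\kernmat}\vu_\idxiter = \vzero$, i.e. that passing an eigenvector through the partially built inverse leaves the search direction equal to the action itself. This is exactly where orthogonality of the eigenbasis does all the work --- equivalently, the $\shat{\kernmat}$-conjugacy of the search directions from \Cref{prop:conjugate-direction-method}, which applies since orthonormal eigenvectors are linearly independent. Everything else is routine bookkeeping of telescoping rank-one sums.
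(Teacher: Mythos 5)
Your proof is correct, but it takes a different route from the paper's. The paper's proof is a two-line computation that invokes the batch form \(\invapprox_\idxiter = \mActions_\idxiter(\mActions_\idxiter^\top \shat{\kernmat} \mActions_\idxiter)^{-1} \mActions_\idxiter^\top\) from \Cref{lem:geometric-properties} (\cref{eqn:invapprox-batch-form}): substituting \(\mActions_\idxiter = \mU_\idxiter\) and using \(\mU_\idxiter^\top \shat{\kernmat}\mU_\idxiter = \mLambda_\idxiter\) gives \(\invapprox_\idxiter = \mU_\idxiter\mLambda_\idxiter^{-1}\mU_\idxiter^\top\) immediately, and \(\kernmatapprox_\idxiter = \shat{\kernmat}\invapprox_\idxiter\shat{\kernmat}\) gives the second identity. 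You instead run a fresh induction directly on the recursive updates of \Cref{alg:itergp}, showing \(\searchdir_\idxiter = \vu_\idxiter\) and \(\searchdirsqnorm_\idxiter = \lambda_\idxiter\) and telescoping the rank-one sums. Both arguments rest on the same facts (orthonormality of the eigenbasis plus \(\shat{\kernmat}\vu_\idxiter = \lambda_\idxiter\vu_\idxiter\)); the paper's is shorter because the inductive work is already amortized inside \Cref{lem:geometric-properties}, while yours is more self-contained and additionally makes explicit the nice structural fact that with eigenvector actions the search directions \emph{are} the eigenvectors and the normalization constants \emph{are} the eigenvalues --- information the batch form obscures. One small point worth stating explicitly either way: for repeated eigenvalues you must \emph{choose} an orthonormal eigenbasis (you do note this), since otherwise \(\mU_\idxiter^\top\shat{\kernmat}\mU_\idxiter\) need not be diagonal; the paper's proof silently makes the same assumption.
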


\begin{proof}
	It holds by assumption and \cref{eqn:invapprox-batch-form}, that
	\begin{align*}
		\invapprox_\idxiter = \mActions_\idxiter(\mActions_\idxiter^\top \shat{\kernmat} \mActions_\idxiter)^{-1} \mActions_\idxiter^\top = \mU_\idxiter (\mU_\idxiter^\top \shat{\kernmat} \mU_\idxiter)^{-1} \mU_\idxiter^\top = \mU_\idxiter \mLambda_\idxiter^{-1} \mU_\idxiter^\top,
		\intertext{as well as}
		\kernmatapprox_\idxiter = \shat{\kernmat} \invapprox_\idxiter \shat{\kernmat} = \shat{\kernmat} \mU_\idxiter \mLambda_\idxiter^{-1} \mU_\idxiter^\top \shat{\kernmat} = \mU_\idxiter \mLambda_\idxiter\mLambda_\idxiter^{-1}\mLambda_\idxiter\mU_\idxiter^\top = \mU_\idxiter \mLambda_\idxiter \mU_\idxiter^\top
	\end{align*}
	This proves the claim.
\end{proof}

\subsection{Conjugate Gradient Method}


\begin{algorithm}[H]
	\caption{Preconditioned Conjugate Gradient Method \cite{Hestenes1952}\label{alg:cg}}
	\textbf{Input:} kernel matrix \(\shat{\kernmat}\), labels \(\labels\), prior mean \(\vmu\), preconditioner \(\shat{\precond}\)\\
\textbf{Output:} representer weights \(\qoimean_\idxiter \approx \shat{\kernmat}^{-1} (\labels - \vmu)\)
\begin{algorithmic}[1]
    \Procedure{\textsc{CG}}{$\shat{\kernmat}, \labels - \vmu, \shat{\precond}$}
    \State \(\qoimean_0 \gets \vzero\)
    \State \(\action_0 \gets \vzero\)
    \While{\(\norm{\vr_\idxiter}_2 > \max(\delta_{\textup{rtol}} \norm{\labels}_2, \delta_{\textup{atol}})\) \textbf{and} \(\idxiter < \idxiter_{\max}\)}
    \State \(\residual_{\idxiter-1} \gets (\labels - \vmu) - \shat{\kernmat} \qoimean_{\idxiter-1}\)
    \State \(\action_{\idxiter} \gets \shat{\precond}^{-1} \residual_{\idxiter-1} -
    \frac{(\shat{\precond}^{-1}\residual_{\idxiter-1})^\top \shat{\kernmat} \action_{\idxiter-1}}{\action_{\idxiter-1}^\top\shat{\kernmat} \action_{\idxiter-1}} \action_{\idxiter-1}\)
    \State \(\qoimean_\idxiter \gets \qoimean_{\idxiter-1} + \frac{(\shat{\precond}^{-1}\residual_{\idxiter-1})^\top \residual_{\idxiter-1}}{\action_{\idxiter}^\top\shat{\kernmat} \action_{\idxiter}} \action_{\idxiter}\)
    \EndWhile
    \State \Return \(\qoimean_\idxiter\)
    \EndProcedure
\end{algorithmic}

\end{algorithm}

\begin{restatable}[Preconditioned Conjugate Gradient Method]{theorem}{thmconjugategradients}
	\label{thm:itergp-cg}
	Let \(\shat{\precond} \in \Rnn\) be a symmetric positive definite preconditioner. Assume the actions of \Cref{alg:itergp} are given by
	\begin{equation}
		\label{eqn:cg-actions}
		\begin{aligned}
			\action_1^\txtCG          & = \shat{\precond}^{-1}\residual_0              \\
			\action_{\idxiter}^\txtCG & =\shat{\precond}^{-1} \residual_{\idxiter-1} -
			\frac{(\shat{\precond}^{-1}\residual_{\idxiter-1})^\top \shat{\kernmat} \action_{\idxiter-1}}{\action_{\idxiter-1}^\top\shat{\kernmat} \action_{\idxiter-1}} \action_{\idxiter-1} 
		\end{aligned}
	\end{equation}
	the preconditioned conjugate gradient method. Then \Cref{alg:itergp}
	recovers preconditioned CG initialized at \(\qoimean_0^{\txtCG} = \vzero\), i.e. it holds for
	\(\idxiter \in \{1,   \dots,  n\}\) that
	\begin{align}
		\action_\idxiter       & = \searchdir_\idxiter = \action_\idxiter^\txtCG \label{eqn:action-equivalence-cg} \\
		\qoimean_\idxiter      & = \qoimean_\idxiter^{\txtCG} \label{eqn:qoi-equivalence-cg}                       \\
		\residual_{\idxiter-1} & = \residual_{\idxiter-1}^\txtCG. \label{eqn:residual-equivalence-cg}
	\end{align}
\end{restatable}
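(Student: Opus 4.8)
The plan is to establish the three identities \eqref{eqn:action-equivalence-cg}--\eqref{eqn:residual-equivalence-cg} simultaneously by induction on $\idxiter$, leaning on two structural facts already available: \Cref{alg:itergp} with linearly independent actions is a conjugate direction method (\Cref{prop:conjugate-direction-method}), and $\invapprox_\idxiter\shat{\kernmat}$ is the $\shat{\kernmat}$-orthogonal projection onto $\linspan{\mActions_\idxiter} = \linspan{\searchdir_1, \dots, \searchdir_\idxiter}$ (\Cref{lem:geometric-properties}), which in particular yields the Euclidean orthogonality $\action_j^\top\residual_\idxiter = 0$ for $j \le \idxiter$. For the base case $\idxiter=1$ both methods set $\qoimean_0 = \vzero$, so $\residual_0 = \labels - \vmu = \residual_0^\txtCG$; since $\invapprox_0 = \mZero$ the search direction is $\searchdir_1 = (\mI - \invapprox_0\shat{\kernmat})\action_1 = \action_1 = \shat{\precond}^{-1}\residual_0 = \action_1^\txtCG$, and matching the scalar step sizes ($\observ_1 = \action_1^\top\residual_0 = (\shat{\precond}^{-1}\residual_0)^\top\residual_0$, $\searchdirsqnorm_1 = \action_1^\top\shat{\kernmat}\action_1$) gives $\qoimean_1 = \qoimean_1^\txtCG$.

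For the induction step I would first note that the residual identity is immediate: since $\qoimean_\idxiter = \qoimean_\idxiter^\txtCG$ by hypothesis and both algorithms compute $\residual_\idxiter = (\labels - \vmu) - \shat{\kernmat}\qoimean_\idxiter$, we get $\residual_\idxiter = \residual_\idxiter^\txtCG$. Because \Cref{alg:itergp} is handed exactly the recurrence \eqref{eqn:cg-actions}, which only references $\residual_\idxiter$ and $\action_\idxiter = \searchdir_\idxiter$ (equal to their CG counterparts by hypothesis), the \emph{input} action satisfies $\action_{\idxiter+1} = \action_{\idxiter+1}^\txtCG$. The real content is then to show the projection step leaves this action unchanged, i.e. $\searchdir_{\idxiter+1} = (\mI - \invapprox_\idxiter\shat{\kernmat})\action_{\idxiter+1} = \action_{\idxiter+1}$, equivalently $\invapprox_\idxiter\shat{\kernmat}\action_{\idxiter+1} = \vzero$.

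This is where the main obstacle lies. Writing $\action_{\idxiter+1}^\txtCG = \shat{\precond}^{-1}\residual_\idxiter - \beta_\idxiter\searchdir_\idxiter$ and using $\invapprox_\idxiter\shat{\kernmat}\searchdir_\idxiter = \searchdir_\idxiter$ (as $\searchdir_\idxiter \in \linspan{\mActions_\idxiter}$), the claim reduces to $\invapprox_\idxiter\shat{\kernmat}\shat{\precond}^{-1}\residual_\idxiter = \beta_\idxiter\searchdir_\idxiter$. Since the $\searchdir_j$ are $\shat{\kernmat}$-conjugate, the projection on the left expands as $\sum_{j \le \idxiter}\frac{\searchdir_j^\top\shat{\kernmat}\shat{\precond}^{-1}\residual_\idxiter}{\searchdir_j^\top\shat{\kernmat}\searchdir_j}\searchdir_j$; its $j=\idxiter$ coefficient is exactly $\beta_\idxiter$ by the definition of the CG step, so everything hinges on the vanishing of the coefficients $\searchdir_j^\top\shat{\kernmat}\shat{\precond}^{-1}\residual_\idxiter$ for $j < \idxiter$. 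I would dispatch this via the key lemma that the residuals are $\shat{\precond}^{-1}$-orthogonal, $\residual_m^\top\shat{\precond}^{-1}\residual_\idxiter = 0$ for $m < \idxiter$, which follows from (i) $\residual_\idxiter$ being Euclidean-orthogonal to $\linspan{\mActions_\idxiter}$ and (ii) the recurrence \eqref{eqn:cg-actions} placing $\shat{\precond}^{-1}\residual_m = \action_{m+1} + \beta_m\action_m \in \linspan{\mActions_\idxiter}$ for $m < \idxiter$. Telescoping $\shat{\kernmat}\searchdir_j \propto \residual_{j-1} - \residual_j$ then forces $\searchdir_j^\top\shat{\kernmat}\shat{\precond}^{-1}\residual_\idxiter = 0$ for $j < \idxiter$, completing the action equivalence $\searchdir_{\idxiter+1} = \action_{\idxiter+1} = \action_{\idxiter+1}^\txtCG$.

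Finally, with $\searchdir_{\idxiter+1} = \action_{\idxiter+1}^\txtCG$ established, I would match the two weight updates: the denominators agree since both equal $\searchdir_{\idxiter+1}^\top\shat{\kernmat}\searchdir_{\idxiter+1}$, and the numerators agree because $\observ_{\idxiter+1} = \action_{\idxiter+1}^\top\residual_\idxiter = (\shat{\precond}^{-1}\residual_\idxiter)^\top\residual_\idxiter$, using $\searchdir_\idxiter^\top\residual_\idxiter = 0$ from fact (i); hence $\qoimean_{\idxiter+1} = \qoimean_{\idxiter+1}^\txtCG$, which reestablishes the residual identity at the next index and closes the induction. The one caveat to flag is non-degeneracy: the telescoping scalar involves $\observ_j^{-1}$, but $\observ_j = (\shat{\precond}^{-1}\residual_{j-1})^\top\residual_{j-1}$ vanishes only when $\residual_{j-1} = \vzero$ (as $\shat{\precond}$ is SPD), i.e. once CG has already converged, so the argument applies throughout the run up to convergence.
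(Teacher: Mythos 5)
Your proposal is correct and follows the same overall skeleton as the paper's proof: induct on \(\idxiter\), show the new CG action is \(\shat{\kernmat}\)-conjugate to all previous ones so that the solver's projection \((\mI - \invapprox_\idxiter\shat{\kernmat})\) leaves it unchanged (hence \(\searchdir_{\idxiter+1} = \action_{\idxiter+1} = \action_{\idxiter+1}^\txtCG\)), and then match the two scalar step sizes via \(\observ_{\idxiter+1} = (\shat{\precond}^{-1}\residual_\idxiter)^\top\residual_\idxiter\) and \(\searchdirsqnorm_{\idxiter+1} = \action_{\idxiter+1}^\top\shat{\kernmat}\action_{\idxiter+1}\). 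Where you diverge is in the sub-argument killing the cross terms \(\searchdir_j^\top\shat{\kernmat}\action_{\idxiter+1}\) for \(j < \idxiter\): the paper imports the standard preconditioned-CG identities (residual orthogonality and the Krylov span equality \(\linspan{\action_1,\dots,\action_\idxiter} = \linspan{\shat{\precond}^{-1}\residual_0,\dots,\shat{\precond}^{-1}\residual_{\idxiter-1}}\)) as ``properties of CG'' and uses the inclusion \(\shat{\precond}^{-1}\shat{\kernmat}\action_j \in \linspan{\shat{\precond}^{-1}\residual_0,\dots,\shat{\precond}^{-1}\residual_j}\), whereas you derive the needed \(\shat{\precond}^{-1}\)-orthogonality of residuals from IterGP's own structure --- the Euclidean orthogonality \(\mActions_\idxiter^\top\residual_\idxiter = \vzero\) coming from the projection property in \Cref{lem:geometric-properties} plus the action recurrence --- and then telescope \(\shat{\kernmat}\searchdir_j \propto \residual_{j-1} - \residual_j\). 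Your route is somewhat more self-contained, since it does not lean on unproven facts about the CG iterates, at the mild cost of needing \(\observ_j \neq 0\) for the telescoping step, which you correctly flag as holding up to convergence; both arguments implicitly require the actions to remain linearly independent so that \(\invapprox_\idxiter\) has its batch projection form, which is the same non-degeneracy condition.
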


\begin{proof}
	First note that by assumption \(\action_\idxiter = \action_\idxiter^\txtCG\) for all \(\idxiter\).
	We prove the remaining claims by induction. For the base case we have by assumption \(\searchdir_0
	= \qoicov_0\shat{\kernmat} \action_0 = \action_0 = \action_0^\txtCG\) and \(\qoimean_0 = \vzero =
	\qoimean_0^\txtCG\). Now for the induction step \(\idxiter \to \idxiter + 1\) assume the hypotheses
	\eqref{eqn:action-equivalence-cg}, \eqref{eqn:qoi-equivalence-cg} and \eqref{eqn:residual-equivalence-cg} hold \(\forall j \leq
	\idxiter\). Using the properties of CG it holds for \(j' < \idxiter\) that
	\begin{align}
		\action_\idxiter^\top \shat{\kernmat} \action_{j'}                                                                                                                                   & = 0\label{eqn:proof-cg-ih-conjugacy}                                                                                    \\
		(\shat{\precond}^{-1}\residual_\idxiter)^\top \action_{j'}                                                                                                                           & = 0\label{eqn:proof-cg-ih-resid-action-orth}                                                                            \\
		(\shat{\precond}^{-1}\residual_\idxiter)^\top \residual_{j'}                                                                                                                         & = 0\label{eqn:proof-cg-ih-resid-ortho}                                                                                  \\
		\langle \action_1, \dots, \action_\idxiter \rangle = \langle \residual_0, \shat{\precond}^{-1} \shat{\kernmat} \residual_0, \dots, (\shat{\precond}^{-1}\shat{\kernmat})^{\idxiter -1}\residual_0 \rangle & = \langle \shat{\precond}^{-1}\residual_0, \dots, \shat{\precond}^{-1}\residual_{\idxiter-1}\rangle\label{eqn:proof-cg-ih-span-equiv}
	\end{align}
	We now first show \(\shat{\kernmat}\)-conjugacy of the actions in iteration \(\idxiter +1\). We have for
	\(j \leq \idxiter\) that
	\begin{align*}
		\action_{\idxiter+1}^\top \shat{\kernmat} \action_{j} & = \big( \shat{\precond}^{-1} \residual_{\idxiter} -
		\frac{(\shat{\precond}^{-1}\residual_{\idxiter})^\top \shat{\kernmat} \action_{\idxiter}}{\action_{\idxiter}^\top\shat{\kernmat} \action_{\idxiter}} \action_{\idxiter}\big)^\top\shat{\kernmat} \action_j                                                                                                             \\
		                                               & = (\shat{\precond}^{-1}\residual_{\idxiter})^\top \shat{\kernmat} \action_j - \frac{(\shat{\precond}^{-1}\residual_{\idxiter})^\top \shat{\kernmat} \action_{\idxiter}}{\action_{\idxiter}^\top\shat{\kernmat} \action_{\idxiter}} \action_{\idxiter}^\top\shat{\kernmat} \action_j
	\end{align*}
	Now if \(j = \idxiter\), clearly \(\action_{\idxiter+1}^\top \shat{\kernmat} \action_{j}
	=
	\action_{\idxiter+1}^\top \shat{\kernmat} \action_{\idxiter} = 0\). If \(j < \idxiter\), we have
	using \eqref{eqn:proof-cg-ih-span-equiv}, that
	\begin{equation}
		\label{eqn:proof-cg-precond-observ-span-inclusion}
		\shat{\precond}^{-1}\shat{\kernmat} \action_j \in \langle \shat{\precond}^{-1}\shat{\kernmat} \residual_0,
		\dots, (\shat{\precond}^{-1}\shat{\kernmat})^{j}\residual_0 \rangle
		\subset \langle \shat{\precond}^{-1}\residual_0, \dots,
		\shat{\precond}^{-1}\residual_{j}\rangle.
	\end{equation}
	Therefore we obtain for \(j < \idxiter\), that
	\begin{equation}
		\label{eqn:proof-cg-istep-residual-action-conjugacy}
		\action_{\idxiter+1}^\top \shat{\kernmat} \action_{j} \overset{\eqref{eqn:proof-cg-ih-conjugacy} }{=}
		\residual_\idxiter^\top \shat{\precond}^{-1} \shat{\kernmat} \action_j \overset{\eqref{eqn:proof-cg-precond-observ-span-inclusion}}{=}
		\residual_\idxiter^\top \bigg(\sum_{\ell=1}^{j} \gamma_\ell \shat{\precond}^{-1} \residual_\ell\bigg) \overset{\eqref{eqn:proof-cg-ih-resid-ortho}}{=} 0.
	\end{equation}
	Thus in combination we have
	\begin{equation}
		\label{eqn:proof-cg-istep-conjugacy}
		\forall j \in \{1, \dots, \idxiter\}: \quad \action_{\idxiter+1}^\top \shat{\kernmat}
		\action_{j} = 0.
	\end{equation}
	Now for the search direction we have
	\begin{equation}
		\label{eqn:proof-cg-istep-searchdir}
		\begin{aligned}
			\searchdir_{\idxiter + 1} & = \qoicov_\idxiter \shat{\kernmat} \action_{\idxiter + 1} = \bigg(\qoicov_0 - \sum_{j=1}^{\idxiter} \frac{\searchdir_j \searchdir_j^\top}{\searchdirsqnorm_j} \bigg)\shat{\kernmat} \action_{\idxiter+1}                                                                                                                             \\
			                          & =\action_{\idxiter +1} -  \sum_{j=1}^{\idxiter} \frac{ \searchdir_j^\top\shat{\kernmat}\action_{\idxiter+1}}{\searchdirsqnorm_j}\searchdir_j \overset{\eqref{eqn:action-equivalence-cg}}{=} \action_{\idxiter +1} -  \sum_{j=1}^{\idxiter} \frac{ \action_j^\top\shat{\kernmat}\action_{\idxiter+1}}{\searchdirsqnorm_j}\searchdir_j \\
			                          & \overset{\eqref{eqn:proof-cg-istep-conjugacy}}{=}\action_{\idxiter +1}.
		\end{aligned}
	\end{equation}
	Further, we have for the solution estimate, that \(\qoimean_{\idxiter+1} = \qoimean_\idxiter +
	\searchdir_{\idxiter+1}\frac{\observ_{\idxiter+1}}{\searchdirsqnorm_{\idxiter+1}}\). It holds that
	\begin{align*}
		\observ_{\idxiter+1}          & = \action_{\idxiter+1}^\top \residual_{\idxiter} = \big( \shat{\precond}^{-1} \residual_{\idxiter} -
		\frac{(\shat{\precond}^{-1}\residual_{\idxiter})^\top \shat{\kernmat} \action_{\idxiter}}{\action_{\idxiter}^\top\shat{\kernmat} \action_{\idxiter}} \action_{\idxiter}\big)^\top \residual_\idxiter                                                                                                                                                                                                                      \\
		                              & = (\shat{\precond}^{-1}\residual_{\idxiter})^\top \residual_{\idxiter} - \sum_{j=}^{\idxiter}c_j(\shat{\precond}^{-1}\residual_{j-1})^\top \residual_\idxiter \overset{\eqref{eqn:proof-cg-ih-resid-ortho}}{=} (\shat{\precond}^{-1}\residual_{\idxiter})^\top \residual_\idxiter
		\intertext{as well as}
		\searchdirsqnorm_{\idxiter+1} & = \action_{\idxiter+1}^\top \shat{\kernmat} \qoicov_{\idxiter} \shat{\kernmat} \action_{\idxiter+1} = \searchdir_{\idxiter+1}^\top \shat{\kernmat} \action_{\idxiter+1} \overset{\eqref{eqn:proof-cg-istep-searchdir}}{=} \action_{\idxiter+1}^\top\shat{\kernmat} \action_{\idxiter+1}
	\end{align*}
	Combining the above and recalling \Cref{alg:cg}, we obtain
	\begin{equation*}
		\qoimean_{\idxiter+1} = \qoimean_{\idxiter} + \searchdir_{\idxiter+1}\frac{\observ_{\idxiter+1}}{\searchdirsqnorm_{\idxiter+1}}
		= \qoimean_{\idxiter} + \searchdir_{\idxiter+1}\frac{(\shat{\precond}^{-1}\residual_\idxiter)^\top \residual_\idxiter}{\action_{\idxiter+1}^\top\shat{\kernmat} \action_{\idxiter+1}} =
		\qoimean_{\idxiter+1}^\txtCG.
	\end{equation*}
	Finally, the residual is computed identically in \Cref{alg:itergp} as in
	\Cref{alg:cg}, giving
	\begin{equation*}
		\residual_\idxiter = (\labels - \vmu) - \shat{\kernmat} \qoimean_\idxiter = (\labels - \vmu) - \shat{\kernmat}
		\qoimean_{\idxiter}^\txtCG = \residual_\idxiter^\txtCG.
	\end{equation*}
	This proves the claims.
\end{proof}

\begin{corollary}[Preconditioned Gradient Actions as CG Actions]
	\label{cor:itergp-cg-gradient-actions}
	Choosing actions
	\begin{equation}
		\action_{\idxiter} = \shat{\precond}^{-1} \residual_{\idxiter-1}
	\end{equation}
	in \Cref{thm:itergp-cg} instead also reproduces the preconditioned conjugate gradient method,
	i.e. it holds for      \(\idxiter \in \{1,   \dots,  n\}\) that
	\begin{align}
		\searchdir_\idxiter    & = \action_\idxiter^\txtCG        \\
		\qoimean_\idxiter      & = \qoimean_\idxiter^{\txtCG}     \\
		\residual_{\idxiter-1} & = \residual_{\idxiter-1}^\txtCG.
	\end{align}
\end{corollary}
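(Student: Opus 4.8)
The plan is to mirror the inductive structure of the proof of \Cref{thm:itergp-cg} and to show that IterGP's built-in $\shat{\kernmat}$-orthogonalization of the search direction automatically reconstructs the single explicit conjugation term that appears in the CG action \eqref{eqn:cg-actions}. The only difference between this corollary and \Cref{thm:itergp-cg} is that here the input actions are the \emph{raw} preconditioned residuals $\action_\idxiter = \shat{\precond}^{-1}\residual_{\idxiter-1}$ rather than the already $\shat{\kernmat}$-conjugated directions $\action_\idxiter^\txtCG$. I would therefore argue that the projection $\searchdir_\idxiter = \qoicov_{\idxiter-1}\shat{\kernmat}\action_\idxiter = (\mI - \invapprox_{\idxiter-1}\shat{\kernmat})\action_\idxiter$ recovers exactly $\action_\idxiter^\txtCG$, after which the solution and residual updates coincide with those of CG by the same computation as in the theorem.

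I would proceed by induction, maintaining the hypotheses $\searchdir_j = \action_j^\txtCG$, $\qoimean_j = \qoimean_j^{\txtCG}$, and $\residual_{j-1} = \residual_{j-1}^\txtCG$ for all $j \leq \idxiter$, together with the CG orthogonality relations \eqref{eqn:proof-cg-ih-conjugacy}--\eqref{eqn:proof-cg-ih-span-equiv} that these induce. Since the residual and solution recursions in \Cref{alg:itergp} depend on prior iterations only through $\qoimean_\idxiter$, the identical residual $\residual_\idxiter = \residual_\idxiter^\txtCG$ produces the same new action $\action_{\idxiter+1} = \shat{\precond}^{-1}\residual_\idxiter$. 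The crux is then to evaluate
\begin{equation*}
	\searchdir_{\idxiter+1} = (\mI - \invapprox_\idxiter\shat{\kernmat})\action_{\idxiter+1} = \action_{\idxiter+1} - \sum_{j=1}^{\idxiter}\frac{\searchdir_j^\top\shat{\kernmat}\action_{\idxiter+1}}{\searchdirsqnorm_j}\searchdir_j.
\end{equation*}

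The key step, which I expect to be the main obstacle, is to show that every term with $j < \idxiter$ in this sum vanishes, so that only the $j = \idxiter$ term -- precisely the CG conjugation term -- survives. Using $\searchdir_j = \action_j^\txtCG$ and the symmetry of $\shat{\precond}^{-1}$ and $\shat{\kernmat}$, each coefficient equals $\residual_\idxiter^\top\shat{\precond}^{-1}\shat{\kernmat}\action_j^\txtCG$; by the span inclusion \eqref{eqn:proof-cg-precond-observ-span-inclusion} we have $\shat{\precond}^{-1}\shat{\kernmat}\action_j^\txtCG \in \langle \shat{\precond}^{-1}\residual_0, \dots, \shat{\precond}^{-1}\residual_j\rangle$, and by the residual orthogonality \eqref{eqn:proof-cg-ih-resid-ortho} the residual $\residual_\idxiter$ is $\shat{\precond}^{-1}$-orthogonal to every $\residual_\ell$ with $\ell \leq j < \idxiter$, so the coefficient is zero -- this is exactly the argument already carried out in \eqref{eqn:proof-cg-istep-residual-action-conjugacy}. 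Substituting $\searchdirsqnorm_\idxiter = \searchdir_\idxiter^\top\shat{\kernmat}\searchdir_\idxiter = (\action_\idxiter^\txtCG)^\top\shat{\kernmat}\action_\idxiter^\txtCG$ then identifies the surviving $j = \idxiter$ term with the conjugation term of \eqref{eqn:cg-actions}, giving $\searchdir_{\idxiter+1} = \action_{\idxiter+1}^\txtCG$. Finally, because the observation $\observ_{\idxiter+1} = \action_{\idxiter+1}^\top\residual_\idxiter = (\shat{\precond}^{-1}\residual_\idxiter)^\top\residual_\idxiter$ and the normalization $\searchdirsqnorm_{\idxiter+1}$ coincide with their counterparts in the proof of \Cref{thm:itergp-cg}, the solution and residual updates propagate $\qoimean_{\idxiter+1} = \qoimean_{\idxiter+1}^\txtCG$ and $\residual_{\idxiter+1} = \residual_{\idxiter+1}^\txtCG$ verbatim, closing the induction.
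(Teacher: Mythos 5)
Your proposal is correct and follows essentially the same route as the paper's proof: an induction establishing $\searchdir_\idxiter = \action_\idxiter^\txtCG$, where the projection $(\mI - \invapprox_\idxiter\shat{\kernmat})$ applied to the raw preconditioned residual kills all conjugation terms with $j < \idxiter$ via the span inclusion \eqref{eqn:proof-cg-precond-observ-span-inclusion} and residual orthogonality \eqref{eqn:proof-cg-ih-resid-ortho}, leaving exactly the single CG conjugation term. The paper writes the projection in batch form $\mSearchdir_\idxiter(\mSearchdir_\idxiter^\top\shat{\kernmat}\mSearchdir_\idxiter)^{-1}\mSearchdir_\idxiter^\top\shat{\kernmat}$ rather than as your explicit sum, but this is the same object by conjugacy of the search directions, and the remaining propagation of $\qoimean$ and $\residual$ is deferred to the proof of \Cref{thm:itergp-cg} in both cases.
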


\begin{proof}
	It suffices to show that \(\searchdir_\idxiter = \action_\idxiter^\txtCG\). The rest of the
	argument is then identical to the proof of \Cref{thm:itergp-cg}. We prove the claim by
	induction. For the base case by assumption \(\action_1 =
	\shat{\precond}^{-1}\residual_{0} = \action_1^{\txtCG}\). Now for the induction
	step \(\idxiter \to \idxiter +1\), assume that \(\searchdir_j = \action_j\) for all \(j \leq i\),
	then
	\begin{align*}
		\searchdir_{\idxiter+1} & = \qoicov_{\idxiter} \shat{\kernmat} \shat{\precond}^{-1}\residual_{\idxiter}                                                                                                                                                                                                          \\
		                        & = (\mI - \invapprox_\idxiter \shat{\kernmat})\shat{\precond}^{-1}\residual_\idxiter                                                                                                                                                                                                    \\
		                        & = \shat{\precond}^{-1}\residual_\idxiter - \mSearchdir_\idxiter (\mSearchdir_\idxiter^\top \shat{\kernmat} \mSearchdir_\idxiter)^{-1} \mSearchdir_{\idxiter}^\top \shat{\kernmat} \shat{\precond}^{-1}\residual_\idxiter                         & \text{By \cref{eqn:invapprox-batch-form}.} \\
		                        &\overset{\mathrm{IH}}{=} \shat{\precond}^{-1}\residual_\idxiter - \mActions^\txtCG_\idxiter ((\mActions^\txtCG_\idxiter)^\top \shat{\kernmat} \mActions^\txtCG_\idxiter)^{-1} (\mActions^\txtCG_{\idxiter})^\top \shat{\kernmat} \shat{\precond}^{-1}\residual_\idxiter               \\
		\intertext{Now by the same argument as in \cref{eqn:proof-cg-istep-residual-action-conjugacy} in the proof of \Cref{thm:itergp-cg} we
			have for all \(j < \idxiter\) that \(\residual_\idxiter^\top \shat{\precond}^{-1}
			\shat{\kernmat} \action_j^\txtCG = 0\). Therefore}
		                        & = \shat{\precond}^{-1}\residual_\idxiter - \action^\txtCG_\idxiter ((\action^\txtCG_\idxiter)^\top \shat{\kernmat} \action^\txtCG_\idxiter)^{-1} (\action^\txtCG_{\idxiter})^\top \shat{\kernmat} \shat{\precond}^{-1}\residual_\idxiter                                                      \\
		                        & = \action_{\idxiter+1}^\txtCG                                                                                                                                                                                                      & \text{By \cref{eqn:cg-actions}.}
	\end{align*}
	This proves the claim.
\end{proof}

\begin{corollary}[Deflated Conjugate Gradient Method]
	\label{cor:itergp-deflated-cg}
	Let the first \(0 < \ell < n\) actions \((\action_\idxiter)_{\idxiter=1}^\ell\) of \Cref{alg:itergp} be linearly independent and the
	remaining ones be given by \(\action_\idxiter = \shat\precond^{-1}\residual_\idxiter\), where \(\shat\precond \approx \shat\kernmat\) is a preconditioner. Then \Cref{alg:itergp}
	is equivalent to the preconditioned deflated CG algorithm \cite[Alg.~3.6]{Saad2000} with deflation subspace
	\(\linspan{\mActions_\ell}\).
\end{corollary}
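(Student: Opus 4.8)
The plan is to split a run of \Cref{alg:itergp} into two phases—an initial deflation phase ($\idxiter \le \ell$) and a conjugate-gradient phase ($\idxiter > \ell$)—and match each against the corresponding stage of preconditioned deflated CG \cite[Alg.~3.6]{Saad2000} with deflation subspace $\mathcal{W} = \linspan{\mActions_\ell}$. Throughout I would lean on \Cref{prop:conjugate-direction-method} (the search directions are $\shat\kernmat$-conjugate), on the batch form \cref{eqn:invapprox-batch-form}, and on the fact that $\invapprox_\idxiter\shat\kernmat$ is the $\shat\kernmat$-orthogonal projector onto $\linspan{\mSearchdir_\idxiter}$ (\Cref{lem:geometric-properties}).

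\textbf{Phase 1 (deflation initialization).} Since $\searchdir_\idxiter = (\mI - \invapprox_{\idxiter-1}\shat\kernmat)\action_\idxiter$ differs from $\action_\idxiter$ only by a vector in $\linspan{\searchdir_1,\dots,\searchdir_{\idxiter-1}}$, an easy induction on the linearly independent first actions gives $\linspan{\mSearchdir_\ell} = \linspan{\mActions_\ell} = \mathcal{W}$. Hence by \cref{eqn:invapprox-batch-form}, $\invapprox_\ell = \mActions_\ell(\mActions_\ell^\top\shat\kernmat\mActions_\ell)^{-1}\mActions_\ell^\top$ and
\begin{equation*}
  \qoimean_\ell = \invapprox_\ell(\labels - \vmu) = \mActions_\ell(\mActions_\ell^\top\shat\kernmat\mActions_\ell)^{-1}\mActions_\ell^\top(\labels - \vmu),
\end{equation*}
which is exactly the Galerkin iterate deflated CG uses to initialize (projection onto $\mathcal{W}$ from $\qoimean_0 = \vzero$). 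Because $\invapprox_\ell\shat\kernmat$ is the $\shat\kernmat$-orthogonal projector onto $\mathcal{W}$, we get $\mActions_\ell^\top\residual_\ell = \mActions_\ell^\top\shat\kernmat(\qoi - \qoimean_\ell) = 0$, i.e. the deflation condition $\residual_\ell \perp \mathcal{W}$ required to start deflated CG.

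\textbf{Phase 2 (deflated CG iterations).} Here $\action_\idxiter = \shat\precond^{-1}\residual_{\idxiter-1}$. Writing $\invapprox_{\idxiter-1} = \sum_{j=1}^{\idxiter-1}\searchdirsqnorm_j^{-1}\searchdir_j\searchdir_j^\top$ (valid by conjugacy), the search direction expands as
\begin{equation*}
  \searchdir_\idxiter = (\mI - \invapprox_{\idxiter-1}\shat\kernmat)\shat\precond^{-1}\residual_{\idxiter-1} = \shat\precond^{-1}\residual_{\idxiter-1} - \sum_{j=1}^{\idxiter-1}\frac{\searchdir_j^\top\shat\kernmat\shat\precond^{-1}\residual_{\idxiter-1}}{\searchdirsqnorm_j}\searchdir_j.
\end{equation*}
I would then re-establish, over the deflated Krylov space $\mathcal{W} \oplus \gK_{\idxiter-\ell}(\shat\precond^{-1}\shat\kernmat, \shat\precond^{-1}\residual_\ell)$, the same conjugacy and preconditioned-residual-orthogonality relations used in the proof of \Cref{thm:itergp-cg} to show the coefficient $\searchdir_j^\top\shat\kernmat\shat\precond^{-1}\residual_{\idxiter-1}$ vanishes for every $\ell < j < \idxiter-1$. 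Only the $j = \idxiter-1$ term (the CG $\beta$-correction) and the block $j \le \ell$—which collapses via \cref{eqn:invapprox-batch-form} to the deflation correction $\mActions_\ell(\mActions_\ell^\top\shat\kernmat\mActions_\ell)^{-1}\mActions_\ell^\top\shat\kernmat\shat\precond^{-1}\residual_{\idxiter-1}$—survive, which is precisely the deflated-CG search-direction recurrence. The step length $\observ_\idxiter/\searchdirsqnorm_\idxiter$ matches the deflated CG $\alpha$ by the identical computation as in \Cref{thm:itergp-cg}, so $\qoimean_\idxiter = \qoimean_\idxiter^{\mathrm{DCG}}$ and $\residual_\idxiter = \residual_\idxiter^{\mathrm{DCG}}$ follow by induction.

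\textbf{Main obstacle.} Phase 1 and the step-length matching are immediate from the already-proven results; the crux is Phase 2, namely verifying that the full-history $\shat\kernmat$-conjugation performed by \Cref{alg:itergp} collapses to the short deflated recurrence. Concretely one must carry the Krylov orthogonality relations (analogues of \eqref{eqn:proof-cg-ih-conjugacy}--\eqref{eqn:proof-cg-ih-span-equiv}) through in the presence of the deflation directions $\searchdir_1,\dots,\searchdir_\ell$, showing both that residuals remain orthogonal to $\mathcal{W}$ and that the new preconditioned residual is $\shat\kernmat$-conjugate to all CG directions except the most recent. A subsidiary, purely bookkeeping step is to confirm that the exact organization of \cite[Alg.~3.6]{Saad2000}—where the deflation projection is applied to the search direction rather than the residual—lines up with the recurrence above; any apparent discrepancy is merely a reorganization of the same conjugate-direction iterates, which are unique given the expanding subspace.
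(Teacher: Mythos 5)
Your proposal is correct and follows essentially the same route as the paper's proof: establish the deflation condition \(\mActions_\ell^\top \residual_\ell = \vzero\) after the first \(\ell\) steps, then match IterGP's search directions \(\searchdir_\idxiter = (\mI - \invapprox_{\idxiter-1}\shat{\kernmat})\action_\idxiter\) against the search-direction recurrence of \citet[Alg.~3.6]{Saad2000} for \(\idxiter > \ell\). The only notable difference is that where you propose to re-derive the Krylov orthogonality relations inside the deflated subspace, the paper shortcuts this by invoking \Cref{cor:itergp-cg-gradient-actions} for the post-deflation block \(\invapprox_{\ell+1:(\idxiter-1)}\) and observing that the remaining term \(\invapprox_{\ell}\shat{\kernmat}\action_\idxiter\) is exactly Saad's deflation correction, so the two pieces recombine to \(\invapprox_{\idxiter-1}\shat{\kernmat}\action_\idxiter\).
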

\begin{proof}
	By the form of preconditioned deflated CG given in Algorithm 3.6 of \citet{Saad2000} and \Cref{cor:itergp-cg-gradient-actions}, it suffices to show that the residual \(\residual_{\ell}\) satisfies \(\mS_\ell^\top \residual_\ell = \vzero\) and that for all \(\idxiter > \ell\), it holds that
	\begin{equation*}
		\action_{\idxiter}^{\mathrm{defCG}} = \searchdir_\idxiter = (\mI - \invapprox_{\idxiter-1} \shat{\kernmat}) \action_\idxiter.
	\end{equation*}
	Now it holds by \Cref{lem:relation-precision-matrix-qoimean} and \cref{eqn:invapprox-batch-form}, that
	\begin{equation*}
		\mS_\ell^\top \residual_\ell = \mS_\ell^\top(\mI - \shat{\kernmat}\invapprox_{\ell})(\labels - \vmu)= \underbracket[0.14ex]{\mS_\ell^\top(\mI - \shat{\kernmat} \mActions_\ell(\mActions_\ell^\top \shat{\kernmat} \mActions_\ell)^{-1} \mActions_\ell^\top)}_{=\mZero}(\labels - \vmu) = \vzero.
	\end{equation*}
	This proves the first claim. Now, by \citet[Alg.~3.6]{Saad2000}, the search directions \((\action_\idxiter^{\mathrm{defCG}})_{\idxiter=\ell + 1}^n\) of preconditioned deflated CG are given by
	\begin{align*}
		\action_{\idxiter}^{\mathrm{defCG}} &= \action_{\idxiter}^{\txtCG} - \mActions_\ell(\mActions_\ell^\top \shat{\kernmat} \mActions_\ell)^{-1}\mActions_\ell^\top \shat{\kernmat} \shat\precond^{-1}\residual_\idxiter\\
		&= (\mI - \invapprox_{\ell+1:(\idxiter-1)}\shat{\kernmat}) \action_\idxiter - \mActions_\ell(\mActions_\ell^\top \shat{\kernmat} \mActions_\ell)^{-1}\mActions_\ell^\top \shat{\kernmat} \shat\precond^{-1}\residual_\idxiter & \text{\Cref{cor:itergp-cg-gradient-actions}}\\
		&= (\mI - \invapprox_{\ell+1:(\idxiter-1)}\shat{\kernmat}) \action_\idxiter - \invapprox_{\ell} \shat{\kernmat} \action_\idxiter\\
		&= (\mI - (\invapprox_{\ell+1:(\idxiter-1)} - \invapprox_{\ell})\shat{\kernmat})\action_\idxiter\\
		&= (\mI - \invapprox_{\idxiter-1}\shat\kernmat)\action_\idxiter\\
		&= \searchdir_\idxiter
	\end{align*}
	This proves the claim.
\end{proof}

\begin{remark}[Preconditioning and \Cref{alg:itergp}]

Iterative methods typically have convergence rates depending on the condition number of the system
matrix. One successful strategy in practice to accelerate convergence is to use a preconditioner
\(\shat{\precond} \approx \shat{\kernmat}\) \cite{Nocedal2006}. A
preconditioner needs to be cheap to compute and
allow efficient matrix-vector multiplication \(\vv \mapsto \shat\precond^{-1}\vv\). Now,
\Cref{alg:itergp} implicitly constructs and applies a \emph{deflation-based preconditioner}, which are
defined via
a deflation subspace to be projected out \cite{Frank2001}. In \Cref{alg:itergp} this
is precisely the
already explored space \(\linspan{\mActions_\idxiter} = \linspan{\mSearchdir_\idxiter}\) spanned by the
actions. Therefore, if we run a mixed strategy, meaning first choosing actions that define a
certain subspace and then choose residual actions, we recover the \emph{deflated conjugate gradient method}
\cite{Saad2000} (see \Cref{cor:itergp-deflated-cg} for a proof).
Alternatively, one can also use byproducts of the iteration of \Cref{alg:itergp} to
construct
a diagonal-plus-low-rank preconditioner of the form \(\shat{\precond} = \sigma^2 \mI + \mU
\mU^\top
\approx \shat{\kernmat}\) where \(\mU=\kernmat \mSearchdir_\idxiter
\operatorname{diag}(\eta_1, \dots, \eta_\idxiter) \in \R^{n \times \idxiter}\). Therefore, again if running a mixed
strategy, one can first construct
a preconditioner and then accelerate the convergence of subsequent CG iterations. In this sense one
can double-dip in terms of preconditioning (conjugate) gradient iterations by
combining these two techniques \emph{at essentially no overhead}.
\end{remark}

\subsection{Inducing Point Methods}

\begin{theorem}[Approximate Posterior Mean of Nyström, SoR, DTC and SVGP]
	\label{thm:itergp-nystroem}
	Let \(\inducingpoints \in \R^{n \times m}\) be a set of distinct inducing inputs such that
	\(\rank(\kernmat_{\traindata \inducingpoints}) = m \leq n\). Then the posterior mean of the Nyström variants subset of regressors (SoR) and deterministic training conditional (DTC) is identical to the one of SVGP and given by
	\begin{equation}
		\begin{aligned}
			\mu(\cdot) & = k(\cdot, \inducingpoints) (\kernmat_{\inducingpoints \traindata} \kernmat_{\traindata \inducingpoints} + \sigma^2 \kernmat_{\inducingpoints \inducingpoints})^{-1} \kernmat_{\inducingpoints \traindata} (\labels - \vmu)                    \\
			           & = q(\cdot, \traindata) \kernmat_{\traindata \inducingpoints}(\kernmat_{\inducingpoints \traindata} (q(\traindata, \traindata) + \sigma^2 \mI)\kernmat_{\traindata\inducingpoints })^{-1} \kernmat_{\inducingpoints \traindata}(\labels - \vmu)
		\end{aligned}
	\end{equation}
\end{theorem}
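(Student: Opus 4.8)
The plan is to handle the two displayed expressions separately: the first line is the textbook posterior mean shared by SoR, DTC and SVGP, so the substantive work is to verify that the second ($q$-based) line is algebraically identical to it. First I would recall the established closed forms. As shown in the unifying treatment of \citet{QuinoneroCandela2005}, the SoR and DTC approximations have the \emph{same} predictive mean, and the SVGP mean \citep{Titsias2009,Hensman2013} coincides with it; after absorbing the noise scaling into the inverse, this common mean is exactly \(\kernel(\cdot,\inducingpoints)(\sigma^2\kernmat_{\inducingpoints\inducingpoints} + \kernmat_{\inducingpoints\traindata}\kernmat_{\traindata\inducingpoints})^{-1}\kernmat_{\inducingpoints\traindata}(\labels - \vmu)\), i.e. the first line of the claim. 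This disposes of the first equality and the asserted identity of all three means.

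Next I would prove the first line equals the second. Substituting \(q(\cdot,\traindata) = \kernel(\cdot,\inducingpoints)\kernmat_{\inducingpoints\inducingpoints}^{-1}\kernmat_{\inducingpoints\traindata}\) and \(q(\traindata,\traindata) = \kernmat_{\traindata\inducingpoints}\kernmat_{\inducingpoints\inducingpoints}^{-1}\kernmat_{\inducingpoints\traindata}\), and abbreviating \(\mA \coloneqq \kernmat_{\inducingpoints\traindata}\kernmat_{\traindata\inducingpoints}\) and \(\mB \coloneqq \mA + \sigma^2\kernmat_{\inducingpoints\inducingpoints}\), the central Gram matrix of the second line factors as \(\kernmat_{\inducingpoints\traindata}(q(\traindata,\traindata) + \sigma^2\mI)\kernmat_{\traindata\inducingpoints} = \mA\kernmat_{\inducingpoints\inducingpoints}^{-1}\mB\) (expand, then pull \(\mA\) out on the left). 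Here the assumption \(\rank(\kernmat_{\traindata\inducingpoints}) = m\) is exactly what I rely on: since \(\kernmat_{\inducingpoints\traindata} = \kernmat_{\traindata\inducingpoints}^\top\), the matrix \(\mA = \kernmat_{\traindata\inducingpoints}^\top\kernmat_{\traindata\inducingpoints}\) is positive definite, hence invertible, while \(\kernmat_{\inducingpoints\inducingpoints}\) and \(\mB\) are invertible too. Thus the inverse Gram matrix is \(\mB^{-1}\kernmat_{\inducingpoints\inducingpoints}\mA^{-1}\), and the second line simplifies to \(\kernel(\cdot,\inducingpoints)\kernmat_{\inducingpoints\inducingpoints}^{-1}\mA\mB^{-1}\kernmat_{\inducingpoints\inducingpoints}\mA^{-1}\kernmat_{\inducingpoints\traindata}(\labels - \vmu)\), using \(\kernmat_{\inducingpoints\traindata}\kernmat_{\traindata\inducingpoints} = \mA\).

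The remaining step — and the only genuinely non-routine one — is to collapse the middle factor, i.e. to show \(\kernmat_{\inducingpoints\inducingpoints}^{-1}\mA\mB^{-1}\kernmat_{\inducingpoints\inducingpoints}\mA^{-1} = \mB^{-1}\), equivalently \(\mA\mB^{-1}\kernmat_{\inducingpoints\inducingpoints} = \kernmat_{\inducingpoints\inducingpoints}\mB^{-1}\mA\). Since these matrices do not commute, the trick is to write \(\mA = \mB - \sigma^2\kernmat_{\inducingpoints\inducingpoints}\): then \(\mA\mB^{-1}\kernmat_{\inducingpoints\inducingpoints} = \kernmat_{\inducingpoints\inducingpoints} - \sigma^2\kernmat_{\inducingpoints\inducingpoints}\mB^{-1}\kernmat_{\inducingpoints\inducingpoints}\), and symmetrically \(\kernmat_{\inducingpoints\inducingpoints}\mB^{-1}\mA = \kernmat_{\inducingpoints\inducingpoints} - \sigma^2\kernmat_{\inducingpoints\inducingpoints}\mB^{-1}\kernmat_{\inducingpoints\inducingpoints}\), so the two sides agree (equivalently, \(\mA\mB^{-1}\kernmat_{\inducingpoints\inducingpoints}\) is manifestly symmetric). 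Substituting back collapses the second line to \(\kernel(\cdot,\inducingpoints)\mB^{-1}\kernmat_{\inducingpoints\traindata}(\labels-\vmu)\), which is precisely the first line, completing the argument. The main obstacle is thus confined to the matrix algebra: spotting the factorization \(\mA\kernmat_{\inducingpoints\inducingpoints}^{-1}\mB\) and resolving the non-commutativity through \(\mA = \mB - \sigma^2\kernmat_{\inducingpoints\inducingpoints}\), both of which hinge on the rank condition that renders \(\mA\) invertible.
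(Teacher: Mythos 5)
Your proposal is correct and follows essentially the same route as the paper: both first invoke \citet{QuinoneroCandela2005} (and the SVGP--Nystr\"om equivalence) to establish the first displayed line and the identity of the three means, and then reduce the two lines to one another by matrix algebra that hinges on the rank condition making \(\kernmat_{\inducingpoints\traindata}\kernmat_{\traindata\inducingpoints}\) invertible. The only cosmetic difference is direction: the paper works forward from the first line by successively inserting \(\kernmat_{\inducingpoints\inducingpoints}^{-1}\) and \(\kernmat_{\inducingpoints\traindata}\kernmat_{\traindata\inducingpoints}\) into the inverse, whereas you work backward from the second line by factoring the Gram matrix as \(\mA\kernmat_{\inducingpoints\inducingpoints}^{-1}\mB\) and resolving the non-commutativity via \(\mA = \mB - \sigma^2\kernmat_{\inducingpoints\inducingpoints}\) --- the same identities traversed in reverse.
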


\begin{proof}
	First, note that by eqns. (16b) and (20b) of \citet{QuinoneroCandela2005} the posterior mean of SoR and
	DTC is identical and given by
	\begin{equation*}
		\mu(\cdot) = k(\cdot, \inducingpoints) (\kernmat_{\inducingpoints \traindata}
		\kernmat_{\traindata \inducingpoints} + \sigma^2 \kernmat_{\inducingpoints \inducingpoints})^{-1}
		\kernmat_{\inducingpoints \traindata} (\labels - \vmu)
	\end{equation*}
	Now, by Theorem 5 of \citet{Wild2021} the posterior mean of SVGP for a fixed set of inducing points is
	equivalent to the Nyström approximation, which takes the form above. Recognizing that
	\(\kernmat_{\inducingpoints \traindata} \kernmat_{\traindata \inducingpoints} \in \R^{m \times m}\) is invertible, it
	holds that
	\begin{align*}
		\mu(\cdot) & = k(\cdot, \inducingpoints) (\kernmat_{\inducingpoints \traindata} \kernmat_{\traindata \inducingpoints} + \sigma^2 \kernmat_{\inducingpoints \inducingpoints})^{-1} \kernmat_{\inducingpoints \traindata} (\labels - \vmu)                                                                                                                                                                                                                                                                                                   \\
		           & = k(\cdot, \inducingpoints) \kernmat_{\inducingpoints \inducingpoints}^{-1} (\kernmat_{\inducingpoints \traindata} \kernmat_{\traindata \inducingpoints}\kernmat_{\inducingpoints \inducingpoints}^{-1} + \sigma^2 \mI )^{-1} \kernmat_{\inducingpoints \traindata} (\labels - \vmu)                                                                                                                                                                                                                                            \\
		           & = k(\cdot, \inducingpoints) \kernmat_{\inducingpoints \inducingpoints}^{-1} \kernmat_{\inducingpoints \traindata} \kernmat_{\traindata\inducingpoints } (\kernmat_{\inducingpoints \traindata} \kernmat_{\traindata\inducingpoints }\kernmat_{\inducingpoints \inducingpoints}^{-1}\kernmat_{\inducingpoints \traindata} \kernmat_{\traindata \inducingpoints} + \sigma^2 \kernmat_{\inducingpoints \traindata} \kernmat_{\traindata\inducingpoints } )^{-1} \kernmat_{\inducingpoints \traindata} (\labels - \vmu) \\
		           & = k(\cdot, \inducingpoints) \kernmat_{\inducingpoints \inducingpoints}^{-1}\kernmat_{\inducingpoints \traindata} \kernmat_{\traindata\inducingpoints } (\kernmat_{\inducingpoints \traindata} (\kernmat_{\traindata\inducingpoints }\kernmat_{\inducingpoints \inducingpoints}^{-1}\kernmat_{\inducingpoints \traindata} + \sigma^2 \mI)\kernmat_{\traindata\inducingpoints })^{-1} \kernmat_{\inducingpoints \traindata}(\labels - \vmu)                                                                                     \\
		           & = q(\cdot, \traindata) \kernmat_{\traindata \inducingpoints}(\kernmat_{\inducingpoints \traindata} (q(\traindata, \traindata) + \sigma^2 \mI)\kernmat_{\traindata\inducingpoints })^{-1} \kernmat_{\inducingpoints \traindata}(\labels - \vmu)
	\end{align*}
	This proves the claim.
\end{proof}

\section{Theoretical Results and Proofs}

\subsection{Properties of Algorithm 1}

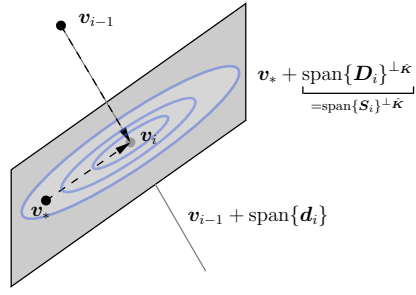
\begin{figure}
	\centering
	\scalebox{0.65}{
		\tikzset{every picture/.style={line width=0.75pt}} 

\begin{tikzpicture}[
        x=0.75pt,
        y=0.75pt,
        yscale=-1,
        xscale=1,
    ]

    \draw  [fill={rgb, 255:red, 0; green, 0; blue, 0 }  ,fill opacity=0.2 ] (36.37,218.7) -- (36.37,128) -- (218.37,-1.09) -- (218.37,89.61) -- cycle ;
    \draw [color={rgb, 255:red, 128; green, 128; blue, 128 }  ,draw opacity=1 ][fill={rgb, 255:red, 0; green, 0; blue, 0 }  ,fill opacity=0.2 ]   (75,16) -- (128.92,106.76) ;
    \draw [shift={(128.92,106.76)}, rotate = 59.29] [color={rgb, 255:red, 128; green, 128; blue, 128 }  ,draw opacity=1 ][fill={rgb, 255:red, 128; green, 128; blue, 128 }  ,fill opacity=1 ][line width=0.75]      (0, 0) circle [x radius= 3.35, y radius= 3.35]	;
    \draw [color={rgb, 255:red, 128; green, 128; blue, 128 }  ,draw opacity=1 ][fill={rgb, 255:red, 0; green, 0; blue, 0 }  ,fill opacity=0.2 ]   (148,139) -- (187.37,206.85) ;

    \draw  [color={rgb, 255:red, 74; green, 108; blue, 226 }  ,draw opacity=0.5 ][fill={rgb, 255:red, 255; green, 255; blue, 255 }  ,fill opacity=0.2 ][line width=1.5]  (45.3,163.6) .. controls (39.06,154.41) and (71.43,121.51) .. (117.6,90.12) .. controls
    (163.78,58.73) and (206.28,40.73) .. (212.53,49.92) .. controls (218.78,59.11) and (186.41,92.01)
    .. (140.23,123.4) .. controls (94.05,154.8) and (51.55,172.8) .. (45.3,163.6) -- cycle ;
    \draw  [dash pattern={on 4.5pt off 4.5pt}]  (75,16) -- (127.9,105.04) ;
    \draw [shift={(128.92,106.76)}, rotate = 239.29] [fill={rgb, 255:red, 0; green, 0; blue, 0 }  ][line width=0.08]  [draw opacity=0] (12,-3) -- (0,0) -- (12,3) -- cycle    ;
    \draw [shift={(75,16)}, rotate = 59.29] [color={rgb, 255:red, 0; green, 0; blue, 0 }  ][fill={rgb, 255:red, 0; green, 0; blue, 0 }  ][line width=0.75]      (0, 0) circle [x radius= 3.35, y radius= 3.35]	;
    \draw  [dash pattern={on 4.5pt off 4.5pt}]  (63.37,151.85) -- (127.27,107.89) ;
    \draw [shift={(128.92,106.76)}, rotate = 505.48] [fill={rgb, 255:red, 0; green, 0; blue, 0 }  ][line width=0.08]  [draw opacity=0] (12,-3) -- (0,0) -- (12,3) -- cycle    ;
    \draw [shift={(63.37,151.85)}, rotate = 325.48] [color={rgb, 255:red, 0; green, 0; blue, 0 }  ][fill={rgb, 255:red, 0; green, 0; blue, 0 }  ][line width=0.75]      (0, 0) circle [x radius= 3.35, y radius= 3.35]	 ;
    \draw  [color={rgb, 255:red, 74; green, 108; blue, 226 }  ,draw opacity=0.5 ][line width=1.5]  (99.97,126.44) .. controls (97.81,123.26) and (109.01,111.87) .. (125,101) .. controls
    (140.99,90.13) and (155.7,83.9) .. (157.86,87.08) .. controls (160.02,90.26) and (148.82,101.65) ..
    (132.83,112.52) .. controls (116.85,123.39) and (102.14,129.62) .. (99.97,126.44) -- cycle ;
    \draw  [color={rgb, 255:red, 74; green, 108; blue, 226 }  ,draw opacity=0.5 ][line width=1.5]  (75.88,143.81) .. controls (72.03,138.15) and (91.96,117.89) .. (120.4,98.56) .. controls
    (148.84,79.23) and (175.01,68.14) .. (178.85,73.8) .. controls (182.7,79.46) and (162.77,99.72) ..
    (134.33,119.05) .. controls (105.9,138.38) and (79.73,149.47) .. (75.88,143.81) -- cycle ;

    \draw (226,42.4) node [anchor=north west][inner sep=0.75pt]  [font=\large]  {$\qoi + \underbracket[0.14ex]{\linspan{\mSearchdir_\idxiter}^{\perp_{\shat{\kernmat}}}}_{=\linspan{\mActions_\idxiter}^{\perp_{\shat{\kernmat}}}}$};
    \draw (172,154.4) node [anchor=north west][inner sep=0.75pt]  [font=\large]  {$\qoimean_{\idxiter -1} + \linspan{\searchdir_\idxiter}$};
    \draw (135,97) node [anchor=north west][inner sep=0.75pt]  [font=\large]  {$\qoimean_{\idxiter}$};
    \draw (50,155.4) node [anchor=north west][inner sep=0.75pt]  [font=\large]  {$\qoi$};
    \draw (87,5.4) node [anchor=north west][inner sep=0.75pt]  [font=\large]  {$\qoimean_{\idxiter-1}$};

\end{tikzpicture}
	}
	\caption{\emph{Geometric perspective on the probabilistic linear solver learning representer weights \(\qoi\).}}
\end{figure}

\begin{lemma}[Geometric Properties of \Cref{alg:itergp}]
	\label{lem:geometric-properties}
	Let \(i \in \{1, \dots, n\}\), and assume \(\qoicov_0\) is chosen such that \(\qoicov_0
	\shat{\kernmat} \action_j = \action_j\) for all \(j\leq \idxiter\) (e.g. \(\qoicov_0 =
	\shat{\kernmat}^{-1}\)). Then it holds for the quantities computed by \Cref{alg:itergp}
	that
	\begin{align}
		 & \linspan{\mActions_\idxiter} = \linspan{\mSearchdir_\idxiter} \label{eqn:action-searchdir-span}                                                                                                                                                                                          \\
		 & \invapprox_\idxiter = \mSearchdir_\idxiter(\mSearchdir_\idxiter^\top \shat{\kernmat} \mSearchdir_\idxiter)^{-1} \mSearchdir_\idxiter^\top = \mActions_\idxiter(\mActions_\idxiter^\top \shat{\kernmat} \mActions_\idxiter)^{-1} \mActions_\idxiter^\top \label{eqn:invapprox-batch-form} \\
		 & \text{\(\invapprox_\idxiter \shat{\kernmat}\) is the \(\shat{\kernmat}\)-orthogonal
			projection onto \(\linspan{\mSearchdir_\idxiter}\)} \label{eqn:ci-projection}                                                                                                                                                                                                               \\
		 & \text{\(\qoicov_\idxiter \shat{\kernmat}\) is the \(\shat{\kernmat}\)-orthogonal projection onto \(\linspan{\mSearchdir_\idxiter}^{\perp_{\shat{\kernmat}}}\)} \label{eqn:sigmai-projection}                                                                                             \\
		 & \searchdir_i^\top \shat{\kernmat} \searchdir_j = 0 \qquad \text{for all \(j < i\)} \label{eqn:conjugate-directions}
	\end{align}
	where \(\mActions_\idxiter = \begin{pmatrix} \action_1 \cdots \action_\idxiter \end{pmatrix} \in \R^{n \times \idxiter}\) and
	\(\mSearchdir_\idxiter = \begin{pmatrix} \searchdir_1 \cdots \searchdir_\idxiter \end{pmatrix} \in \R^{n \times \idxiter}\).
\end{lemma}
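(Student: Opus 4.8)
The plan is to establish all five assertions through a single induction on $\idxiter$, since they are interdependent: the conjugacy \eqref{eqn:conjugate-directions} is exactly what makes $\invapprox_\idxiter$ collapse to the batch form \eqref{eqn:invapprox-batch-form}, which in turn yields the projection characterizations \eqref{eqn:ci-projection}--\eqref{eqn:sigmai-projection}, while the span identity \eqref{eqn:action-searchdir-span} is needed to name the range of those projections. Throughout I would exploit the algorithmic identity $\searchdir_\idxiter = (\mI - \invapprox_{\idxiter-1}\shat{\kernmat})\action_\idxiter$, which equals $\qoicov_{\idxiter-1}\shat{\kernmat}\action_\idxiter$ precisely because of the hypothesis $\qoicov_0\shat{\kernmat}\action_j = \action_j$, together with the definition $\invapprox_\idxiter = \sum_{j=1}^\idxiter \searchdirsqnorm_j^{-1}\searchdir_j\searchdir_j^\top$ and the symmetry of $\invapprox_\idxiter$ and $\shat{\kernmat}$.

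The engine of the induction is an auxiliary claim: $\invapprox_\idxiter\shat{\kernmat}$ fixes every search direction produced so far, i.e.\ $\invapprox_\idxiter\shat{\kernmat}\searchdir_j = \searchdir_j$ for $j \le \idxiter$. Assuming conjugacy among $\searchdir_1,\dots,\searchdir_\idxiter$ (the inductive hypothesis), expanding $\invapprox_\idxiter\shat{\kernmat}\searchdir_j = \sum_\ell \searchdirsqnorm_\ell^{-1}\searchdir_\ell(\searchdir_\ell^\top\shat{\kernmat}\searchdir_j)$ leaves only the $\ell = j$ term, and it collapses to $\searchdir_j$ once one verifies $\searchdirsqnorm_j = \searchdir_j^\top\shat{\kernmat}\searchdir_j$. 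That last identity follows from $\searchdirsqnorm_j = \action_j^\top\shat{\kernmat}\searchdir_j$ and the observation that $\action_j - \searchdir_j = \invapprox_{j-1}\shat{\kernmat}\action_j \in \linspan{\searchdir_1,\dots,\searchdir_{j-1}}$, a subspace that is $\shat{\kernmat}$-orthogonal to $\searchdir_j$ by the inductive conjugacy. With this in hand the inductive step for \eqref{eqn:conjugate-directions} is short: for $j \le \idxiter$,
\begin{equation*}
	\searchdir_{\idxiter+1}^\top\shat{\kernmat}\searchdir_j = \action_{\idxiter+1}^\top(\mI - \shat{\kernmat}\invapprox_\idxiter)\shat{\kernmat}\searchdir_j = \action_{\idxiter+1}^\top\shat{\kernmat}\searchdir_j - \action_{\idxiter+1}^\top\shat{\kernmat}(\invapprox_\idxiter\shat{\kernmat}\searchdir_j) = 0,
\end{equation*}
using the auxiliary claim in the last step. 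The span identity \eqref{eqn:action-searchdir-span} propagates in parallel, since $\searchdir_{\idxiter+1} = \action_{\idxiter+1} - \invapprox_\idxiter\shat{\kernmat}\action_{\idxiter+1}$ with the second term lying in $\linspan{\searchdir_1,\dots,\searchdir_\idxiter} = \linspan{\action_1,\dots,\action_\idxiter}$, so each spanning set contains the other.

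Given \eqref{eqn:conjugate-directions}, the Gram matrix $\mSearchdir_\idxiter^\top\shat{\kernmat}\mSearchdir_\idxiter = \operatorname{diag}(\searchdirsqnorm_1,\dots,\searchdirsqnorm_\idxiter)$ is diagonal, whence $\mSearchdir_\idxiter(\mSearchdir_\idxiter^\top\shat{\kernmat}\mSearchdir_\idxiter)^{-1}\mSearchdir_\idxiter^\top = \sum_j \searchdirsqnorm_j^{-1}\searchdir_j\searchdir_j^\top = \invapprox_\idxiter$, which is the $\mSearchdir$ form of \eqref{eqn:invapprox-batch-form}; the $\mActions$ form then follows by writing $\mSearchdir_\idxiter = \mActions_\idxiter \mathbf{T}$ for the invertible (unit upper-triangular) change of basis coming from the conjugate Gram--Schmidt recursion, since the factors $\mathbf{T}$ cancel inside the sandwiched inverse. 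From this closed form, idempotence $(\invapprox_\idxiter\shat{\kernmat})^2 = \invapprox_\idxiter\shat{\kernmat}$ together with the symmetry of $\shat{\kernmat}\invapprox_\idxiter\shat{\kernmat}$ (self-adjointness in $\langle\cdot,\cdot\rangle_{\shat{\kernmat}}$) and range $\linspan{\mSearchdir_\idxiter}$ identify $\invapprox_\idxiter\shat{\kernmat}$ as the $\shat{\kernmat}$-orthogonal projection onto $\linspan{\mSearchdir_\idxiter}$, giving \eqref{eqn:ci-projection}; and since $\qoicov_\idxiter\shat{\kernmat} = \mI - \invapprox_\idxiter\shat{\kernmat}$ (using $\qoicov_\idxiter = \shat{\kernmat}^{-1} - \invapprox_\idxiter$ for the canonical choice $\qoicov_0=\shat{\kernmat}^{-1}$), it is the complementary projection onto $\linspan{\mSearchdir_\idxiter}^{\perp_{\shat{\kernmat}}}$, which is \eqref{eqn:sigmai-projection}.

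The one delicate point is that \eqref{eqn:invapprox-batch-form} presupposes $\mSearchdir_\idxiter^\top\shat{\kernmat}\mSearchdir_\idxiter$ (equivalently $\mActions_\idxiter^\top\shat{\kernmat}\mActions_\idxiter$) is invertible, i.e.\ $\searchdirsqnorm_j = \searchdir_j^\top\shat{\kernmat}\searchdir_j > 0$ for every $j$; because $\shat{\kernmat}\succ 0$ this is equivalent to all search directions being nonzero, which holds exactly when the actions are linearly independent. I expect this bookkeeping -- keeping the division by $\searchdirsqnorm_j$ well-defined while simultaneously carrying the auxiliary ``$\invapprox_\idxiter\shat{\kernmat}$ fixes $\searchdir_j$'' claim (and the equality $\searchdirsqnorm_j = \searchdir_j^\top\shat{\kernmat}\searchdir_j$) through the induction -- to be the main obstacle, whereas the algebraic cancellations are routine once that scaffolding is in place.
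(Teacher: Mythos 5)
Your proposal is correct and follows essentially the same route as the paper's proof: a joint induction in which the span identity, the batch/projection form of $\invapprox_\idxiter$, and $\shat{\kernmat}$-conjugacy of the search directions are carried forward together. The only differences are cosmetic — you reach conjugacy via the auxiliary fixed-point claim $\invapprox_\idxiter\shat{\kernmat}\searchdir_j=\searchdir_j$ and obtain the $\mActions_\idxiter$-form of \eqref{eqn:invapprox-batch-form} by an explicit unit-triangular change of basis, where the paper uses the complementary-projection property of $\qoicov_\idxiter\shat{\kernmat}$ and uniqueness of orthogonal projections; your explicit verification that $\searchdirsqnorm_j=\searchdir_j^\top\shat{\kernmat}\searchdir_j$ and your remark on invertibility of the Gram matrix (i.e.\ linear independence of the actions) make the bookkeeping slightly more careful than the paper's.
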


\begin{proof}
	We prove the claims by induction. We begin with the base case \(\idxiter=1\).

	By assumption it holds that \(\mActions_1 = \action_1 = \qoicov_0 \shat{\kernmat} \action_1
	= \searchdir_1 = \mSearchdir_1\). Now by \Cref{alg:itergp}, we have \(\invapprox_1 =
	\frac{1}{\eta_1}\searchdir_1 \searchdir_1^\top\), which with the above proves
	\eqref{eqn:invapprox-batch-form}. By the batched form \eqref{eqn:invapprox-batch-form} of \(\invapprox_\idxiter\),
	the statements \eqref{eqn:ci-projection} and \eqref{eqn:sigmai-projection} follow immediately.
	Finally, \(\shat{\kernmat}\)-orthogonality for a single search direction holds trivially.

	Now for the induction step \(\idxiter \to \idxiter + 1\). Assume that \cref{eqn:action-searchdir-span,eqn:invapprox-batch-form,eqn:ci-projection,eqn:sigmai-projection,eqn:conjugate-directions} hold
	for iteration \(\idxiter\). Then we have that
	\begin{equation*}
		\searchdir_{\idxiter + 1} = \qoicov_{\idxiter} \shat{\kernmat}
		\action_{\idxiter + 1}=\action_{\idxiter + 1} - \invapprox_{\idxiter}\shat{\kernmat}
		\action_{\idxiter + 1} \overset{\eqref{eqn:invapprox-batch-form}}{=}\action_{\idxiter + 1} -
		\mActions_\idxiter(\mActions_\idxiter^\top \shat{\kernmat} \mActions_\idxiter)^{-1} \mActions_\idxiter^\top \shat{\kernmat}
		\action_{\idxiter + 1} \in \linspan{\mActions_{\idxiter+1}}
	\end{equation*}
	By the induction hypothesis the above also implies \(\linspan{\mActions_{\idxiter+1}} =
	\linspan{\mSearchdir_{\idxiter+1}}\). This proves \cref{eqn:action-searchdir-span}. Next, we have by the induction
	hypotheses \eqref{eqn:invapprox-batch-form} and \eqref{eqn:conjugate-directions} that
	\begin{align*}
		\invapprox_{\idxiter + 1} & = \invapprox_{\idxiter} + \frac{1}{\searchdirsqnorm}\searchdir_{\idxiter+1} \searchdir_{\idxiter+1}^\top                                                                                                            \\
		                          & = \mSearchdir_\idxiter(\mSearchdir_\idxiter^\top \shat{\kernmat} \mSearchdir_\idxiter)^{-1} \mSearchdir_\idxiter^\top + \frac{1}{\searchdirsqnorm_{\idxiter+1}}\searchdir_{\idxiter+1} \searchdir_{\idxiter+1}^\top \\
		                          & = \sum_{k=1}^{\idxiter+1} \frac{1}{\searchdirsqnorm_{k}}\searchdir_{k} \searchdir_{k}^\top                                                                                                                          \\
		                          & = \mSearchdir_{\idxiter+1}(\mSearchdir_{\idxiter+1}^\top \shat{\kernmat} \mSearchdir_{\idxiter+1})^{-1} \mSearchdir_{\idxiter+1}^\top
	\end{align*}
	This proves the first equality of \cref{eqn:invapprox-batch-form}. For the second, first recognize that an
	orthogonal projection onto a linear subspace \(\linspan{\mA}\) with respect to the
	\(\mB\)-inner product is given by \(\mP_{\mA} = \mA(\mA^\top \mB \mA)^{-1}\mA^\top
	\mB\). The projection onto its \(\mB\)-orthogonal subspace is given by \(\mP_{\mA^{\perp_{\mB}}} =
	\mI - \mP_\mA\). Therefore \cref{eqn:ci-projection,eqn:sigmai-projection} follow directly from the above argument. Now
	since projection onto a subspace is unique and independent of the choice of basis,
	we have by \(\linspan{\mSearchdir_{\idxiter+1}} = \linspan{\mActions_{\idxiter+1}}\) that
	\begin{align*}
		\invapprox_\idxiter \shat{\kernmat} & = \mP_{\mSearchdir_{\idxiter+1}} =\mP_{\mActions_{\idxiter+1}} =\mActions_\idxiter(\mActions_\idxiter^\top \shat{\kernmat} \mActions_\idxiter)^{-1} \mActions_\idxiter^\top \shat{\kernmat}
	\end{align*}
	Now since \(\shat{\kernmat}\) is non-singular, the second equality of
	\cref{eqn:invapprox-batch-form} follows. Finally, we will prove \(\shat{\kernmat}\)-orthogonality
	of the search directions. Let \(j<\idxiter+1\), then it holds that
	\begin{align*}
		\searchdir_{\idxiter+1}^\top \shat{\kernmat} \searchdir_j = (\underbracket[0.14ex]{\qoicov_{\idxiter}\shat{\kernmat}\action_{\idxiter+1}}_{\in \linspan{\mActions_\idxiter}^{\perp_{\shat{\kernmat}}}})^\top  \shat{\kernmat} \underbracket[0.14ex]{\searchdir_j}_{\in \linspan{\mActions_\idxiter}} = 0
	\end{align*}
	by \cref{eqn:action-searchdir-span,eqn:sigmai-projection}. This completes the proof.
\end{proof}

\begin{corollary}
	\label{cor:k-ortho-proj-properties}
	Let \(\idxiter \in \{1, \dots, n\}\). It holds for \(\invapprox_\idxiter \shat{\kernmat}\),
	the \(\shat{\kernmat}\)-orthogonal projection onto \(\mActions_\idxiter\), that
	\begin{align}
		(\invapprox_{\idxiter} \shat{\kernmat})^2                   & = \invapprox_{\idxiter} \shat{\kernmat} \label{eqn:ci-projection-property} \\
		\invapprox_{\idxiter} \shat{\kernmat} \invapprox_{\idxiter} & = \invapprox_{\idxiter} \label{eqn:ci-k-ci}
	\end{align}
	Further for \(\mH_\idxiter=\qoicov_\idxiter \shat{\kernmat} = \mI - \invapprox_\idxiter
	\shat{\kernmat}\) the \(\shat{\kernmat}\)-orthogonal projection onto
	\(\mActions_\idxiter^{\perp_{\shat{\kernmat}}}\), we have
	\begin{align}
		\mH_\idxiter^2                                 & = \mH_\idxiter                                                                        \\
		\mH_\idxiter^\top \shat{\kernmat} \mH_\idxiter & = \mH_\idxiter^\top \shat{\kernmat} = \shat{\kernmat} \mH_\idxiter\label{eqn:hi-k-hi}
	\end{align}
\end{corollary}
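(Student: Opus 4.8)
The plan is to reduce every identity to the closed batched form $\invapprox_\idxiter = \mActions_\idxiter(\mActions_\idxiter^\top \shat{\kernmat}\mActions_\idxiter)^{-1}\mActions_\idxiter^\top$ established in \eqref{eqn:invapprox-batch-form} of \Cref{lem:geometric-properties}. Abbreviating the (invertible) Gram matrix appearing there by $\mG_\idxiter \coloneqq \mActions_\idxiter^\top \shat{\kernmat}\mActions_\idxiter$, we have $\invapprox_\idxiter\shat{\kernmat} = \mActions_\idxiter \mG_\idxiter^{-1}\mActions_\idxiter^\top\shat{\kernmat}$. The single recurring observation driving the whole proof is that whenever two copies of $\invapprox_\idxiter\shat{\kernmat}$ (or of $\invapprox_\idxiter$) are multiplied, the factor $\mActions_\idxiter^\top\shat{\kernmat}\mActions_\idxiter = \mG_\idxiter$ is produced in the middle and cancels against a $\mG_\idxiter^{-1}$.

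First I would verify \eqref{eqn:ci-projection-property} and \eqref{eqn:ci-k-ci} directly by substitution: $(\invapprox_\idxiter\shat{\kernmat})^2 = \mActions_\idxiter \mG_\idxiter^{-1}(\mActions_\idxiter^\top\shat{\kernmat}\mActions_\idxiter)\mG_\idxiter^{-1}\mActions_\idxiter^\top\shat{\kernmat} = \mActions_\idxiter \mG_\idxiter^{-1}\mActions_\idxiter^\top\shat{\kernmat} = \invapprox_\idxiter\shat{\kernmat}$, and identically $\invapprox_\idxiter\shat{\kernmat}\invapprox_\idxiter = \mActions_\idxiter \mG_\idxiter^{-1}(\mActions_\idxiter^\top\shat{\kernmat}\mActions_\idxiter)\mG_\idxiter^{-1}\mActions_\idxiter^\top = \invapprox_\idxiter$. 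These simply express the idempotence of the $\shat{\kernmat}$-orthogonal projector and the fact that $\invapprox_\idxiter$ is a generalized inverse of $\shat{\kernmat}$ on the explored subspace.

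Next, for $\mH_\idxiter = \mI - \invapprox_\idxiter\shat{\kernmat}$ I would expand $\mH_\idxiter^2 = \mI - 2\invapprox_\idxiter\shat{\kernmat} + (\invapprox_\idxiter\shat{\kernmat})^2$ and collapse the last term using \eqref{eqn:ci-projection-property}, leaving $\mH_\idxiter$. For the self-adjointness identities I would first record that $\invapprox_\idxiter = \mActions_\idxiter \mG_\idxiter^{-1}\mActions_\idxiter^\top$ is symmetric (since $\mG_\idxiter^{-1}$ is, as $\shat{\kernmat}$ is symmetric), so that $(\invapprox_\idxiter\shat{\kernmat})^\top = \shat{\kernmat}\invapprox_\idxiter$ and hence $\mH_\idxiter^\top = \mI - \shat{\kernmat}\invapprox_\idxiter$. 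Multiplying by $\shat{\kernmat}$ then gives $\mH_\idxiter^\top\shat{\kernmat} = \shat{\kernmat} - \shat{\kernmat}\invapprox_\idxiter\shat{\kernmat} = \shat{\kernmat}\mH_\idxiter$, which is the second equality of \eqref{eqn:hi-k-hi}. For the first equality I would compute $\mH_\idxiter^\top\shat{\kernmat}\mH_\idxiter = \mH_\idxiter^\top\shat{\kernmat} - \mH_\idxiter^\top\shat{\kernmat}\invapprox_\idxiter\shat{\kernmat}$ and show the correction term vanishes, using $\mH_\idxiter^\top = \mI - \shat{\kernmat}\invapprox_\idxiter$ and \eqref{eqn:ci-k-ci}: $(\mI-\shat{\kernmat}\invapprox_\idxiter)\shat{\kernmat}\invapprox_\idxiter\shat{\kernmat} = \shat{\kernmat}\invapprox_\idxiter\shat{\kernmat} - \shat{\kernmat}(\invapprox_\idxiter\shat{\kernmat}\invapprox_\idxiter)\shat{\kernmat} = 0$.

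There is no genuine obstacle here: the corollary is bookkeeping around the cancellation $\mG_\idxiter^{-1}\mG_\idxiter = \mI$ together with the symmetry of $\invapprox_\idxiter$. The one point requiring care is tracking transposes when passing between $\mH_\idxiter$ and $\mH_\idxiter^\top$, since $\invapprox_\idxiter\shat{\kernmat}$ is not itself symmetric (only $\shat{\kernmat}$-self-adjoint), so I must use $(\invapprox_\idxiter\shat{\kernmat})^\top = \shat{\kernmat}\invapprox_\idxiter$ rather than assume symmetry of the projector. Geometrically, all four identities merely restate that $\invapprox_\idxiter\shat{\kernmat}$ and $\mH_\idxiter$ are complementary $\shat{\kernmat}$-orthogonal projections, as already identified in \eqref{eqn:ci-projection} and \eqref{eqn:sigmai-projection}.
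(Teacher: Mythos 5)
Your proposal is correct and follows essentially the same route as the paper: both reduce everything to the batched form \(\invapprox_\idxiter = \mActions_\idxiter(\mActions_\idxiter^\top \shat{\kernmat}\mActions_\idxiter)^{-1}\mActions_\idxiter^\top\) from \Cref{lem:geometric-properties}, obtain \eqref{eqn:ci-k-ci} and \eqref{eqn:ci-projection-property} by cancelling the Gram matrix, and then verify the \(\mH_\idxiter\) identities by direct expansion using idempotence and the symmetry of \(\invapprox_\idxiter\). The only cosmetic difference is that the paper expands \(\mH_\idxiter^\top\shat{\kernmat}\mH_\idxiter\) in one pass rather than isolating a vanishing correction term; the algebra is identical.
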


\begin{proof}
	By \Cref{lem:geometric-properties}, it holds that \(\invapprox_\idxiter = \mActions_\idxiter (\mActions_\idxiter^\top \shat{\kernmat} \mActions_\idxiter)^{-1}
	\mActions_\idxiter^\top\). Therefore
	\begin{align*}
		\invapprox_{\idxiter} \shat{\kernmat} \invapprox_{\idxiter} & = \mActions_\idxiter (\mActions_\idxiter^\top \shat{\kernmat} \mActions_\idxiter)^{-1} \mActions_\idxiter^\top \shat{\kernmat} \mActions_\idxiter (\mActions_\idxiter^\top \shat{\kernmat} \mActions_\idxiter)^{-1} \mActions_\idxiter^\top = \invapprox_\idxiter.
	\end{align*}
	This proves \eqref{eqn:ci-k-ci} and \eqref{eqn:ci-projection-property}.
	Define \(\mH_\idxiter = \mI - \invapprox_\idxiter \shat{\kernmat}\), then
	\begin{align*}
		\mH_\idxiter \mH_\idxiter                      & = (\mI - \invapprox_\idxiter \shat{\kernmat})(\mI - \invapprox_\idxiter \shat{\kernmat})
		= \mI - 2 \invapprox_\idxiter \shat{\kernmat} + (\invapprox_\idxiter \shat{\kernmat})^2
		= \mI - \invapprox_\idxiter \shat{\kernmat}= \mH_\idxiter
		\intertext{as well as}
		\mH_\idxiter^\top \shat{\kernmat} \mH_\idxiter & = (\mI - \invapprox_\idxiter \shat{\kernmat})^\top \shat{\kernmat} (\mI - \invapprox_\idxiter \shat{\kernmat}) = (\shat{\kernmat} - \shat{\kernmat} \invapprox_\idxiter \shat{\kernmat})(\mI - \invapprox_\idxiter \shat{\kernmat}) \\
		                                               & =\shat{\kernmat} - 2\shat{\kernmat} \invapprox_\idxiter \shat{\kernmat} + \shat{\kernmat} (\invapprox_\idxiter \shat{\kernmat})^2                                                                                                   \\
		                                               & = \shat{\kernmat} - \shat{\kernmat} \invapprox_\idxiter \shat{\kernmat} = \mH_\idxiter^\top \shat{\kernmat} = \shat{\kernmat} \mH_\idxiter.
	\end{align*}

\end{proof}

\begin{lemma}
	\label{lem:relation-precision-matrix-qoimean}
	Let \(\qoicov_0 = \shat{\kernmat}^{-1}\), then it holds
	that
	\begin{align}
		\invapprox_{\idxiter} (\labels - \vmu) & = \qoimean_{\idxiter},       \label{eqn:relation-precision-matrix-qoimean} \\
		\qoicov_{\idxiter}(\labels - \vmu)     & = \qoi - \qoimean_{\idxiter}.\label{eqn:relation-cov-matrix-qoimean}
	\end{align}
\end{lemma}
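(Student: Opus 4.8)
The plan is to establish \eqref{eqn:relation-precision-matrix-qoimean} by induction on $\idxiter$ and then read off \eqref{eqn:relation-cov-matrix-qoimean} as an immediate algebraic consequence, using only the recursive updates of \Cref{alg:itergp} together with the assumption $\qoicov_0 = \shat{\kernmat}^{-1}$ and the symmetry of $\shat{\kernmat}$ and $\invapprox_\idxiter$.

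For the first identity, the base case $\idxiter = 0$ is trivial since $\invapprox_0 = \mZero$ and $\qoimean_0 = \vzero$. For the step $\idxiter \to \idxiter+1$, I would compare the rank-one updates $\invapprox_{\idxiter+1} = \invapprox_\idxiter + \searchdirsqnorm_{\idxiter+1}^{-1}\searchdir_{\idxiter+1}\searchdir_{\idxiter+1}^\top$ and $\qoimean_{\idxiter+1} = \qoimean_\idxiter + \searchdirsqnorm_{\idxiter+1}^{-1}\observ_{\idxiter+1}\searchdir_{\idxiter+1}$. Applying the former to $(\labels - \vmu)$ and invoking the induction hypothesis $\invapprox_\idxiter(\labels-\vmu) = \qoimean_\idxiter$, it remains to verify the scalar identity $\searchdir_{\idxiter+1}^\top(\labels-\vmu) = \observ_{\idxiter+1}$. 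Writing $\searchdir_{\idxiter+1} = (\mI - \invapprox_\idxiter\shat{\kernmat})\action_{\idxiter+1}$ and transposing, with the symmetry of $\invapprox_\idxiter$ and $\shat{\kernmat}$, gives $\searchdir_{\idxiter+1}^\top(\labels-\vmu) = \action_{\idxiter+1}^\top\big((\labels-\vmu) - \shat{\kernmat}\invapprox_\idxiter(\labels-\vmu)\big)$; substituting the induction hypothesis once more turns the bracket into the residual $\residual_\idxiter = (\labels-\vmu) - \shat{\kernmat}\qoimean_\idxiter$, so the right-hand side is exactly $\action_{\idxiter+1}^\top\residual_\idxiter = \observ_{\idxiter+1}$. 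This closes the induction.

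The second identity is then immediate. Telescoping the covariance update (equivalently, reading the line $\qoicov_\idxiter \gets \qoicov_0 - \invapprox_\idxiter$ of \Cref{alg:itergp}) and using $\qoicov_0 = \shat{\kernmat}^{-1}$ yields $\qoicov_\idxiter = \shat{\kernmat}^{-1} - \invapprox_\idxiter$. Applying this to $(\labels - \vmu)$ gives $\qoicov_\idxiter(\labels-\vmu) = \shat{\kernmat}^{-1}(\labels-\vmu) - \invapprox_\idxiter(\labels-\vmu)$, and recognizing the first term as $\qoi = \shat{\kernmat}^{-1}(\labels-\vmu)$ and the second as $\qoimean_\idxiter$ by \eqref{eqn:relation-precision-matrix-qoimean} completes the proof.

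The only nontrivial point is the induction step of the first identity, and within it the scalar identity $\searchdir_{\idxiter+1}^\top(\labels-\vmu) = \observ_{\idxiter+1}$: this is where the specific form of the search direction and the assumption $\qoicov_0 = \shat{\kernmat}^{-1}$ (so that $\searchdir_{\idxiter+1} = \qoicov_\idxiter\shat{\kernmat}\action_{\idxiter+1} = (\mI - \invapprox_\idxiter\shat{\kernmat})\action_{\idxiter+1}$) genuinely enter. Everything else is bookkeeping with the rank-one updates, and the second identity costs no additional work beyond the first.
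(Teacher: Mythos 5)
Your proof is correct and follows essentially the same route as the paper: induction on \(\idxiter\) for the first identity, with the key step being the scalar identity \(\searchdir_{\idxiter+1}^\top(\labels-\vmu) = \observ_{\idxiter+1}\), established by transposing \(\searchdir_{\idxiter+1} = \qoicov_\idxiter\shat{\kernmat}\action_{\idxiter+1} = (\mI - \invapprox_\idxiter\shat{\kernmat})\action_{\idxiter+1}\) and reapplying the induction hypothesis to recover the residual. The second identity is handled exactly as you describe (the paper leaves it implicit), so there is nothing to add.
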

\begin{proof}
	We prove the statement by induction. By assumption \(\invapprox_{0} (\labels - \vmu) =
	\qoimean_{0}\). Now assume \eqref{eqn:relation-precision-matrix-qoimean} holds. Then for \(\idxiter \to
	\idxiter + 1\), we have
	\begin{align*}
		\invapprox_{\idxiter + 1} (\labels - \vmu)     & = (\invapprox_{\idxiter} + \frac{1}{\searchdirsqnorm_{\idxiter +1}} \searchdir_{\idxiter +1} \searchdir_{\idxiter +1}^\top) (\labels - \vmu) \overset{\textrm{IH}}{=} \qoimean_{\idxiter} + \frac{\searchdir_{\idxiter +1}^\top (\labels - \vmu)}{\searchdirsqnorm_{\idxiter +1}} \searchdir_{\idxiter +1} \\
		\intertext{Now by the update to the representer weights in \Cref{alg:itergp} it suffices to show that
			\(\observ_{\idxiter+1} = \searchdir_{\idxiter +1}^\top (\labels - \vmu)\). We have}
		\searchdir_{\idxiter +1}^\top (\labels - \vmu) & = (\qoicov_{\idxiter}\shat{\kernmat} \action_{\idxiter+1})^\top (\labels - \vmu) = \action_{\idxiter+1}^\top \shat{\kernmat} \qoicov_{\idxiter} (\labels - \vmu)                                                                                                                                           \\
		                                               & = \action_{\idxiter+1}^\top \shat{\kernmat} (\shat{\kernmat}^{-1} - \invapprox_{\idxiter})(\labels - \vmu) \overset{\textrm{IH}}{=} \action_{\idxiter+1}^\top((\labels - \vmu) - \shat{\kernmat} \qoimean_{\idxiter}) = \action_{\idxiter+1}^\top \residual_{\idxiter} = \observ_\idxiter.
	\end{align*}
\end{proof}

\begin{lemma}
	Let \(\qoicov_0 = \shat{\kernmat}^{-1}\), \(\invapprox_0 = \mZero\) and consequently
	\(\qoimean_0 = \vzero\), then it holds for the residual at iteration \(\idxiter \in \{1, \dots, n\}\) that
	\begin{align}
		\residual_{\idxiter-1} & = \shat{\kernmat}(\qoi - \qoimean_{\idxiter-1})                                                                                                              \\
		                                                                                   & = \shat{\kernmat}\qoicov_{\idxiter-1} \shat{\kernmat} \qoi\\
																						   &= (\shat{\kernmat} -
		\kernmatapprox_{\idxiter-1})\qoi.
	\end{align}
\end{lemma}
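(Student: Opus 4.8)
The plan is to establish the three equalities in sequence by direct substitution, since all the needed machinery has already been assembled. First I would handle the leftmost equality. By the definition of the predictive residual in \Cref{alg:itergp} we have $\residual_{\idxiter-1} = (\labels - \vmu) - \shat{\kernmat}\qoimean_{\idxiter-1}$, and since the representer weights are defined by $\qoi = \shat{\kernmat}^{-1}(\labels - \vmu)$, i.e. $(\labels - \vmu) = \shat{\kernmat}\qoi$, substituting gives $\residual_{\idxiter-1} = \shat{\kernmat}\qoi - \shat{\kernmat}\qoimean_{\idxiter-1} = \shat{\kernmat}(\qoi - \qoimean_{\idxiter-1})$. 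This is immediate and uses only the definition of $\qoi$ together with the chosen initialization $\qoimean_0 = \vzero$ that feeds into the algorithm's iterates.

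Next I would prove the middle equality. Here the key input is \eqref{eqn:relation-cov-matrix-qoimean} from \Cref{lem:relation-precision-matrix-qoimean}, which states $\qoicov_{\idxiter}(\labels - \vmu) = \qoi - \qoimean_{\idxiter}$; shifting the index to $\idxiter-1$ and again using $(\labels - \vmu) = \shat{\kernmat}\qoi$ yields $\qoi - \qoimean_{\idxiter-1} = \qoicov_{\idxiter-1}(\labels - \vmu) = \qoicov_{\idxiter-1}\shat{\kernmat}\qoi$. Composing with the first equality then gives $\residual_{\idxiter-1} = \shat{\kernmat}(\qoi - \qoimean_{\idxiter-1}) = \shat{\kernmat}\qoicov_{\idxiter-1}\shat{\kernmat}\qoi$, as claimed. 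The only point of care is making sure the index in the invoked lemma is decremented consistently.

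Finally, for the rightmost equality I would simply unfold the two definitions of the covariance and kernel-matrix approximations. Since $\qoicov_{\idxiter-1} = \qoicov_0 - \invapprox_{\idxiter-1} = \shat{\kernmat}^{-1} - \invapprox_{\idxiter-1}$, we get $\shat{\kernmat}\qoicov_{\idxiter-1}\shat{\kernmat} = \shat{\kernmat}(\shat{\kernmat}^{-1} - \invapprox_{\idxiter-1})\shat{\kernmat} = \shat{\kernmat} - \shat{\kernmat}\invapprox_{\idxiter-1}\shat{\kernmat} = \shat{\kernmat} - \kernmatapprox_{\idxiter-1}$, where the last step uses $\kernmatapprox_{\idxiter-1} = \shat{\kernmat}\invapprox_{\idxiter-1}\shat{\kernmat}$ from the (greyed) update in \Cref{alg:itergp}. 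Multiplying through by $\qoi$ completes the chain. In short, there is no genuine obstacle: the entire statement is a bookkeeping consequence of the residual definition, the relation $\qoicov_{\idxiter-1}(\labels-\vmu) = \qoi - \qoimean_{\idxiter-1}$ proved previously, and the identities $\qoicov_{\idxiter-1} = \shat{\kernmat}^{-1} - \invapprox_{\idxiter-1}$ and $\kernmatapprox_{\idxiter-1} = \shat{\kernmat}\invapprox_{\idxiter-1}\shat{\kernmat}$; the most error-prone part is merely keeping the iteration indices aligned when citing the prior lemma.
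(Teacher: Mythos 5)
Your proof is correct and follows essentially the same route as the paper's: the first equality from the residual definition together with $(\labels-\vmu)=\shat{\kernmat}\qoi$, the second from \cref{eqn:relation-cov-matrix-qoimean} of \Cref{lem:relation-precision-matrix-qoimean} applied at index $\idxiter-1$, and the third by expanding $\qoicov_{\idxiter-1}=\shat{\kernmat}^{-1}-\invapprox_{\idxiter-1}$ and using $\kernmatapprox_{\idxiter-1}=\shat{\kernmat}\invapprox_{\idxiter-1}\shat{\kernmat}$. No gaps.
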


\begin{proof}
	It holds by definition, that
	\begin{align*}
		\residual_{\idxiter-1} & = (\labels - \vmu) - \shat{\kernmat}\qoimean_{\idxiter-1} = \shat{\kernmat} \qoi - \shat{\kernmat} \qoimean_{\idxiter -1} = \shat\kernmat(\qoi - \qoimean_{\idxiter-1}).
		\intertext{Further we have by \cref{eqn:relation-cov-matrix-qoimean}, that}
		&= \shat\kernmat \qoicov_{\idxiter-1}(\labels - \vmu) = \shat{\kernmat}\qoicov_{\idxiter-1} \shat{\kernmat} \qoi,
		\intertext{and finally, by the definition of the kernel matrix approximation in \Cref{alg:itergp}, we obtain}
		&= \shat\kernmat(\shat\kernmat^{-1} - \invapprox_{\idxiter-1})\shat{\kernmat}\qoi = (\shat{\kernmat} - \kernmatapprox_{\idxiter-1})\qoi.
	\end{align*}
\end{proof}

\begin{proposition}[Batch of Observations]
	\label{prop:problinsolve-batch-posterior}
	Let \(\qoicov_0\) such that \(\qoicov_0 \shat{\kernmat} \action_j
	= \action_j\) for all \(j \in \{1, \dots, \idxiter\}\). Then after \(\idxiter\) iterations the
	posterior over the representer weights in \eqref{eqn:posterior-representer-weights} is equivalent to the one computed
	for a batch of observations, i.e.
	\begin{align*}
		\qoimean_\idxiter & = \qoicov_0 \shat{\kernmat} \mActions_\idxiter (\mActions_\idxiter^\top \shat{\kernmat} \qoicov_0 \shat{\kernmat}\mActions_\idxiter)^{-1} \mActions_\idxiter^\top(\labels - \vmu)                      \\
		\qoicov_\idxiter  & = \qoicov_0 - \qoicov_0 \shat{\kernmat} \mActions_\idxiter (\mActions_\idxiter^\top \shat{\kernmat} \qoicov_0 \shat{\kernmat}\mActions_\idxiter)^{-1} \mActions_\idxiter^\top \shat{\kernmat}\qoicov_0
	\end{align*}
\end{proposition}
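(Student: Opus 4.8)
The plan is to show that the sequentially computed iterates of \Cref{alg:itergp} coincide with the closed-form batch Gaussian conditioning update, by first collapsing the batch expressions using the hypothesis and then matching them against the algorithm's recursions. The assumption $\qoicov_0 \shat{\kernmat} \action_j = \action_j$ for all $j \le \idxiter$ says, after collecting the actions into $\mActions_\idxiter$, that $\qoicov_0 \shat{\kernmat} \mActions_\idxiter = \mActions_\idxiter$; transposing and using that $\qoicov_0$ and $\shat{\kernmat}$ are symmetric gives $\mActions_\idxiter^\top \shat{\kernmat} \qoicov_0 = \mActions_\idxiter^\top$. Substituting both identities into the batch formulas collapses the prefactor $\qoicov_0 \shat{\kernmat} \mActions_\idxiter$ to $\mActions_\idxiter$ and the Gram matrix $\mActions_\idxiter^\top \shat{\kernmat} \qoicov_0 \shat{\kernmat} \mActions_\idxiter$ to $\mActions_\idxiter^\top \shat{\kernmat} \mActions_\idxiter$, so the batch mean reduces to $\mActions_\idxiter (\mActions_\idxiter^\top \shat{\kernmat} \mActions_\idxiter)^{-1} \mActions_\idxiter^\top (\labels - \vmu)$ and the batch covariance to $\qoicov_0 - \mActions_\idxiter (\mActions_\idxiter^\top \shat{\kernmat} \mActions_\idxiter)^{-1} \mActions_\idxiter^\top$. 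By \Cref{lem:geometric-properties} (\cref{eqn:invapprox-batch-form}), which applies verbatim since its hypothesis is exactly the condition assumed here, $\invapprox_\idxiter = \mActions_\idxiter (\mActions_\idxiter^\top \shat{\kernmat} \mActions_\idxiter)^{-1} \mActions_\idxiter^\top$, so the two simplified batch expressions become $\invapprox_\idxiter(\labels - \vmu)$ and $\qoicov_0 - \invapprox_\idxiter$ respectively.

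It then remains to verify that the iterates themselves satisfy $\qoimean_\idxiter = \invapprox_\idxiter(\labels - \vmu)$ and $\qoicov_\idxiter = \qoicov_0 - \invapprox_\idxiter$, under the standard initialization $\qoimean_0 = \vzero$, $\invapprox_0 = \mZero$. The covariance identity is immediate: accumulating the rank-one downdates $\qoicov_\idxiter = \qoicov_{\idxiter-1} - \searchdirsqnorm_\idxiter^{-1}\searchdir_\idxiter\searchdir_\idxiter^\top$ from \cref{eqn:posterior-representer-weights} yields $\qoicov_\idxiter = \qoicov_0 - \sum_{j=1}^\idxiter \searchdirsqnorm_j^{-1}\searchdir_j\searchdir_j^\top = \qoicov_0 - \invapprox_\idxiter$ by the definition of $\invapprox_\idxiter$.

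For the mean I would argue by induction, essentially re-deriving \Cref{lem:relation-precision-matrix-qoimean} under the weaker hypothesis of this proposition. Since telescoping the mean update gives $\qoimean_\idxiter = \sum_{j=1}^\idxiter \searchdirsqnorm_j^{-1} \observ_j \searchdir_j$, whereas $\invapprox_\idxiter(\labels - \vmu) = \sum_{j=1}^\idxiter \searchdirsqnorm_j^{-1}\searchdir_j\big(\searchdir_j^\top(\labels - \vmu)\big)$, it suffices to show $\observ_j = \searchdir_j^\top(\labels - \vmu)$ for every $j$. Using $\searchdir_j = \qoicov_{j-1}\shat{\kernmat}\action_j$, the covariance identity $\qoicov_{j-1} = \qoicov_0 - \invapprox_{j-1}$, and the transposed assumption $\action_j^\top \shat{\kernmat}\qoicov_0 = \action_j^\top$, one obtains $\searchdir_j^\top(\labels - \vmu) = \action_j^\top(\labels - \vmu) - \action_j^\top\shat{\kernmat}\invapprox_{j-1}(\labels - \vmu)$; invoking the inductive hypothesis $\invapprox_{j-1}(\labels - \vmu) = \qoimean_{j-1}$ rewrites the second term as $\action_j^\top\shat{\kernmat}\qoimean_{j-1}$, so the whole expression equals $\action_j^\top\residual_{j-1} = \observ_j$.

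The one point requiring care is precisely this last step: \Cref{lem:relation-precision-matrix-qoimean} is stated only for $\qoicov_0 = \shat{\kernmat}^{-1}$, so I must carry out the induction using the substitute identity $\action_j^\top\shat{\kernmat}\qoicov_0 = \action_j^\top$ (the transposed hypothesis) in place of $\qoicov_\idxiter = \shat{\kernmat}^{-1} - \invapprox_\idxiter$; everything else is routine bookkeeping. In particular, the inverses appearing throughout are well defined because $\mActions_\idxiter^\top\shat{\kernmat}\mActions_\idxiter$ is invertible whenever the actions are linearly independent, since $\shat{\kernmat} \succ 0$. Conceptually this result just reflects that sequentially conditioning a Gaussian on the scalar projections $\observ_1, \dots, \observ_\idxiter$ is equivalent to a single batch conditioning on $\mActions_\idxiter^\top(\labels - \vmu)$, but the concrete derivation above via the existing lemmas is the cleanest route.
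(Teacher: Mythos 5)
Your proposal is correct and follows essentially the same route as the paper: both collapse the batch Gram form using the hypothesis $\qoicov_0\shat{\kernmat}\mActions_\idxiter = \mActions_\idxiter$, identify the result with $\invapprox_\idxiter = \mActions_\idxiter(\mActions_\idxiter^\top\shat{\kernmat}\mActions_\idxiter)^{-1}\mActions_\idxiter^\top$ via \Cref{lem:geometric-properties}, and then invoke $\qoimean_\idxiter = \invapprox_\idxiter(\labels-\vmu)$ together with $\qoicov_\idxiter = \qoicov_0 - \invapprox_\idxiter$. Your one departure is a genuine improvement in rigor: the paper cites \Cref{lem:relation-precision-matrix-qoimean} (stated only for $\qoicov_0 = \shat{\kernmat}^{-1}$) without comment, whereas you correctly re-derive that identity under the weaker hypothesis $\action_j^\top\shat{\kernmat}\qoicov_0 = \action_j^\top$.
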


\begin{proof}
	This can be seen as a direct consequence of recursively applying Bayes' theorem
	\begin{equation*}
		p(\qoi \mid \{\observ_\idxiter\}_{\idxiter=1}^m, \{\action_\idxiter\}_{\idxiter=1}^m) =
		\frac{p(\observ_m \mid \action_m, \qoi)p(\qoi \mid \{\observ_\idxiter\}_{\idxiter=1}^{m-1},
		\{\action_\idxiter\}_{\idxiter=1}^{m-1})}{\int p(\observ_m \mid \action_m, \qoi)p(\qoi \mid
		\{\observ_\idxiter\}_{\idxiter=1}^{m-1}, \{\action_\idxiter\}_{\idxiter=1}^{m-1}) d \qoi}.
	\end{equation*}

	However, here we also give a geometric proof based on the projection property of the precision
	matrix approximation \(\invapprox_\idxiter\). By using \cref{eqn:invapprox-batch-form} and the assumption
	on \(\qoicov_0\) we have that
	\begin{align*}
		\invapprox_\idxiter = \mActions_\idxiter (\mActions_\idxiter^\top  \shat{\kernmat}\mActions_\idxiter)^{-1} \mActions_\idxiter^\top & = \qoicov_0 \shat{\kernmat} \mActions_\idxiter (\mActions_\idxiter^\top \shat{\kernmat} \qoicov_0 \shat{\kernmat}\mActions_\idxiter)^{-1} \mActions_\idxiter^\top                          \\
		                                                                                                                                   & = \qoicov_0 \shat{\kernmat} \mActions_\idxiter (\mActions_\idxiter^\top \shat{\kernmat} \qoicov_0 \shat{\kernmat}\mActions_\idxiter)^{-1} \mActions_\idxiter^\top \shat{\kernmat}\qoicov_0
	\end{align*}
	This proves that
	\begin{equation*}
		\qoicov_\idxiter = \qoicov_0 - \invapprox_\idxiter = \qoicov_0 - \qoicov_0 \shat{\kernmat}
		\mActions_\idxiter (\mActions_\idxiter^\top \shat{\kernmat} \qoicov_0 \shat{\kernmat}\mActions_\idxiter)^{-1} \mActions_\idxiter^\top \shat{\kernmat}\qoicov_0
	\end{equation*}
	Now by \cref{eqn:relation-precision-matrix-qoimean} it holds that \(\invapprox_\idxiter(\labels - \vmu) =
	\qoimean_\idxiter\). This proves the claim.
\end{proof}

\begin{restatable}[Posterior Contraction]{proposition}{posteriorcontraction}
	\label{prop:posterior-contraction}
	Let \(\mActions_\idxiter \in \R^{n \times \idxiter}\) be the actions
	chosen by
	\Cref{alg:itergp}, then its
	posterior contracts as
	\begin{equation*}
		\tr(\qoicov_\idxiter \qoicov_0^{-1}) = \tr(\qoicov_\idxiter \shat{\kernmat}) =  n - \rank(\mActions_\idxiter).
	\end{equation*}
\end{restatable}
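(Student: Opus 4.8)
The plan is to reduce the entire statement to the elementary fact that the trace of an idempotent matrix equals the dimension of its range. First I would dispose of the left equality: the proposition operates in the standing setting $\qoicov_0 = \shat{\kernmat}^{-1}$ (as in \Cref{lem:geometric-properties} and \Cref{lem:relation-precision-matrix-qoimean}), so $\qoicov_0^{-1} = \shat{\kernmat}$ and hence $\tr(\qoicov_\idxiter \qoicov_0^{-1}) = \tr(\qoicov_\idxiter \shat{\kernmat})$ with nothing to prove.

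For the substantive equality I would use the covariance update $\qoicov_\idxiter = \shat{\kernmat}^{-1} - \invapprox_\idxiter$ to rewrite
\[
	\qoicov_\idxiter \shat{\kernmat} = \mI - \invapprox_\idxiter \shat{\kernmat} = \mH_\idxiter,
\]
which is exactly the matrix analysed in \Cref{cor:k-ortho-proj-properties}. Taking traces and using linearity gives $\tr(\qoicov_\idxiter \shat{\kernmat}) = n - \tr(\invapprox_\idxiter \shat{\kernmat})$, so the claim collapses to showing $\tr(\invapprox_\idxiter \shat{\kernmat}) = \rank(\mActions_\idxiter)$.

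To finish I would invoke \Cref{lem:geometric-properties}: by \eqref{eqn:ci-projection} together with the span identity \eqref{eqn:action-searchdir-span}, the matrix $\invapprox_\idxiter \shat{\kernmat}$ is the $\shat{\kernmat}$-orthogonal projection onto $\linspan{\mActions_\idxiter}$, and by \eqref{eqn:ci-projection-property} it is idempotent. Any idempotent matrix has all eigenvalues in $\{0,1\}$, so its trace counts the unit eigenvalues, i.e. equals $\dim(\operatorname{range}) = \dim\linspan{\mActions_\idxiter} = \rank(\mActions_\idxiter)$. Combining the two displays yields $\tr(\qoicov_\idxiter \shat{\kernmat}) = n - \rank(\mActions_\idxiter)$, as required. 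When the actions are linearly independent one could instead make the count fully explicit via the cyclic property on the batch form \eqref{eqn:invapprox-batch-form}, $\tr\big(\mActions_\idxiter(\mActions_\idxiter^\top \shat{\kernmat} \mActions_\idxiter)^{-1}\mActions_\idxiter^\top \shat{\kernmat}\big) = \tr\big((\mActions_\idxiter^\top \shat{\kernmat} \mActions_\idxiter)^{-1}\mActions_\idxiter^\top \shat{\kernmat}\mActions_\idxiter\big) = \rank(\mActions_\idxiter)$.

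The one point I would be careful about is that the projection here is orthogonal with respect to the $\shat{\kernmat}$-inner product rather than the Euclidean one, and I expect this to be the only place a reader might hesitate. It is in fact harmless: idempotency alone forces the eigenvalues into $\{0,1\}$ independent of which inner product makes the projection orthogonal, so the identity $\tr = \rank$ holds for any idempotent matrix. The $\shat{\kernmat}$-orthogonality is needed only to identify \emph{which} complement is projected out and plays no role in the trace count, so no spectral-theorem subtleties for self-adjointness in a weighted inner product are actually required.
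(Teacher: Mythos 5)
Your proof is correct and follows essentially the same route as the paper: both reduce the claim to $\tr(\invapprox_\idxiter\shat{\kernmat}) = \rank(\mActions_\idxiter)$ via $\qoicov_\idxiter = \qoicov_0 - \invapprox_\idxiter$ and then identify this trace as that of a projection, with your closing cyclic-property display being exactly the paper's computation (the paper writes it with a pseudoinverse $(\mActions_\idxiter^\top\shat{\kernmat}\mActions_\idxiter)^\dagger$ to cover rank-deficient actions, which your idempotency argument handles equally well). Your remark that idempotency alone, without Euclidean self-adjointness, forces the trace-equals-rank identity is a correct and worthwhile clarification.
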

\begin{proof}
	Since \(\mSigma_0 = \hat{\mK}^{-1}\), we have by \cref{eqn:invapprox-batch-form}, that
	\begin{align*}
		\tr(\mSigma_\idxiter \mSigma_0^{-1}) & = \tr((\mSigma_0 - \invapprox_\idxiter) \shat{\kernmat})                                                                                                                                                                   \\
		                                     & = \tr(\mI_n - \mActions_\idxiter(\mActions_\idxiter^\top \shat{\kernmat} \mActions_\idxiter)^{\dagger}\mActions_\idxiter^\top \shat{\kernmat})                                                                             \\
		                                     & = \tr(\mI_n) - \operatorname{tr}(\underbracket[0.14ex]{\mActions_\idxiter^\top \shat{\kernmat}\mActions_\idxiter(\mActions_\idxiter^\top \shat{\kernmat} \mActions_\idxiter)^\dagger}_{\in \R^{\idxiter \times \idxiter}}) \\
		                                     & = n - \rank(\mActions_\idxiter)
	\end{align*}
	Now, if the actions \(\mS_\idxiter\) are chosen linearly independent, then
	\(\rank(\mS_\idxiter) = \idxiter\).
\end{proof}

\begin{theorem}[Online GP Approximation with Algorithm 1]
	\label{thm:online-inference}
	Let \(n, n' \in \N\) and consider training data sets \(\mX \in \R^{n \times d}, \vy \in
	\R^n\) and \(\mX' \in \R^{n' \times d}, \vy' \in \R^{n'}\). Consider two
	sequences of actions \((\action_\idxiter)_{\idxiter=1}^n \in
	\R^{n}\) and \((\tilde\action_\idxiter)_{\idxiter=1}^{n+n'} \in
	\R^{n+n'}\) such that for all \(i \in \{1, \dots, n\}\), it holds that
	\begin{equation}
		\label{eqn:proof-online-inference-actions}
		\tilde\action_\idxiter = \begin{pmatrix} \action_\idxiter\\  \vzero\end{pmatrix}
	\end{equation}
	Then the posterior returned by \Cref{alg:itergp} for the dataset \((\mX, \vy)\) using actions
	\(\action_\idxiter\) is identical to the posterior returned by \Cref{alg:itergp} for the
	extended dataset using actions \(\tilde\action_\idxiter\), i.e. it holds for any \(i \in \{1, \dots,
	n\}\), that
	\begin{equation*}
		\textsc{IterGP}(\mu, k, \mX, \vy, (\action_\idxiter)_\idxiter) = (\mu_i, k_i) = (\tilde\mu_i, \tilde{k}_i) =
		\textsc{IterGP}\left(\mu, k, \begin{pmatrix} \mX\\ \mX'\end{pmatrix}, \begin{pmatrix} \vy \\ \vy' \end{pmatrix},
		(\tilde\action_\idxiter)_\idxiter \right).
	\end{equation*}
\end{theorem}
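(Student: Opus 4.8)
First I would set up block notation, exploiting the fact that the extended actions $\tilde\action_\idxiter$ put all their mass on the first $n$ (original) coordinates, so that the probabilistic linear solver driving \Cref{alg:itergp} never ``sees'' the appended data: every internal quantity it computes on the extended problem should coincide with the corresponding quantity on the original problem, padded with zeros in the $n'$ new coordinates. Concretely, I would write the extended regularized kernel matrix $\shat{\tilde\kernmat} = \kernel(\tilde\mX,\tilde\mX) + \sigma^2\mI_{n+n'}$ (with stacked inputs $\tilde\mX = \begin{pmatrix}\mX\\\mX'\end{pmatrix}$, labels $\tilde\vy = \begin{pmatrix}\vy\\\vy'\end{pmatrix}$ and mean $\tilde\vmu = \mu(\tilde\mX)$) as
\[
\shat{\tilde\kernmat} = \begin{pmatrix} \shat{\kernmat} & \kernel(\mX,\mX')\\ \kernel(\mX',\mX) & \kernel(\mX',\mX')+\sigma^2\mI_{n'}\end{pmatrix},
\]
where $\shat{\kernmat} = \kernel(\mX,\mX)+\sigma^2\mI_n$ is the original regularized kernel matrix. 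I would then prove by induction on $\idxiter \in \{0,\dots,n\}$ the three padded identities
\[
\tilde\searchdir_\idxiter = \begin{pmatrix}\searchdir_\idxiter\\\vzero\end{pmatrix},\qquad \tilde\invapprox_\idxiter = \begin{pmatrix}\invapprox_\idxiter & \mZero\\ \mZero & \mZero\end{pmatrix},\qquad \tilde\qoimean_\idxiter = \begin{pmatrix}\qoimean_\idxiter\\\vzero\end{pmatrix},
\]
where the untilded objects are those produced by \Cref{alg:itergp} on $(\mX,\vy)$ with actions $\action_\idxiter$. The base case $\idxiter=0$ is immediate from the initialization $\tilde\invapprox_0 = \mZero$, $\tilde\qoimean_0 = \vzero$.

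For the inductive step I would walk through the lines of \Cref{alg:itergp} in order, and the one place that needs genuine care is the search-direction update $\tilde\searchdir_\idxiter = (\mI - \tilde\invapprox_{\idxiter-1}\shat{\tilde\kernmat})\tilde\action_\idxiter$. Multiplying $\tilde\action_\idxiter = \begin{pmatrix}\action_\idxiter\\\vzero\end{pmatrix}$ by $\shat{\tilde\kernmat}$ produces a \emph{nonzero} bottom block $\kernel(\mX',\mX)\action_\idxiter$, so one might worry the new coordinates get activated; but the block-diagonal induction hypothesis for $\tilde\invapprox_{\idxiter-1}$ annihilates that bottom block, yielding $\tilde\invapprox_{\idxiter-1}\shat{\tilde\kernmat}\tilde\action_\idxiter = \begin{pmatrix}\invapprox_{\idxiter-1}\shat{\kernmat}\action_\idxiter\\\vzero\end{pmatrix}$ and therefore $\tilde\searchdir_\idxiter = \begin{pmatrix}(\mI - \invapprox_{\idxiter-1}\shat{\kernmat})\action_\idxiter\\\vzero\end{pmatrix} = \begin{pmatrix}\searchdir_\idxiter\\\vzero\end{pmatrix}$. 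Since $\tilde\searchdir_\idxiter$ and $\tilde\action_\idxiter$ are both supported on the first $n$ coordinates, the scalars collapse to their original values: $\tilde\searchdirsqnorm_\idxiter = \tilde\action_\idxiter^\top\shat{\tilde\kernmat}\tilde\searchdir_\idxiter = \action_\idxiter^\top\shat{\kernmat}\searchdir_\idxiter = \searchdirsqnorm_\idxiter$, and, using that the top block of $\tilde\residual_{\idxiter-1} = (\tilde\vy-\tilde\vmu) - \shat{\tilde\kernmat}\tilde\qoimean_{\idxiter-1}$ equals the original residual $\residual_{\idxiter-1}$ while its (irrelevant) bottom block is zeroed out by $\tilde\action_\idxiter$, also $\tilde\observ_\idxiter = \observ_\idxiter$. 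The rank-one updates $\tilde\invapprox_\idxiter = \tilde\invapprox_{\idxiter-1} + \tilde\searchdirsqnorm_\idxiter^{-1}\tilde\searchdir_\idxiter\tilde\searchdir_\idxiter^\top$ and $\tilde\qoimean_\idxiter = \tilde\qoimean_{\idxiter-1} + (\tilde\observ_\idxiter/\tilde\searchdirsqnorm_\idxiter)\tilde\searchdir_\idxiter$ then preserve the padded/block form, closing the induction.

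Finally I would convert the padded structure into equality of the returned GP posteriors. Partitioning $\kernel(\cdot,\tilde\mX) = \begin{pmatrix}\kernel(\cdot,\mX) & \kernel(\cdot,\mX')\end{pmatrix}$, the vanishing bottom block of $\tilde\qoimean_\idxiter$ gives $\tilde\mu_\idxiter(\cdot) = \mu(\cdot) + \kernel(\cdot,\tilde\mX)\tilde\qoimean_\idxiter = \mu(\cdot) + \kernel(\cdot,\mX)\qoimean_\idxiter = \mu_\idxiter(\cdot)$, and the block-diagonal form of $\tilde\invapprox_\idxiter$ gives $\kernel(\cdot,\tilde\mX)\tilde\invapprox_\idxiter\kernel(\tilde\mX,\cdot) = \kernel(\cdot,\mX)\invapprox_\idxiter\kernel(\mX,\cdot)$, so $\tilde k_\idxiter = k_\idxiter$. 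The only genuinely delicate step is the search-direction computation above: the cross-block $\kernel(\mX',\mX)$ does excite the new-data coordinates under $\shat{\tilde\kernmat}$, but it is immediately projected away by $\tilde\invapprox_{\idxiter-1}$, so the solver's trajectory stays confined to the original coordinate subspace; everything else is bookkeeping on block-structured matrices.
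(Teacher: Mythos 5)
Your proposal is correct and follows essentially the same route as the paper's proof: an induction showing the extended-problem search directions are the original ones padded with zeros (with the cross-block $\kernel(\mX',\mX)$ contribution annihilated by the projection), from which $\tilde\searchdirsqnorm_\idxiter = \searchdirsqnorm_\idxiter$ and $\tilde\observ_\idxiter = \observ_\idxiter$ follow, and hence equality of the returned mean and covariance functions. The only cosmetic difference is that you carry the padded forms of $\tilde\invapprox_\idxiter$ and $\tilde\qoimean_\idxiter$ inside the induction hypothesis, whereas the paper inducts only on the search directions and recovers those quantities afterward from the rank-one sums.
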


\begin{proof}
	Define \(\tilde{\mX} = \begin{pmatrix} \mX\\ \mX'\end{pmatrix}\) and \(\tilde{\vy} =
	\begin{pmatrix} \vy \\ \vy' \end{pmatrix}\). We begin by showing that the search directions of both methods
	satisfy
	\begin{equation}
		\label{eqn:proof-online-inference-searchdirs}
		\searchdir_i' = \begin{pmatrix} \searchdir_\idxiter\\  \vzero\end{pmatrix}.
	\end{equation}
	We proceed by induction. For \(\idxiter = 0\) it holds by definition of \Cref{alg:itergp} and
	\cref{eqn:proof-online-inference-actions} that
	\begin{equation}
		\tilde\searchdir_0 = \tilde\action_0 = \begin{pmatrix} \action_0\\  \vzero\end{pmatrix} =  \begin{pmatrix} \searchdir_0\\  \vzero\end{pmatrix}.
	\end{equation}
	Now for the induction step \(\idxiter \to \idxiter + 1\), assume that \eqref{eqn:proof-online-inference-searchdirs}
	holds for \(j \in \{1, \dots, \idxiter\}\). Then, we have
	\begin{align*}
		\tilde\searchdir_{\idxiter + 1} & = \tilde\qoicov_{\idxiter-1} (k(\tilde{\mX}, \tilde{\mX}) + \sigma^2 \mI_{n+n'}) \tilde\action_{\idxiter + 1} \\  
		&= (\mI_{n+n'} - \tilde\invapprox_{\idxiter} (k(\tilde{\mX}, \tilde{\mX}) + \sigma^2 \mI_{n+n'})) \tilde\action_{\idxiter + 1}\\
		&= \tilde\action_{\idxiter + 1} - \sum_{j=1}^{\idxiter}\frac{1}{\tilde\searchdirsqnorm_j}\tilde\searchdir_j (\tilde\searchdir_j)^\top (k(\tilde{\mX}, \tilde{\mX}) + \sigma^2 \mI_{n+n'}) \tilde\action_{\idxiter + 1}\\
		&\overset{\textup{IH}}{=} \begin{pmatrix} \action_{\idxiter+1}\\  \vzero\end{pmatrix} - \sum_{j=1}^{\idxiter}\frac{1}{\tilde\searchdirsqnorm_j}\begin{pmatrix} \searchdir_j\\  \vzero\end{pmatrix}\begin{pmatrix} \searchdir_j^\top &  \vzero\end{pmatrix} \begin{pmatrix} k(\mX, \mX) + \mI_{n} & k(\mX, \mX')\\ k(\mX', \mX) & k(\mX', \mX') + \mI_{n'} \end{pmatrix} \begin{pmatrix} \action_{\idxiter+1}\\  \vzero\end{pmatrix}\\
		&= \begin{pmatrix} \action_{\idxiter+1} - \sum_{j=1}^{\idxiter}\frac{1}{\searchdirsqnorm_j}\searchdir_j (\searchdir_j)^\top \shat\kernmat \action_{\idxiter+1}\\ \vzero\end{pmatrix}\\
		&= \begin{pmatrix} \searchdir_{\idxiter+1}\\  \vzero\end{pmatrix}
	\end{align*}
	where we used that \(\tilde\searchdirsqnorm_j = \tilde\action_j^\top (k(\tilde{\mX}, \tilde{\mX}) + \sigma^2 \mI_{n+n'}) \tilde\searchdir_j = \action_j^\top \shat\kernmat \searchdir_j = \searchdirsqnorm_j\). This proves \cref{eqn:proof-online-inference-searchdirs}. Now recognize that 
	\begin{align*}
		\tilde\observ_j = \tilde\action_j^\top \tilde\residual_j &= \tilde\action_j^\top (\tilde\vy - \tilde\vmu - \tilde\kernmat \tilde\invapprox_\idxiter (\tilde\vy - \tilde\vmu))\\
		&=\tilde\action_j^\top (\tilde\vy - \tilde\vmu - (\tilde\kernmat +\sigma^2\mI)\sum_{\ell=1}^{j}\frac{1}{\tilde\searchdirsqnorm_\ell}\tilde\searchdir_\ell \tilde\searchdir_\ell^\top (\tilde\vy - \tilde\vmu))\\
		&= \action_j^\top(\vy - \vmu) - \sum_{\ell=1}^{j}\frac{1}{\searchdirsqnorm_\ell} \action_j^\top \shat{\kernmat}\searchdir_\ell \searchdir_\ell^\top (\vy - \vmu)\\
		&= \action_j^\top(\labels - \vmu - \shat{\kernmat} \invapprox_j(\labels - \vmu))\\
		&= \action_j^\top \residual_j\\
		&= \observ_j
	\end{align*}
	Therefore, we finally have that
	\begin{align*}
		\tilde\mu_i(\cdot) &= \mu(\cdot) + k(\cdot, \tilde\mX) \tilde\qoimean_\idxiter = \mu(\cdot) + k(\cdot, \tilde\mX) \sum_{j=1}^\idxiter \frac{\tilde\observ_j}{\tilde\searchdirsqnorm_j} \tilde\searchdir_j\\
		&= \mu(\cdot) + k(\cdot, \mX) \qoimean_\idxiter\\
		\intertext{as well as}
		\tilde{k}_i(\cdot, \cdot) &= k(\cdot, \cdot) - k(\cdot, \tilde\mX) \tilde\invapprox_\idxiter k(\tilde \mX, \cdot) =  k(\cdot, \cdot) - k(\cdot, \tilde\mX) \sum_{j=1}^{\idxiter}\frac{1}{\tilde\searchdirsqnorm_j}\tilde\searchdir_j (\tilde\searchdir_j)^\top k(\tilde \mX, \cdot)\\
		&= k(\cdot, \cdot) - k(\cdot, \traindata) \sum_{j=1}^{\idxiter}\frac{1}{\searchdirsqnorm_j}\searchdir_j (\searchdir_j)^\top k( \mX, \cdot)= k(\cdot, \cdot) - k(\cdot, \traindata) \invapprox_\idxiter k( \mX, \cdot) = k_i(\cdot, \cdot).
	\end{align*}
\end{proof}

\begin{remark}[Streaming Gaussian Processes]
	\Cref{thm:online-inference} shows that any variant of IterGP can be used in
	the online setting where data arrives sequentially \emph{while} the algorithm is
	running. Now, if we assume data points arrive one at a time, we choose unit vector actions
	(IterGP-Chol) and perform one iteration of \Cref{alg:itergp} after each data point, then
	\Cref{alg:itergp} simply computes the mathematical GP posterior.
\end{remark}

\subsection{Approximation of Representer Weights}
\label{suppsec:approx-representer-weights}

\representerweightserror*

\begin{proof}
	Define \(\mH_\idxiter = \qoicov_{\idxiter}\shat{\kernmat} = \mI - \invapprox_\idxiter
	\shat{\kernmat}\). We have by \Cref{lem:relation-precision-matrix-qoimean}, that
	\begin{align*}
		\norm{\qoi - \qoimean_\idxiter}_{\shat{\kernmat}}^2 & = \norm{\mH_\idxiter\qoi}_{\shat{\kernmat}}^2 = (\mH_\idxiter \qoi)^\top \shat{\kernmat} \mH_\idxiter \qoi \overset{\eqref{eqn:hi-k-hi}}{=} \qoi^\top \mH_\idxiter \qoi = \bar{\vv}_*^\top \mH_\idxiter \bar{\vv}_* \norm{\qoi}_{\shat{\kernmat}}^2
	\end{align*}
	This proves the first equality of \Cref{prop:representer-weights-error}. Further it holds that
	\begin{align*}
		\norm{\mH_\idxiter\qoi}_{\shat{\kernmat}} & = \lVert{\shat{\kernmat}^{\frac{1}{2}}\mH_\idxiter\qoi}\rVert_2 = \lVert{(\mI - \shat{\kernmat}^{\frac{1}{2}} \invapprox_\idxiter\shat{\kernmat}^{\frac{1}{2}}) \shat{\kernmat}^{\frac{1}{2}}\qoi}\rVert_2 \leq \lVert{\mI - \shat{\kernmat}^{\frac{1}{2}} \invapprox_\idxiter\shat{\kernmat}^{\frac{1}{2}}}\rVert_2 \norm{\qoi}_{\shat{\kernmat}} \\
		                                          & = \lambda_{\max}(\mI - \shat{\kernmat}^{\frac{1}{2}} \invapprox_\idxiter\shat{\kernmat}^{\frac{1}{2}}) \norm{\qoi}_{\shat{\kernmat}}.
	\end{align*}
	Now by Weyl's inequality and the fact that
	\(\shat{\kernmat}^{\frac{1}{2}}\invapprox_{\idxiter}
	\shat{\kernmat}^{\frac{1}{2}}\) is positive semi-definite, it holds that
	\begin{align*}
		\lambda_{\max}(\mH_\idxiter) & = \lambda_{\max}(\mI - \shat{\kernmat}^{\frac{1}{2}} \invapprox_\idxiter\shat{\kernmat}^{\frac{1}{2}}) \leq \lambda_{\max}(\mI) - \lambda_{\min}(\shat{\kernmat}^{\frac{1}{2}} \invapprox_\idxiter\shat{\kernmat}^{\frac{1}{2}})\leq 1.
	\end{align*}
	Now, recall that similar matrices \(\mA\) and \(\mB = \mP^{-1} \mA \mP\) have the same eigenvalues. Therefore 
	\begin{equation*}
		\mI - \shat{\kernmat}^{\frac{1}{2}} \invapprox_\idxiter\shat{\kernmat}^{\frac{1}{2}} = \shat{\kernmat}^{\frac{1}{2}}(\mI - \invapprox_\idxiter \shat\kernmat)\shat{\kernmat}^{-\frac{1}{2}}
	\end{equation*}
	and \(\mI - \invapprox_\idxiter \shat\kernmat\) have the same eigenvalues.
	Finally, since by \cref{eqn:sigmai-projection} \(\mH_\idxiter\) is a projection onto
	\(\linspan{\mActions_\idxiter}^{\perp_{\shat{\kernmat}}}\), it has full rank at iteration \(n\) if the
	actions are linearly independent and therefore \(\lambda_{\max}(\mH_n) = 1\). This proves the
	claim.
\end{proof}

\subsection{Convergence Analysis of the Posterior Mean Approximation}
\label{suppsec:convergence-posterior-mean}

\thmconvergencerkhs*

\begin{proof}

	Let \(\rho(\idxiter)\) such that \(\norm{\qoi - \qoimean_\idxiter}_{\shat{\kernmat}} \leq
	\rho(\idxiter)
	\norm{\qoi - \qoimean_0}_{\shat{\kernmat}}\), where \(\qoimean_0 = \vzero\). Then, we have for \(i \in \{0, \dots, n\}\),
	that
	\begin{align*}
		\norm{\qoi - \qoimean_\idxiter}_{\kernmat}^2 & \leq \norm{\qoi - \qoimean_\idxiter}_{\shat{\kernmat}}^2 \leq \rho(\idxiter)^2 \norm{\qoi - \qoimean_0}_{\shat{\kernmat}}^2 \\ &= \rho(\idxiter)^2 \big(\norm{\qoi - \qoimean_0}_{\kernmat}^2 + \sigma^2 \frac{1}{\lambda_{\min}(\kernmat)} \underbracket[0.14ex]{\lambda_{\min}(\kernmat)\norm{\qoi - \qoimean_0}_2^2}_{\leq \norm{\qoi - \qoimean_0}^2_\kernmat} \big)\\
		                                             & \leq \rho(\idxiter)^2 \left(1 + \frac{\sigma^2}{\lambda_{\min}(\kernmat)}\right)\norm{\qoi - \qoimean_0}_{\kernmat}^2
	\end{align*}
	Now by assumption \(\mu_\idxiter(\cdot) = \mu(\cdot) +
	\sum_{j=1}^n  (\vv_\idxiter)_j k(\cdot, \vx_j) = \mu(\cdot) + k(\cdot,
	\traindata) \invapprox_\idxiter \labels\). By the reproducing property we obtain for \(\Delta =
	\qoi - \qoimean_\idxiter\) that
	\begin{align*}
		\norm{\qoi - \qoimean_\idxiter}_{\kernmat}^2 & = \Delta^\top \kernmat \Delta                                                                                                                                                                         \\
		                                             & = \sum_{\ell=1}^n \sum_{j=1}^n \Delta_\ell \Delta_j k(\vx_\ell, \vx_j)                                                                                                                                \\
		                                             & =\sum_{\ell=1}^n \sum_{j=1}^n \Delta_\ell \Delta_j \langle k(\cdot, \vx_\ell),  k(\cdot, \vx_j) \rangle_{\mathcal{H}_k}                 & \text{\(k\) is the reproducing kernel of \(\mathcal{H}_k\)} \\
		                                             & =\langle \sum_{\ell=1}^n \Delta_\ell k(\cdot, \vx_\ell), \sum_{j=1}^n \Delta_j k(\cdot, \vx_j) \rangle_{\mathcal{H}_k}                                                                                \\
		                                             & = \norm{\sum_{\ell=1}^n \Delta_\ell k(\cdot, \vx_\ell)}_{\mathcal{H}_k}^2                                                                                                                             \\
		                                             & = \norm{\sum_{\ell=1}^n (\qoi)_\ell k(\cdot, \vx_\ell) - \sum_{\ell=1}^n (\qoimean_\idxiter)_\ell k(\cdot, \vx_\ell)}_{\mathcal{H}_k}^2                                                               \\
		                                             & = \norm{\mu_* - \mu_\idxiter}^2_{\mathcal{H}_k}                                                                                         & \text{See Theorem 3.4 in \citet{Kanagawa2018}}
	\end{align*}

	Combining the above and setting \(c(\sigma^2) = 1 + \frac{\sigma^2}{\lambda_{\min}(\kernmat)}\) we obtain
	\begin{equation*}
		\norm{\mu_* - \mu_\idxiter}_{\mathcal{H}_k} = \norm{\qoi - \qoimean_\idxiter}_{\kernmat} \leq
		\rho(\idxiter) c(\sigma^2) \norm{\qoi - \qoimean_0}_{\kernmat} = \rho(\idxiter) c(\sigma^2)
		\norm{\mu_* - \mu_0}_{\mathcal{H}_k}.
	\end{equation*}
\end{proof}

\subsection{Combined Uncertainty as Worst Case Error}
\label{suppsec:combined-uncertainty-worst-case}

\thmworstcaseerror*

\begin{proof}
	Let \(\vx_0 = \vx\), \(c_0=1\) and \(c_j = - (\invapprox_\idxiter
	k^\sigma(\traindata, \vx))_{j}\) for \(j=1, \dots n\), where
	\(k^\sigma(\cdot, \cdot) \coloneqq k(\cdot, \cdot) + \sigma^2 \delta(\cdot, \cdot)\). Then by
	Lemma 3.9 of \citet{Kanagawa2018}, it holds that
	\begin{align*}
		\bigg(\sup_{g \in \mathcal{H}_{k^\sigma} : \norm{g}_{\mathcal{H}_{k^\sigma}} \leq 1} & (g(\vx) - \mu_\idxiter^g(\vx)) \bigg)^2 = \bigg(\sup_{g \in \mathcal{H}_{k^\sigma} : \norm{g}_{\mathcal{H}_{k^\sigma}} \leq 1} \sum_{j=0}^n c_j g(\vx_j)\bigg)^2                                                                      \\
		                                                                                     & = \norm{k^\sigma(\cdot, \vx_0) - \sum_{j=1}^n k(\vx, \vx_j) \invapprox_\idxiter k^\sigma(\cdot, \vx_j)}_{\mathcal{H}_{k^\sigma}}^2                                                                                                    \\
		                                                                                     & = \norm{k^\sigma(\cdot, \vx) - k(\vx, \traindata) \invapprox_\idxiter k^\sigma(\traindata, \cdot)}_{\mathcal{H}_{k^\sigma}}^2                                                                                                         \\
		                                                                                     & = \langle k^\sigma(\cdot, \vx), k^\sigma(\cdot, \vx) \rangle_{\mathcal{H}_{k^\sigma}} - 2 \langle  k^\sigma(\cdot, \vx), k(\vx, \traindata) \invapprox_\idxiter k^\sigma(\traindata, \cdot)\rangle_{\mathcal{H}_{k^\sigma}}           \\ &\qquad+ \langle k(\vx, \traindata) \invapprox_\idxiter k^\sigma(\traindata, \cdot), k(\vx, \traindata) \invapprox_\idxiter k^\sigma(\traindata, \cdot) \rangle_{\mathcal{H}_{k^\sigma}}\\
		\intertext{Now by the reproducing property, it follows that}
		                                                                                     & = k^\sigma(\vx, \vx) - 2 k^\sigma(\vx, \traindata) \invapprox_{\idxiter} k^\sigma(\traindata, \vx) + k^\sigma(\vx, \traindata) \invapprox_{\idxiter} k^\sigma(\traindata, \traindata) \invapprox_{\idxiter} k^\sigma(\traindata, \vx)
		\intertext{If \(\sigma^2 > 0\) and \(\vx \neq \vx_j\) or if \(\sigma^2=0\), it holds that \(k^\sigma(\vx, \traindata) = k(\vx, \traindata)\). Further by definition \(k^\sigma(\traindata, \traindata) = \shat{\kernmat}\) and
			finally by \eqref{eqn:ci-k-ci}, it holds that \(\invapprox_\idxiter \shat{\kernmat}\invapprox_\idxiter =
			\invapprox_\idxiter\). Therefore we have}
		                                                                                     & = k(\vx, \vx) + \sigma^2 - 2 k(\vx, \traindata) \invapprox_{\idxiter} k(\traindata, \vx) + k(\vx, \traindata) \invapprox_{\idxiter} \shat{\kernmat} \invapprox_{\idxiter} k(\traindata, \vx)                                          \\
		                                                                                     & = k(\vx, \vx)  -  k(\vx, \traindata) \invapprox_{\idxiter} k(\traindata, \vx) + \sigma^2                                                                                                                                              \\
		                                                                                     & = k_\idxiter(\vx, \vx) + \sigma^2
	\end{align*}
	We prove \cref{eqn:computational-uncertainty-worst-case} by an analogous argument. Choose \(c_j \coloneqq
	((\shat{\kernmat}^{-1} - \invapprox_\idxiter)
	k^\sigma(\traindata, \vx))_{j}\). We have
	\begin{align*}
		 & \bigg(\sup_{g \in \mathcal{H}_{k^\sigma} : \norm{g}_{\mathcal{H}_{k^\sigma}} \leq 1} (\mu_*^g(\vx) - \mu_\idxiter^g(\vx)) \bigg)^2 = \bigg(\sup_{g \in \mathcal{H}_{k^\sigma} : \norm{g}_{\mathcal{H}_{k^\sigma}} \leq 1} \sum_{j=0}^n c_j g(\vx_j)\bigg)^2                                                                             \\
		 & = \norm{\sum_{j=1}^n k(\vx, \vx_j) (\shat{\kernmat}^{-1} - \invapprox_\idxiter)k^\sigma(\cdot, \vx_j)}_{\mathcal{H}_{k^\sigma}}^2                                                                                                                                                                                                       \\
		 & = \norm{k(\vx, \traindata) (\shat{\kernmat}^{-1} - \invapprox_\idxiter)k^\sigma(\traindata, \cdot)}_{\mathcal{H}_{k^\sigma}}^2                                                                                                                                                                                                          \\
		 & = k^\sigma(\vx, \traindata)\shat{\kernmat}^{-1}\shat{\kernmat} \shat{\kernmat}^{-1}k^\sigma(\traindata, \vx) - 2 k^\sigma(\vx, \traindata)\shat{\kernmat}^{-1}\shat{\kernmat} \invapprox_\idxiter k^\sigma(\traindata, \vx) + k^\sigma(\vx, \traindata)\invapprox_\idxiter\shat{\kernmat} \invapprox_\idxiter k^\sigma(\traindata, \vx) \\
		\intertext{Again, we use that \(k^\sigma(\vx, \traindata) = k(\vx, \traindata)\) by assumption and \eqref{eqn:ci-k-ci}. Therefore}
		 & = k(\vx, \traindata)(\shat{\kernmat}^{-1} - \invapprox_\idxiter)k(\traindata, \vx)                                                                                                                                                                                                                                                      \\
		 & = k_\idxiter^{\textup{comp}}(\vx, \vx)
	\end{align*}
	This concludes the proof.
\end{proof}

\section{Implementation of Algorithm 1}

\subsection{Policy Choice}
As illustrated in \Cref{fig:uncertainty-decomposition}, the choice of policy of \Cref{alg:itergp} determines where computation in input space is targeted and therefore where the combined posterior contracts first. However, the policy also determines whether the error in the posterior mean or (co-)variance are predominantly reduced first, as \Cref{fig:recovered-methods-illustration} shows (cf. IterGP-Chol and IterGP-PBR). Therefore the policy choice is application-dependent. If I am primarily interested in the predictive mean, I may select residual actions (IterGP-CG). If downstream I am making use of the predictive uncertainty, I may want to contract uncertainty globally as quickly as possible at the expense of predictive accuracy (IterGP-PI). Such a choice is not unique to IterGP, but necessary whenever we select a GP approximation. What IterGP adds is computation-aware, meaningful uncertainty quantification in the sense of \Cref{cor:worst-case-error} no matter the choice of policy.

\begin{figure}
	\includegraphics[width=\textwidth]{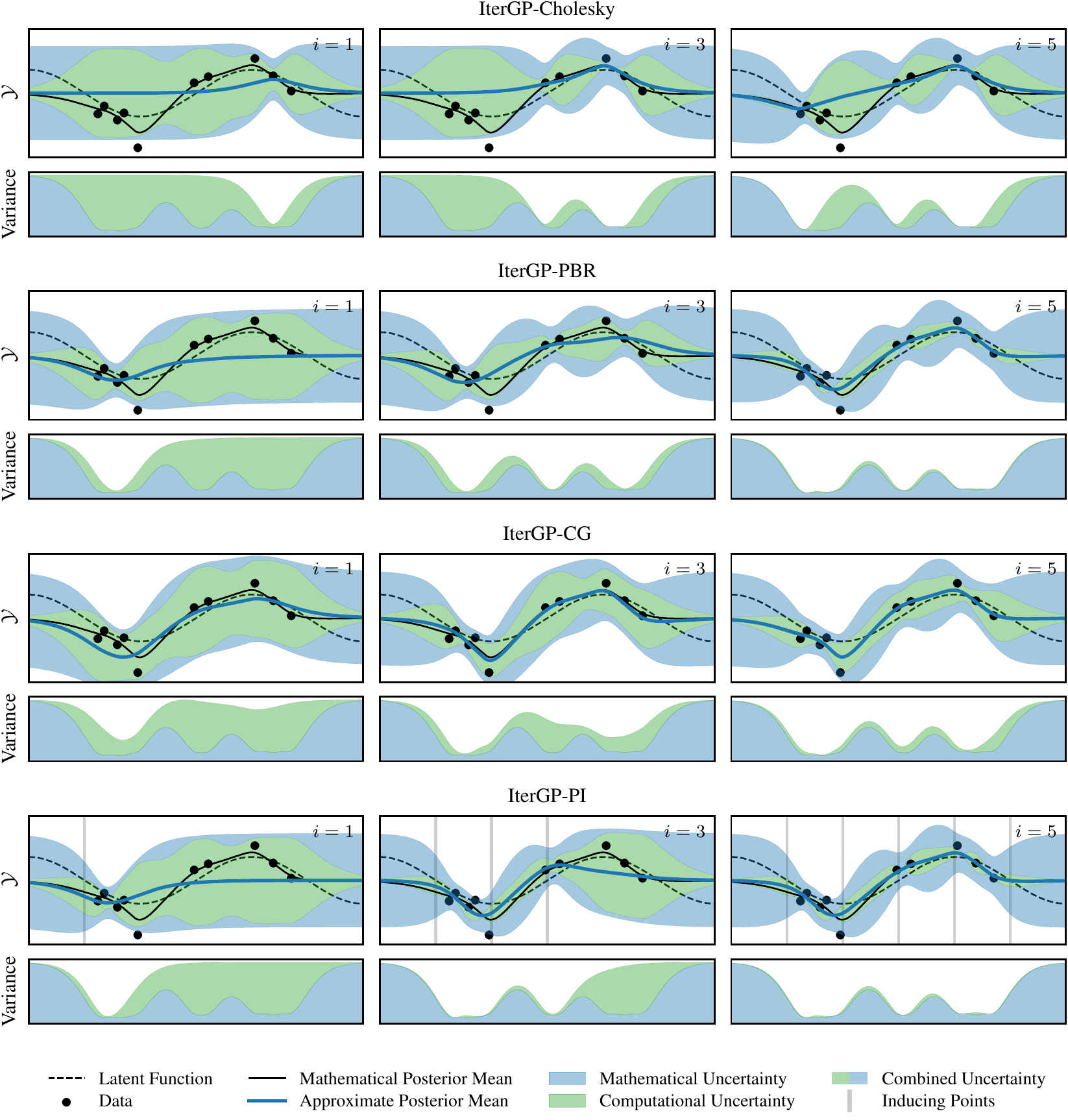}
	\caption{\emph{Illustration of IterGP analogs of commonly used GP approximations.}}
	\label{fig:recovered-methods-illustration}
\end{figure}

\subsection{Stopping Criterion}

In our implementation of \Cref{alg:itergp} we use the following two stopping criteria. Our computational budget can be directly controlled by specifying a \emph{maximum number of iterations}, since each iteration of IterGP needs the same number of matrix-vector multiplies. Alternatively, we terminate if the \emph{absolute or relative norm of the residual} are sufficiently small, i.e. if
\begin{equation}
	\norm{\residual_\idxiter}_2 < \delta_{\mathrm{abstol}} \qquad \text{or} \qquad \norm{\residual_\idxiter}_2 < \delta_{\mathrm{reltol}} \norm{\labels}_2.
\end{equation}
Of course other choices are possible. From a probabilistic numerics standpoint one may want to terminate once the combined marginal uncertainty at the training data is sufficiently small relative to the observation noise.

\subsection{Efficient Sampling from the Combined Posterior}
\label{suppsec:efficient-sampling}

Sampling from an exact GP posterior has cubic cost \(\bigO(n_\diamond^3)\) in the number of evaluation points \(n_\diamond\), which is
prohibitive for many useful downstream applications such as numerical integration over the
posterior using Monte-Carlo methods. \citet{Wilson2020,Wilson2020a} recently showed how to make use of
\emph{Matheron's rule} \cite{Journel1976,Chiles2009,Doucet2010} to efficiently sample from a GP posterior by
sampling from the prior and then performing a pathwise update. We can directly make use of this
strategy since \Cref{alg:itergp} computes a low-rank approximation to the precision matrix.
Assume we are given a draw \(f'_{\textup{prior}} \in
\mathcal{H}_k^\theta\) from the prior\footnote{In infinite dimensional reproducing kernel Hilbert spaces samples \(f \sim
	\GP(\mu, k)\) from a Gaussian process almost surely do not lie in the RKHS
	\(\mathcal{H}_k\) \citep[Cor.~4.10,][]{Kanagawa2018}. However, there exists \(f' \in \mathcal{H}_k^\theta\) in a larger RKHS
	\(\mathcal{H}_k^\theta \supset \mathcal{H}_k\) such that \(f'(\vx) =
	f(\vx)\) with probability 1 \citep[Thm.~4.12,][]{Kanagawa2018}.}
such that \(\labels'\sim \Normal(f'_{\textup{prior}}(\traindata), \sigma^2 \mI)\) constitutes a draw from the prior predictive. Then
\begin{equation}
	f'(\cdot) = f'_{\textup{prior}}(\cdot) + k(\cdot, \mX) \invapprox_\idxiter (\labels - \labels')
\end{equation}
is a draw from the combined posterior by Matheron's rule, which we can evaluate in
\(\bigO(n_\diamond n \idxiter)\) for \(n_\diamond\) evaluation points, since \(\invapprox_i\) has rank \(i\).

\section{Additional Experimental Results}
\label{sec:additional-experimental-results}

\begin{figure}[h!]
	\begin{subfigure}[b]{0.5\textwidth}
		\centering
		\includegraphics[width=\textwidth]{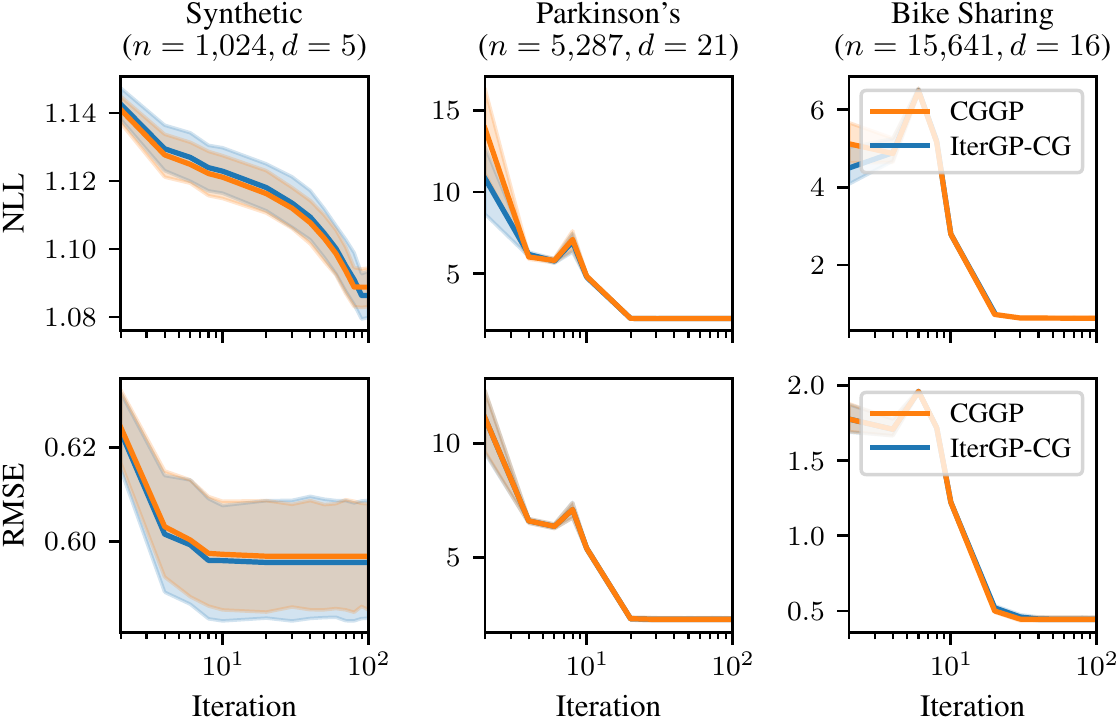}
		\caption{RBF kernel}
	\end{subfigure}\hfill
	\begin{subfigure}[b]{0.5\textwidth}
		\centering
		\includegraphics[width=\textwidth]{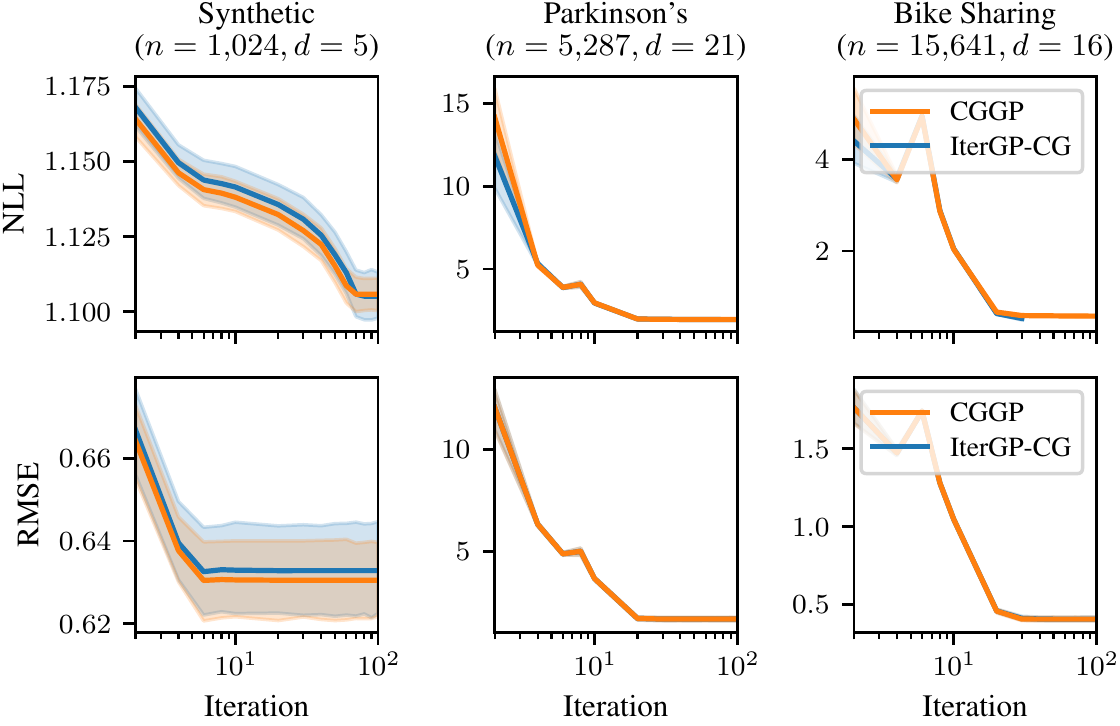}
		\caption{Mat\'ern(\(\frac{3}{2}\)) kernel}
	\end{subfigure}
	\caption{\emph{Generalization of CGGP and its closest IterGP analog.} GP regression using an RBF and Mat\'ern(\(\frac{3}{2}\)) kernel on UCI datasets. The plot shows the average generalization error in terms of NLL and RMSE for an
		increasing number of solver iterations. The posterior mean of IterGP-CG and CGGP is
		identical, which explains the identical RMSE.}
\end{figure}

\begin{figure}[h!]
	\begin{subfigure}[b]{\textwidth}
		\centering
		\includegraphics[width=\textwidth]{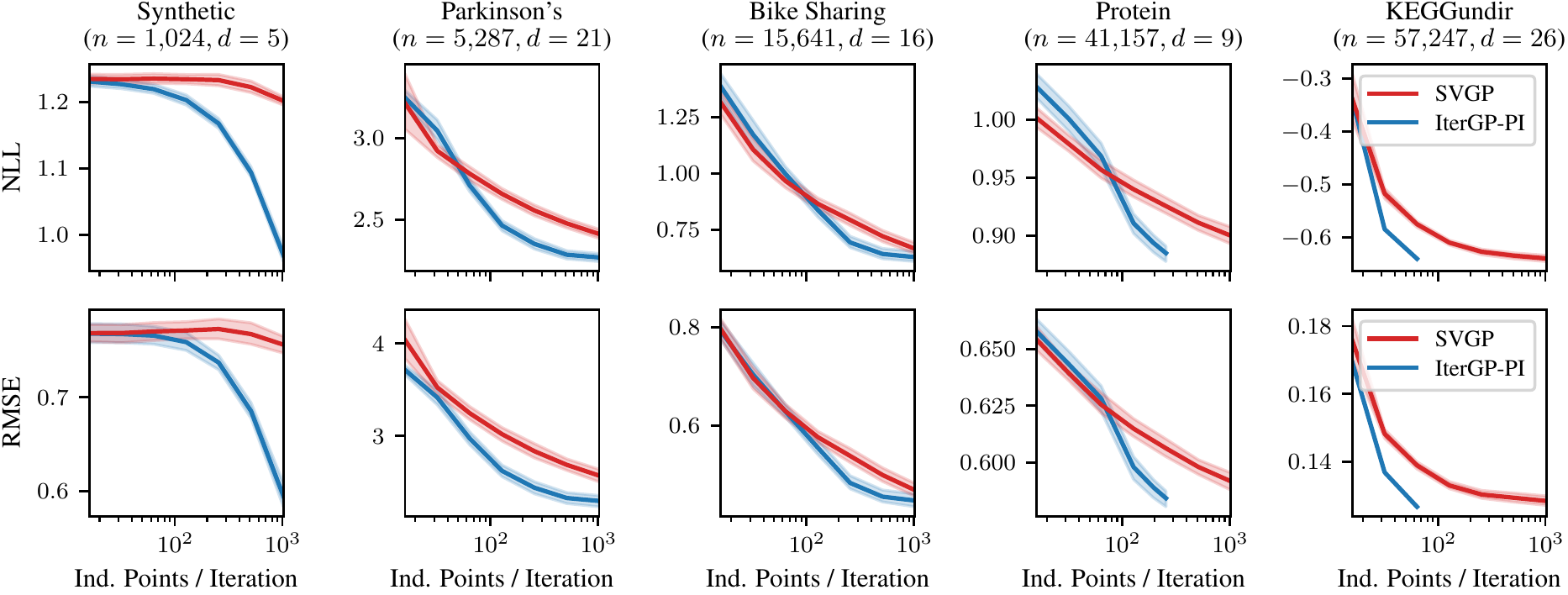}
		\caption{RBF kernel}
	\end{subfigure}\\[1em]
	\begin{subfigure}[b]{\textwidth}
		\centering
		\includegraphics[width=\textwidth]{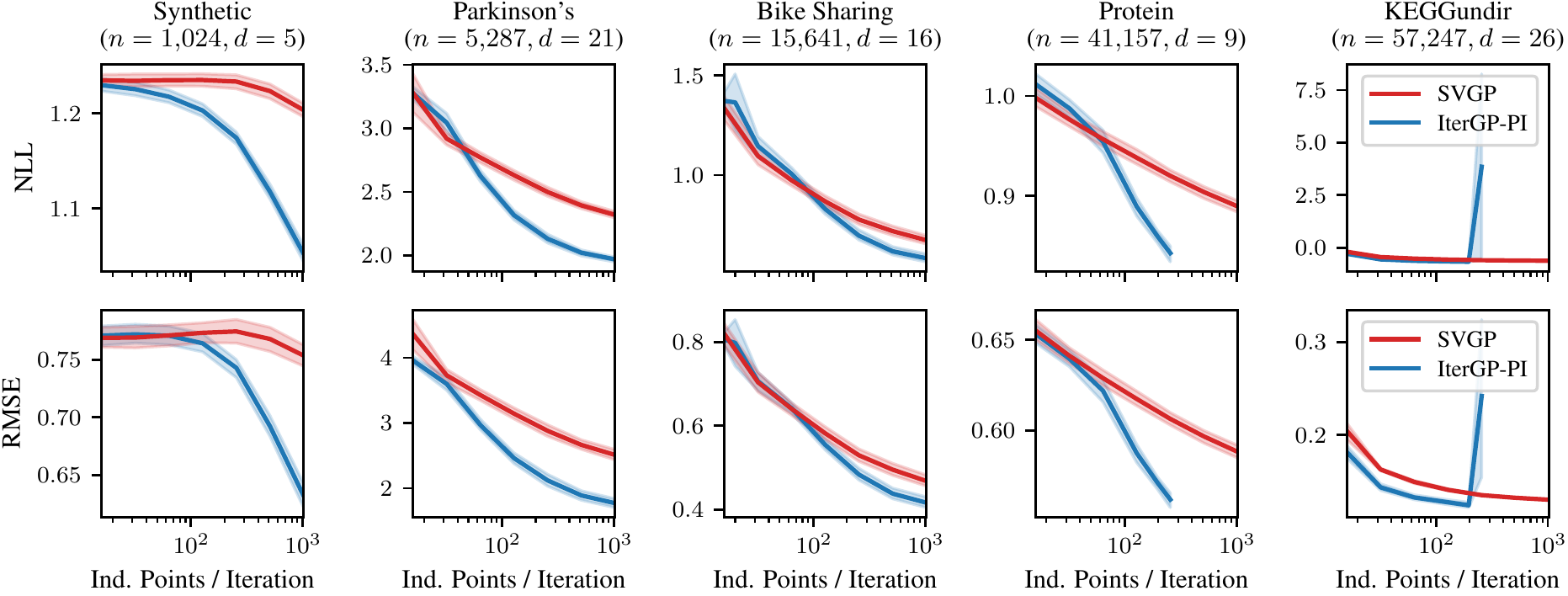}
		\caption{Mat\'ern(\(\frac{3}{2}\)) kernel}
	\end{subfigure}
	\caption{\emph{Generalization of SVGP and its closest IterGP analog.} GP regression using an RBF and Mat\'ern(\(\frac{3}{2}\)) kernel on UCI datasets. The plot shows the
		average generalization error in terms of NLL and RMSE for an increasing number of identical
		inducing points. After a small number of inducing points relative to the size of the training
		data, IterGP has significantly lower generalization error than SVGP. For the ``KEGGundir'' dataset after \(\approx 128\) iterations we observe numerical instability in some runs when computing the combined posterior of IterGP using a Mat\'ern(\(\frac{3}{2}\)) kernel.}
\end{figure}

\stopcontents[supplementary]

\end{document}